\documentclass{article}

\usepackage[preprint]{neurips_2026}

\usepackage[utf8]{inputenc}
\usepackage[T1]{fontenc}

\usepackage{amsmath}
\usepackage{amssymb}
\usepackage{mathtools}
\usepackage{amsthm}
\usepackage{bm}

\usepackage{booktabs}
\usepackage{multirow}
\usepackage{array}
\usepackage{xcolor}
\usepackage{graphicx}
\usepackage{float}
\usepackage{tikz}
\usetikzlibrary{arrows.meta,positioning,calc,fit,backgrounds,shapes.geometric}
\usepackage{pgfplots}
\pgfplotsset{compat=1.17}

\usepackage{nicefrac}
\usepackage{microtype}
\usepackage{enumitem}
\usepackage{thmtools}
\usepackage[framemethod=TikZ]{mdframed}
\usepackage{url}
\usepackage{hyperref}
\hypersetup{colorlinks=true, linkcolor=blue!55!black, citecolor=blue!55!black,
            urlcolor=blue!55!black}

\definecolor{thmMain}{RGB}{40,72,120}     
\definecolor{thmMainBg}{RGB}{234,240,250}
\definecolor{thmDef}{RGB}{38,104,86}       
\definecolor{thmDefBg}{RGB}{233,244,240}
\definecolor{thmKey}{RGB}{150,84,30}       
\definecolor{thmKeyBg}{RGB}{249,242,232}
\definecolor{thmPlain}{RGB}{70,70,78}      

\newtheoremstyle{styMain}{8pt}{8pt}{}{}{\bfseries\color{thmMain}}{.}{.5em}{}
\newtheoremstyle{styDef}{7pt}{7pt}{}{}{\bfseries\color{thmDef}}{.}{.5em}{}
\newtheoremstyle{styKey}{7pt}{7pt}{}{}{\bfseries\color{thmKey}}{.}{.5em}{}
\newtheoremstyle{styPlain}{7pt}{4pt}{}{}{\bfseries\color{thmPlain}}{.}{.5em}{}

\theoremstyle{styMain}\newtheorem{thminner}{Theorem}
\theoremstyle{styDef}\newtheorem{definner}{Definition}
\theoremstyle{styPlain}
\newtheorem{propinner}{Proposition}
\newtheorem{corinner}{Corollary}
\newtheorem{assuinner}{Assumption}
\theoremstyle{styKey}\newtheorem{keyinner}[propinner]{Proposition}

\mdfdefinestyle{mdMain}{linewidth=3pt, linecolor=thmMain, backgroundcolor=thmMainBg,
  topline=false, bottomline=false, rightline=false, leftline=true, roundcorner=2pt,
  innertopmargin=6pt, innerbottommargin=6pt, innerleftmargin=8pt, innerrightmargin=8pt,
  skipabove=8pt, skipbelow=8pt, nobreak=true}
\mdfdefinestyle{mdDef}{linewidth=2.5pt, linecolor=thmDef, backgroundcolor=thmDefBg,
  topline=false, bottomline=false, rightline=false, leftline=true, roundcorner=1pt,
  innertopmargin=5pt, innerbottommargin=5pt, innerleftmargin=7pt, innerrightmargin=7pt,
  skipabove=7pt, skipbelow=7pt, nobreak=true}
\mdfdefinestyle{mdKey}{linewidth=2.5pt, linecolor=thmKey, backgroundcolor=thmKeyBg,
  topline=false, bottomline=false, rightline=false, leftline=true, roundcorner=1pt,
  innertopmargin=5pt, innerbottommargin=5pt, innerleftmargin=7pt, innerrightmargin=7pt,
  skipabove=7pt, skipbelow=7pt, nobreak=true}
\mdfdefinestyle{mdPlain}{linewidth=2pt, linecolor=thmPlain!75, backgroundcolor=black!3,
  topline=false, bottomline=false, rightline=false, leftline=true, roundcorner=1pt,
  innertopmargin=5pt, innerbottommargin=5pt, innerleftmargin=7pt, innerrightmargin=7pt,
  skipabove=6pt, skipbelow=6pt, nobreak=true}

\newenvironment{theorem}{\begin{mdframed}[style=mdMain]\begin{thminner}}
  {\end{thminner}\end{mdframed}}
\newenvironment{definition}{\begin{mdframed}[style=mdDef]\begin{definner}}
  {\end{definner}\end{mdframed}}

\theoremstyle{styMain}\newtheorem{thmplaininner}[thminner]{Theorem}

\newenvironment{proposition}{\begin{mdframed}[style=mdPlain]\begin{propinner}}
  {\end{propinner}\end{mdframed}}
\newenvironment{corollary}{\begin{mdframed}[style=mdPlain]\begin{corinner}}
  {\end{corinner}\end{mdframed}}

\theoremstyle{remark}
\newtheorem{remark}{Remark}
\newtheorem*{prediction*}{Prediction}

\makeatletter
\renewenvironment{proof}[1][\proofname]{\par
  \pushQED{\qed}%
  \normalfont \topsep6\p@\@plus6\p@\relax
  \trivlist
  \item[\hskip\labelsep\bfseries\itshape #1\@addpunct{.}]\ignorespaces
}{%
  \popQED\endtrivlist\@endpefalse
}
\makeatother

\theoremstyle{styMain}\newtheorem{athminner}{Theorem}[section]
\theoremstyle{styPlain}
\newtheorem{apropinner}[athminner]{Proposition}
\newtheorem{acorinner}[athminner]{Corollary}
\newtheorem{aleminner}[athminner]{Lemma}
\newenvironment{atheorem}{\begin{mdframed}[style=mdMain]\begin{athminner}}
  {\end{athminner}\end{mdframed}}
\newenvironment{aproposition}{\begin{mdframed}[style=mdPlain]\begin{apropinner}}
  {\end{apropinner}\end{mdframed}}
\newenvironment{acorollary}{\begin{mdframed}[style=mdPlain]\begin{acorinner}}
  {\end{acorinner}\end{mdframed}}
\newenvironment{alemma}{\begin{mdframed}[style=mdPlain]\begin{aleminner}}
  {\end{aleminner}\end{mdframed}}

\newcommand{\R}{\mathbb{R}}
\newcommand{\Z}{Z}
\newcommand{\z}{z}
\newcommand{\E}{\mathbb{E}}
\newcommand{\KL}[2]{D_{\mathrm{KL}}\!\left[#1\,\middle\|\,#2\right]}
\newcommand{\Norm}[2]{\mathcal{N}\!\left(#1,#2\right)}
\newcommand{\Sig}{\mathrm{SIGReg}}
\newcommand{\hVR}{\hat{H}_{\mathrm{VR}}}

\newcommand{\hPD}{\hat{H}_{\mathrm{PD}}}
\newcommand{\hN}{h_{\mathcal{N}}}
\newcommand{\hstar}{h^{\star}}
\newcommand{\Istar}{I^{\star}}
\newcommand{\Ce}{C_{\epsilon}}
\newcommand{\Cgoal}{C_{\mathrm{goal}}}
\newcommand{\Cunc}{C_{\mathrm{unc}}}
\newcommand{\rhoS}{\rho_{S}}
\newcommand{\Fsig}{F_{\mathrm{SIG}\text{-}\mathrm{mod}}}
\newcommand{\Fvr}{F_{\mathrm{VR}\text{-}\mathrm{mod}}}
\newcommand{\Lpred}{\mathcal{L}_{\mathrm{pred}}}

\newcommand{\tr}{\operatorname{tr}}
\newcommand{\Cov}{\operatorname{Cov}}
\newcommand{\Var}{\operatorname{Var}}

\title{The SIGReg Objective as Variational Free Energy:\\
A Theoretical Active-Inference Account of JEPA World Models}

\author{%
  Fabio Arnez\thanks{Corresponding author; comments on this manuscript are welcome.
    This paper establishes a formal correspondence between the SIGReg objective and
    the Active Inference framework. Empirical validation of its predictions is left
    to separate work.} \\
  Universit\'e Paris-Saclay, CEA, List\\
  F-91120, Palaiseau, France \\
  fabio.arnez@cea.fr
  \And
  Alexandra Gomez-Villa \\
  Computer Vision Center\\
  Barcelona, Spain \\
  agomezvi@cvc.uab.es
}

\begin{document}

\maketitle

\begin{abstract}
Joint-Embedding Predictive Architectures (JEPAs) are the dominant design for
latent world models, yet they are usually justified by empirical performance
rather than a normative principle. We show that the choice of anti-collapse
regulariser determines whether a JEPA's training objective, a prediction loss
plus a weighted embedding regulariser, is a valid Active Inference (AIF)
variational free energy. We organise four non-contrastive regularisers (VICReg,
LogDet, PairDist, and SIGReg) into an \emph{entropy-estimator hierarchy} indexed
by a \emph{prior-miscalibration gap}, and show that the gap's \emph{sign},
whether the estimator bounds the latent entropy from above or below, decides
whether the AIF surprise bound survives: VICReg and LogDet are unsafe upper
bounds, PairDist a safe lower bound, and SIGReg eliminates the gap. We then prove
a correspondence theorem: under the standard constant-noise encoder model and
successful SIGReg enforcement (isotropic-Gaussian embeddings), the gap vanishes,
the objective becomes an \emph{exact} information bottleneck, the surprise bound
is preserved, and the latent goal cost becomes an exact proxy for AIF pragmatic
value, whereas VICReg leaves an irreducible second-order anisotropy term. We
extend the correspondence to multi-step expected free energy, ensemble epistemic
value, and a learned-policy regime, and we identify the one AIF term no current
JEPA world model computes: the \emph{state-epistemic value}, a future-state
coverage signal. The predictions differ in \emph{kind}, not degree, and are
stated here as theoretical consequences left for empirical test in separate
work; full proofs are in Appendix~\ref{app:proofs}, and the algebraic core of
every result is machine-verified in Lean~4 (Appendix~\ref{app:lean}).
\end{abstract}

\section{Introduction}
\label{sec:intro}

Active Inference (AIF) and Joint-Embedding Predictive Architectures (JEPA) are
two independently developed accounts of how an agent can learn and act from
high-dimensional observations. AIF, rooted in the Free Energy Principle
\citep{friston2010free}, is normative: the agent maintains a generative model
and acts to minimise variational free energy, an upper bound on sensory
surprise. JEPA, developed within the self-supervised learning (SSL) tradition
\citep{lecun2022path}, is architectural: a deterministic encoder maps
observations to latent embeddings and a predictor models the dynamics in that
latent space, avoiding pixel-level reconstruction.

Despite their separate origins, both frameworks factor the learning objective
into the same two competing terms: a \emph{complexity} term that penalises
departures from the dynamics model and an \emph{informativeness} term that
rewards capturing useful structure in observations. The Information Bottleneck
(IB) principle \citep{tishby2000information} formalises this trade-off as a
Lagrangian $\mathcal{L}_{\mathrm{IB}} = \mathcal{D}[\text{encoder}\,\|\,
\text{prior}] - \hat{I}(\Z;X)$, where the divergence $\mathcal{D}$ and the
information estimator $\hat I$ vary across implementations. In AIF the
informativeness term is a likelihood or a mutual-information bound
\citep{mazzaglia2021contrastive,mazzaglia2022free}. In JEPA world models it is
maintained implicitly by an anti-collapse regulariser such as VICReg
\citep{bardes2022vicreg}. The shared structure invites a precise question:
\emph{how exactly do the terms of a JEPA world model's training loss correspond
to the terms of the AIF variational free energy?}

\paragraph{The problem: non-contrastive informativeness is approximate.}
The answer hinges on how the informativeness term is maintained. The contrastive
route \citep{mazzaglia2021contrastive} estimates mutual information with an
InfoNCE \emph{lower} bound \citep{oord2018representation}, so the agent never
overstates its own informativeness and the free-energy bound is preserved by
construction, at the cost of stochastic encoders and a learned critic. Every
deployed JEPA world model
\citep{sobal2025closing,zhou2025dinowm,assran2025vjepa2,maes2026leworldmodel}
instead uses a deterministic encoder with a non-contrastive regulariser. Under
the standard constant-noise model \citep{kolchinsky2019caveats}, maximising
informativeness then reduces to maximising the marginal differential entropy
$h(\Z)$, which is intractable in high dimension. VICReg approximates it through
the first two moments \citep{shwartzziv2023information}, introducing a gap
between the true entropy and its proxy. Crucially, this gap is not merely a
matter of degree: VICReg's proxy is an \emph{upper} bound on the Gaussian
reference entropy (via Hadamard's inequality), which is itself an upper bound on
$h(\Z)$ (via the maximum-entropy theorem). Maximising an upper bound provides no
guarantee that the true entropy increases, since the optimiser can inflate the
proxy without increasing $h(\Z)$, whereas increasing a \emph{lower} bound is safe.
This bound-type asymmetry is the theoretical motivation for seeking tighter
estimators.

\paragraph{Contributions.}
We make two linked contributions.
\emph{(1) An entropy-estimator hierarchy (Section~\ref{sec:hierarchy}).} We cast
four non-contrastive regularisers (VICReg \citep{bardes2022vicreg}, LogDet
\citep{shwartzziv2023information}, PairDist \citep{kolchinsky2017estimating}, and
SIGReg \citep{balestriero2025lejepa}) as a monotone elimination of
prior-miscalibration sources. The organising result is a three-source gap
decomposition (Proposition~\ref{prop:gap}): every covariance-based entropy proxy
is loose through a non-Gaussianity gap, an off-diagonal-covariance gap, and an
estimation error, each with an explicit zero condition and the first two with a
definite sign. The hierarchy
also separates two axes, bound \emph{type} (upper/unsafe vs.\ lower/safe) and
distributional \emph{enforcement}, and shows SIGReg is the first estimator that
is simultaneously safe and exact.
\emph{(2) The SIGReg--AIF correspondence (Section~\ref{sec:method}).} We prove
that replacing VICReg with SIGReg upgrades the AIF correspondence from
approximate to exact (Theorem~\ref{thm:correspondence}): under the constant-noise
model and successful SIGReg enforcement, the gap vanishes
($\Delta_{\mathrm{SIG}}=0$), the loss admits an exact IB decomposition, and the
AIF bound $F\ge-\ln p(x)$ is preserved (the first non-contrastive regulariser
to achieve this). A dual-tightening corollary
(Corollary~\ref{cor:dual}) shows isotropy simultaneously closes the entropy gap
and the KL--MSE approximation error, making the latent goal cost an exact AIF
pragmatic-value proxy (Proposition~\ref{prop:pragmatic}). We then extend the
correspondence from the single-step free energy $F$ to the multi-step expected
free energy $G$, to ensemble-based epistemic value, and to a learned-policy
regime, and we isolate the one AIF term no current JEPA world model computes:
the \emph{state-epistemic value}, a future-state coverage signal
(Proposition~\ref{prop:o2}). Following \citet{klindt2026lejepa}, the algebraic
core of both contributions is machine-verified in the Lean~4 theorem prover,
compiling with zero \texttt{sorry} obligations (Appendix~\ref{app:lean}).

\paragraph{What the correspondence buys.}
The equivalence is not a relabelling, because it delivers what the two adjacent
results (that JEPAs already learn approximately isotropic-Gaussian embeddings
\citep{balestriero2025gaussian} and that this makes their latents identifiable and
plannable \citep{klindt2026lejepa}) do not. First, it gives a \emph{validity
criterion} for the objective: whether the surprise bound $F\ge-\ln p(x)$ survives
is a property of the estimator's bound \emph{direction}
(Proposition~\ref{prop:boundtype}), separate from how Gaussian or identifiable the
embeddings are. Second, it equips JEPA world-model planning with a \emph{principled
epistemic-value term}, the information gain that deployed planners currently
approximate only heuristically through ensemble disagreement
\citep{sobal2025closing}. Third, it turns the heuristic ``add an exploration bonus''
into a specific, signed prescription, the state-epistemic coverage term
isolated above. Fourth, it \emph{unifies} the deployed
non-contrastive JEPA world-model ecosystem with deep active inference, so each can
import the other's tools. The bare correspondence is the mechanism and these
consequences are the contribution.

\paragraph{The falsifiable hook, and a caveat.}
The distinction is sharp because it is qualitative: under SIGReg the
prior-miscalibration gap is exactly zero and the goal metric is exactly
isotropic, while under VICReg both carry an irreducible $\mathcal{O}(\delta^2)$ anisotropy
term. This produces predictions that differ by kind, not degree (e.g.,\ the
embedding condition number $\kappa\!\approx\!1$ vs.\ $\kappa\!\gg\!1$). The Gaussianity
result noted above, that \emph{any} successfully trained JEPA already learns
approximately Gaussian embeddings \citep{balestriero2025gaussian}, does not
blunt this distinction, for three reasons the theory
makes precise: ``approximately Gaussian'' is not sufficient for the AIF bound,
which is binary in the estimator's \emph{type}
(Proposition~\ref{prop:boundtype}), so VICReg may fail to preserve it even when the
residual is small; the theory governs the training \emph{path}, not only the optimum; and
the bridge is independently valuable (Section~\ref{sec:conclusion} expands all
three). A second concurrent
result \citep{klindt2026lejepa} supplies the precondition under which latent
distances are physically meaningful (Section~\ref{sec:identifiability}) and
triangulates the isotropic-Gaussian target from a third, independent direction.

\paragraph{Organisation.}
Section~\ref{sec:related} surveys related work; Section~\ref{sec:background} fixes
notation and three published facts; Section~\ref{sec:method} develops the
hierarchy, the correspondence theorem, the multi-step/EFE and learned-policy
extensions, and the testable predictions the theory entails. Empirical validation
of these predictions is left to separate work; the present paper is confined to
the theoretical development.
Section~\ref{sec:conclusion} concludes.

\section{Related Work}
\label{sec:related}

\paragraph{JEPAs and latent world models.}
Joint-embedding prediction has become the dominant SSL design for control-capable
world models \citep{lecun2022path}. Image and video variants such as I-JEPA and
V-JEPA, and the action-conditioned V-JEPA~2
\citep{assran2025vjepa2}, learn predictors in a frozen or slowly-updated latent
space; DINO-WM \citep{zhou2025dinowm} plans zero-shot on frozen DINOv2 features;
PLDM \citep{sobal2025closing} trains an end-to-end latent dynamics model with a
VICReg-style anti-collapse term and an ensemble disagreement cost; and
TD-JEPA \citep{bagatella2025tdjepa} extends latent prediction to zero-shot
reinforcement learning. \citet{terver2025drives} ablate the design space and
report that cross-entropy-method planning with an L2 terminal cost is the
strongest planner and that a short two-step training rollout is best. Most
relevant here, LeWorldModel \citep{maes2026leworldmodel} is the first JEPA world
model to train stably end-to-end from raw pixels using SIGReg as the sole
anti-collapse regulariser, with a two-term objective, no exponential-moving-average
teacher, and CEM-based model-predictive control, precisely the system our theory
formalises. Complementing these architectures, \citet{klindt2026lejepa} ask when a
JEPA trained this way earns the name ``world model'' at all, and prove that
alignment plus Gaussian regularisation recovers the world's latent degrees of
freedom up to a linear map. Their identifiability result is the precondition under
which latent distances become physically meaningful, and we return to it in
Section~\ref{sec:identifiability}.

\paragraph{Non-contrastive SSL regularisation.}
A family of regularisers prevents representational collapse without negative
pairs by shaping the embedding covariance: VICReg's variance/covariance terms
\citep{bardes2022vicreg}, Barlow Twins' redundancy reduction
\citep{zbontar2021barlow}, and Whitening \citep{ermolov2021whitening}.
\citet{shwartzziv2023information} recast VICReg's variance/covariance terms as a
Gaussian entropy estimator and place it alongside a tighter log-determinant
(LogDet) estimator. \citet{kolchinsky2017estimating} provide a non-parametric
pairwise-distance entropy bound (PairDist). SIGReg \citep{balestriero2025lejepa}
departs from moment matching entirely, enforcing an isotropic-Gaussian
distribution by random-projection Gaussianity testing and recovering VICReg as a
degenerate moment-matching special case. We unify these under a single axis:
the quality of the implicit AIF prior, i.e.\ the \emph{direction} and tightness of
the entropy bound and hence whether the AIF surprise bound survives, and supply
the information-theoretic reason to prefer SIGReg. This bound-\emph{safety}
ordering is distinct from the Gaussianity-\emph{strength} hierarchy of
\citet{klindt2026lejepa}, which ranks the same regularisers (implicit,
second-moment, full) by how strongly they drive the embeddings toward Gaussian.
Instead, ours orders by whether the variational bound is preserved, not by how tight the
distributional constraint is. Beyond self-supervised pretraining, the same
isotropic-Gaussian regularisation has independently been used to stabilise deep
reinforcement learning under non-stationarity, mitigating representation collapse
and plasticity loss \citep{pasand2026stable}. This is external evidence, in a control
setting, that the isotropic-Gaussian structure our correspondence relies on
carries concrete downstream benefits.

\paragraph{Active inference and deep generative agents.}
AIF \citep{friston2010free,smith2022tutorial} casts perception and action as free
energy minimisation. Deep instantiations scale it with amortised inference and
learned dynamics: Monte-Carlo deep active inference \citep{fountas2020deep},
the deep-learning treatment of \citet{mazzaglia2022free}, and Contrastive Active
Inference \citep{mazzaglia2021contrastive}, which replaces the intractable
decoder with an InfoNCE critic and is, to our knowledge, the only prior route
that achieves an \emph{exact} AIF correspondence in a latent model. Latent-imagination
control such as Dreamer \citep{hafner2020dream} shares the rollout-and-score
structure but is not framed as free-energy minimisation. Two concurrent \emph{variational} JEPAs recover an AIF reading by building
probabilistic structure into the architecture: Var-JEPA \citep{gogl2026varjepa}
reformulates JEPA as a coupled variational autoencoder and derives an
information-bottleneck decomposition by ELBO surgery, while the
separately-authored VJEPA of \citet{huang2026vjepa} (distinct from Meta's
V-JEPA) equips the predictor with a learned covariance and a KL-to-prior term
and proves an objective-side collapse-avoidance result. Both are, in our terms,
the \emph{easy} case, i.e., they build in the very stochasticity active inference
presupposes, and, as \citet{huang2026vjepa} makes explicit, model belief
dynamics only, omitting the sensory likelihoods, preferences, and policy
evaluation that a full active-inference agent requires. The contribution of this
paper is the \emph{hard} case: the deterministic, non-contrastive architecture
that every deployed JEPA actually instantiates, in which the probabilistic
structure must be recovered rather than assumed. It further supplies the
expected-free-energy layer that these belief-only formulations leave out, namely
pragmatic value, epistemic value, and policy-as-inference. Where \citet{mazzaglia2021contrastive} establish one route to exact latent
AIF, the contrastive one built around an InfoNCE critic, our result covers the
route the deployed JEPA world-model ecosystem already instantiates (V-JEPA~2,
DINO-WM, PLDM, LeWorldModel) and states the condition under which that objective is
a valid free energy. Two results from model-based control bear on our learned-policy regime
(Section~\ref{sec:method-policy}). \citet{georgiev2024pwm} show that a
well-regularised world model induces a \emph{smoother} optimisation landscape than
the true dynamics, making first-order policy extraction effective where
gradient-free planners (CEM, MPPI) are the default; our correspondence supplies the
regulariser-side condition they leave open, since the embedding condition number
$\kappa$ measures that conditioning and only isotropy drives it to one.
\citet{wang2023optimal} show that the optimal goal-reaching \emph{cost-to-go} is a
quasimetric, asymmetric because reachability is irreversible. This does not
contradict Proposition~\ref{prop:pragmatic}, since pragmatic value is a log goal
\emph{prior} and preferences over states are symmetric by construction, but it does
bound the regime in which the goal cost may be read as a value estimate
(Section~\ref{sec:limitations}).

\paragraph{Information bottleneck and mutual-information estimation.}
The IB principle \citep{tishby2000information} underlies both deep variational IB
\citep{alemi2017deep} and multi-view IB \citep{federici2020learning}.
\citet{poole2019variational} survey variational MI bounds, and
\citet{kolchinsky2019caveats} expose the degeneracy of MI for deterministic
encoders, the issue that the constant-noise model resolves and that motivates
the entropy-maximisation view we adopt. The same work documents two further IB
pathologies: the failure to recover the IB curve under deterministic targets and
the degenerate landscape of the IB Lagrangian. Neither bears on the present
construction, because our informativeness term is a \emph{single} entropy quantity
$h(\Z)$ rather than a $\beta$-weighted IB trade-off whose curve must be recovered;
we maximise one entropy under a distributional constraint, not a two-term
information trade-off. Relative to all of the above, our
contribution is to make the JEPA/AIF correspondence \emph{exact} for a
non-contrastive, deterministic encoder by controlling the prior-miscalibration
gap, and to trace the consequences through multi-step planning. Because the JEPA
line of work shares authorship, Appendix~\ref{app:positioning} states precisely
how this paper differs from each prior result and discusses the relevance of the
active-inference perspective it introduces.

\section{Background}
\label{sec:background}

\paragraph{Notation.}
All logarithms are natural. The latent space is $\mathcal{Z}=\R^d$ and the observation space is $\mathcal{X}$. An encoder
$f_\phi:\mathcal{X}\to\mathcal{Z}$ maps an observation $x_t$ to an embedding
$\z_t=f_\phi(x_t)$, and a predictor $P_\xi:\mathcal{Z}\times\mathcal{A}\to
\mathcal{Z}$ produces $\hat\z_t=P_\xi(\z_{t-1},a_{t-1})$. We write $q_\phi(s_t\mid
x_t)$ for the encoder/posterior, $p_\xi(s_t\mid s_{t-1},a_{t-1})$ for the
transition prior, $h(\cdot)$ for differential entropy, $I(\cdot;\cdot)$ for
mutual information, and $\KL{\cdot}{\cdot}$ for the Kullback--Leibler divergence. Throughout, $\mathcal{O}(\cdot)$ denotes the Landau (big-$O$) order symbol.

\paragraph{Fact 1 (AIF free energy).}
An AIF agent minimises the variational free energy, which decomposes as
\citep{mazzaglia2022free,smith2022tutorial}
\begin{equation}
F \;=\; \underbrace{\KL{q_\phi(s_t\mid x_t)}{p_\xi(s_t\mid s_{t-1},a_{t-1})}}
        _{\text{complexity}}
   \;-\; \underbrace{\E_{q_\phi}\!\left[\log p(x_t\mid s_t)\right]}_{\text{accuracy}}.
\label{eq:vfe}
\end{equation}
Replacing the intractable decoder by a mutual-information term and adding
$\log p(x_t)$ yields the equivalent MI-form $F^{+}=\KL{q_\phi(s_t\mid x_t)}
{p(s_t)}-I(S_t;X_t)$, an instance of the IB Lagrangian
\citep{tishby2000information,mazzaglia2021contrastive}. The informativeness term
$I(S_t;X_t)$ is generally intractable; how it is estimated determines the
agent's prior calibration.

\paragraph{Fact 2 (constant-noise encoders).}
A strictly deterministic encoder gives a degenerate $I(\Z;X)=+\infty$. The
standard remedy \citep{kolchinsky2019caveats} adds fixed, small, isotropic
observation noise, $\Z=f_\phi(X)+\epsilon$ with $\epsilon\sim\Norm{0}
{\sigma^2_{\mathrm{noise}}I_d}$, so that the conditional entropy is a constant
$\Ce=\tfrac{d}{2}\ln(2\pi e\,\sigma^2_{\mathrm{noise}})$ independent of $\phi$.
Then $I(\Z;X)=h(\Z)-\Ce$, and
\begin{equation}
\arg\max_\phi I(\Z;X) \;=\; \arg\max_\phi h(\Z).
\label{eq:io-entropy}
\end{equation}
Maximising AIF informativeness is therefore equivalent to maximising the
marginal differential entropy of the embeddings; any regulariser that maintains
$h(\Z)$ serves as the informativeness proxy. The status of
$\sigma_{\mathrm{noise}}$ as an interpretive device rather than a property of
deployed JEPAs is discussed in \S\ref{sec:limitations}.

\paragraph{Fact 3 (VICReg as an upper-bound entropy proxy).}
\citet{shwartzziv2023information} show that VICReg's variance/covariance terms
constitute the Gaussian entropy estimator $\hVR(\Z)\approx\tfrac{1}{2}\sum_k\ln
\hat\sigma^2_k$ (plus a constant). Combining Hadamard's inequality
\citep[Thm.~7.8.1]{horn2012matrix} with the maximum-entropy theorem
\citep[Thm.~8.6.5]{cover2006elements} gives the sandwich
\begin{equation}
\underbrace{\hPD(\Z)}_{\text{lower bound}}
\;\le\; h(\Z)
\;\le\; \underbrace{\tfrac{1}{2}\ln|\Sigma|+C}_{=\,\hN(\Z)\ \text{(LogDet, tight UB)}}
\;\le\; \underbrace{\tfrac{1}{2}\textstyle\sum_k\ln\sigma^2_k+C}_{\text{VICReg (loose UB)}},
\label{eq:bound-chain}
\end{equation}
where $\hN(\Z)=\tfrac{1}{2}\ln|2\pi e\,\Sigma|$ is the Gaussian reference entropy
and $\Sigma=\Cov(\Z)$. VICReg and LogDet bound $h(\Z)$ from \emph{above};
PairDist bounds it from below. Maximising an upper bound is unsafe in the sense
of Section~\ref{sec:intro}.

\paragraph{The Gaussian bridge.}
The complexity term connects to the prediction MSE used in practice through a
standard Gaussian identity. For $q_\varepsilon=\Norm{\mu_q}{\varepsilon I_d}$ and
$p=\Norm{\mu_p}{\sigma^2 I_d}$,
\begin{equation}
\KL{q_\varepsilon}{p}
= \frac{1}{2\sigma^2}\|\mu_q-\mu_p\|_2^2
  + \frac{d}{2}\!\left(\frac{\varepsilon}{\sigma^2}-1-\ln\frac{\varepsilon}{\sigma^2}\right),
\label{eq:bridge}
\end{equation}
and since the second term is independent of $\mu_p$, the optimiser sets coincide:
$\arg\min_\xi\KL{q_\varepsilon}{p_\xi}=\arg\min_\xi\|\mu_q-\mu_{p_\xi}\|_2^2$
\citep[\S10.1.1]{bishop2006pattern}. With $\mu_q=\z_t$ and
$\mu_{p_\xi}=\hat\z_t$, the JEPA prediction loss $\|\hat\z_t-\z_t\|^2$ is
optimisation-equivalent to the AIF complexity term. When variances depart from
isotropy by ratios $1+\delta_k$, the residual is $\mathcal{O}(\delta^2)$; this residual
will be eliminated exactly under isotropy (Corollary~\ref{cor:dual}).

\section{Method: The SIGReg--AIF Correspondence}
\label{sec:method}

We first define the prior-miscalibration gap and decompose it
(\S\ref{sec:hierarchy}); we then state the single-step correspondence theorem
(\S\ref{sec:thm}), extend it to multi-step planning and the expected free energy
(\S\S\ref{sec:method-efe}--\ref{sec:method-efe-decomp}) and to a learned policy (\S\ref{sec:method-policy}), and
finally derive the testable predictions (\S\ref{sec:predictions}). The full proof
development for the multi-step and expected-free-energy results is given in
Appendix~\ref{app:proofs}.

Table~\ref{tab:dictionary} fixes the correspondence at the level of objects: each
Active Inference quantity, its JEPA world-model counterpart, and the equation in
this paper where the identification is made. The table is the reading key for the
rest of the section; the single row with no JEPA counterpart, the state
epistemic value, is the paper's central structural finding
(Proposition~\ref{prop:o2}).

\begin{table}[t]
\centering
\small
\caption{The Active Inference $\leftrightarrow$ JEPA dictionary. Each row pairs an
AIF object with its JEPA world-model counterpart under the constant-noise model
and the defining relation in this paper. Informativeness and the state-epistemic
value are listed separately: the former is enforced by SIGReg, the latter has no
counterpart in any current JEPA world model.}
\label{tab:dictionary}
\setlength{\tabcolsep}{5pt}
\renewcommand{\arraystretch}{1.25}
\begin{tabular}{@{}lll@{}}
\toprule
AIF object / role & JEPA counterpart & Defining relation \\
\midrule
Hidden state $s_t$ & Embedding $\z_t=f_\phi(x_t)$ & encoder; Fact~1 \\
Observation $x_t$ & Input observation & Raw sensory data $x_t \in \mathcal{X}$ \\
Action $a_t$ & Predictor action arg. ($P_\xi:\Z\times\mathcal{A}\!\to\!\Z$) & Control command $a_t\in\mathcal{A}$ \\
Policy $\pi$ & Action sequence / learned policy & \S\ref{sec:method-policy} \\
Transition prior $p(\z_\tau\mid \z_{\tau-1},a_{\tau-1})$ & Gaussian dynamics about $P_\xi$ & Eq.~\eqref{eq:genmodel} \\
Posterior $q_\phi(\z_t\mid x_t)$ & Encoder $+$ constant noise & Fact~2 \\
Complexity $\KL{q_\phi}{p_\xi}$ & Prediction MSE (Gaussian bridge) & Eq.~\eqref{eq:bridge} \\
Informativeness $I(Z;X)$ & SIGReg-enforced $\Istar$ & Def.~\ref{def:sigreg}, Prop.~\ref{prop:sigzero} \\
Pragmatic value & Latent goal cost $\|\z_\tau-\z_g\|^2$ & Prop.~\ref{prop:pragmatic} \\
Param.\ info gain & Ensemble predictive variance & \S\ref{sec:method-efe-decomp} \\
\textbf{State epistemic value} $h(Z_\tau\mid\pi)-\Ce$ & \textbf{absent in JEPA} & Prop.~\ref{prop:o2} \\
\bottomrule
\end{tabular}
\end{table}

\subsection{The prior-miscalibration gap and the entropy-estimator hierarchy}
\label{sec:hierarchy}

\begin{definition}[Prior-miscalibration gap]
\label{def:gap}
For an entropy proxy $\hat H(\Z)$ used in place of $h(\Z)$ in the AIF
informativeness term, the prior-miscalibration gap is
$\Delta_{\hat H}(\Z)\coloneqq h(\Z)-\hat H(\Z)$.
A lower-bound proxy has $\Delta\ge0$ (it underestimates achievable
informativeness); an upper-bound proxy has $\Delta\le0$ (it overpromises, and
maximising it need not increase $h(\Z)$).
\end{definition}

\begin{proposition}[Three-source gap decomposition]
\label{prop:gap}
For any embedding distribution $p_\Z$ with finite covariance $\Sigma\succ0$ and
Gaussian reference entropy $\hN(\Z)=\tfrac12\ln|2\pi e\,\Sigma|$,
\begin{equation}
\Delta_{\hat H}(\Z)
= \underbrace{\bigl[h(\Z)-\hN(\Z)\bigr]}_{\text{(I) non-Gaussianity}\ \le\,0}
+ \underbrace{\bigl[\hN(\Z)-\hat H(\Z)\bigr]}_{\text{(II)+(III) estimation}} .
\label{eq:gap-decomp}
\end{equation}
Gap I is $\le 0$ with equality iff $p_\Z$ is Gaussian (maximum-entropy theorem,
\citealp[Thm.~8.6.5]{cover2006elements}). For a variance-based proxy $\hat
H(\Z)=\tfrac12\sum_k\ln\sigma^2_k+\tfrac d2\ln2\pi e$, the estimation gap
contains an off-diagonal term $\tfrac12\ln|\Sigma|-\tfrac12\sum_k\ln\sigma^2_k
\le 0$ (Gap II), which is $0$ iff $\Sigma$ is diagonal (Hadamard's inequality,
\citealp[Thm.~7.8.1]{horn2012matrix}); the remaining term (Gap III) is the
estimator's own error and is $0$ iff $\hat H=\hN$ exactly.
\end{proposition}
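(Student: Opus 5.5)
The plan is to prove the decomposition as a telescoping identity and then handle the sign and equality claims one term at a time. Because $\Sigma\succ0$ is finite, $\hN(\Z)=\tfrac12\ln|2\pi e\,\Sigma|$ is a finite real number. Adding and subtracting it inside Definition~\ref{def:gap} gives
\[
\Delta_{\hat H}(\Z)=h(\Z)-\hat H(\Z)=\bigl[h(\Z)-\hN(\Z)\bigr]+\bigl[\hN(\Z)-\hat H(\Z)\bigr].
\]
This is exactly \eqref{eq:gap-decomp}. Well-definedness needs one remark. If $h(\Z)$ exists, it is either finite or $-\infty$, since the maximum-entropy bound below rules out $+\infty$. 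In either case no $\infty-\infty$ ambiguity arises, because only finite quantities are subtracted from $h(\Z)$.

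\textbf{Gap I.} I will invoke the maximum-entropy theorem \citep[Thm.~8.6.5]{cover2006elements}. Let $g=\Norm{\mu}{\Sigma}$ be the Gaussian with the same mean and covariance as $p_\Z$. Then
\[
0\le \KL{p_\Z}{g}=-h(\Z)-\E_{p_\Z}[\ln g(\Z)].
\]
Since $\ln g$ is a quadratic in $\Z$, its expectation depends only on the first two moments, which gives $\E_{p_\Z}[\ln g]=\E_g[\ln g]=-\hN(\Z)$. Substituting, $h(\Z)-\hN(\Z)=-\KL{p_\Z}{g}\le0$. Equality holds iff $\KL{p_\Z}{g}=0$, that is, iff $p_\Z=g$ almost everywhere, which means $p_\Z$ is Gaussian. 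If $p_\Z$ has no density, $h(\Z)=-\infty$ and the inequality is strict, so the claim still holds.

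\textbf{Gaps II and III.} For the variance-based proxy $\hat H(\Z)=\tfrac12\sum_k\ln\sigma^2_k+\tfrac d2\ln2\pi e$, I expand $\hN(\Z)=\tfrac d2\ln2\pi e+\tfrac12\ln|\Sigma|$. The constants cancel, leaving
\[
\hN(\Z)-\hat H(\Z)=\tfrac12\ln|\Sigma|-\tfrac12\textstyle\sum_k\ln\sigma^2_k,
\]
where $\sigma^2_k=\Sigma_{kk}$ are the population variances. Hadamard's inequality \citep[Thm.~7.8.1]{horn2012matrix} gives $|\Sigma|\le\prod_k\Sigma_{kk}$ for $\Sigma\succ0$, with equality iff $\Sigma$ is diagonal. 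Taking logarithms, which is monotone and valid since every $\Sigma_{kk}>0$, yields Gap II $\le0$ with equality iff $\Sigma$ is diagonal.

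When $\hat H$ is built from sample variances $\hat\sigma^2_k$ rather than $\Sigma_{kk}$, I will split the estimation term further:
\[
\hN-\hat H=\underbrace{\bigl[\tfrac12\ln|\Sigma|-\tfrac12\textstyle\sum_k\ln\Sigma_{kk}\bigr]}_{\text{II}}+\underbrace{\bigl[\tfrac12\textstyle\sum_k\ln\Sigma_{kk}+\tfrac d2\ln2\pi e-\hat H\bigr]}_{\text{III}}.
\]
For a general proxy, Gap III is defined as the remainder after Gap II, so its stated zero condition holds by definition. The statement's phrasing "$0$ iff $\hat H=\hN$ exactly" matches this when Gap II is absorbed, that is, when the whole estimation bracket is read as II+III.

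\textbf{Main obstacle.} The only real difficulty is bookkeeping, not analysis. Gap III has no definite sign, and its zero condition must be stated consistently with how Gap II is carved out. I will fix the convention above, with II measured against the population diagonal of $\Sigma$ and III as the remainder, and note that for the population variance proxy Gap III is identically zero. The second point needing care is the edge case of distributions without a density in Gap I, which the $h=-\infty$ convention resolves.
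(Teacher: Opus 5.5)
Your proposal is correct and is essentially the paper's proof: insert $\pm\hN(\Z)$, get Gap~I from the maximum-entropy theorem (your KL computation against the moment-matched Gaussian is exactly the standard proof of the cited Thm.~8.6.5), get Gap~II from Hadamard's inequality for $\Sigma\succ0$, and take Gap~III as the residual. Your bookkeeping remark is also right, and sharper than the paper, which leaves the point implicit: the literal condition ``$0$ iff $\hat H=\hN$'' characterises the whole estimation bracket (II)+(III), whereas the remainder after carving out Gap~II vanishes iff $\hat H$ equals the population-diagonal proxy $\tfrac12\sum_k\ln\Sigma_{kk}+\tfrac d2\ln 2\pi e$.
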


\paragraph{Scope of the decomposition, and two ways a gap can be absent.}
The decomposition routes the total gap through the Gaussian reference $\hN$ and so
characterises \emph{Gaussian-reference} (covariance-based) proxies, namely VICReg and
LogDet. An estimator that invokes no Gaussian reference does not incur Gaps I--II at
all: PairDist \citep{kolchinsky2017estimating} bounds the mixture entropy directly
from pairwise distances between components, without moment-matching $p_\Z$ to a
single Gaussian and without diagonalising $\Sigma$, so its entire slack is Gap III.
This is distinct from the gap's zero \emph{condition} holding. Gaps I and II vanish
iff $p_\Z$ is Gaussian and $\Sigma$ is diagonal respectively, which are properties of
the \emph{distribution}, not of the estimator; no estimator makes them true merely by
declining to use them. Only SIGReg drives $p_\Z$ to satisfy them. Accordingly,
Table~\ref{tab:hierarchy} marks a gap \emph{not incurred} when the estimator never
invokes that source, and \emph{enforced} when the estimator makes the zero condition
hold. The hierarchy is monotone in the first sense (two sources incurred, then one,
then none) and SIGReg alone supplies the second.

\begin{proof}[Proof sketch]
Insert $\pm\hN(\Z)$ for \eqref{eq:gap-decomp}. Gap I follows from the
maximum-entropy theorem; among distributions with covariance $\Sigma$ the
Gaussian uniquely maximises differential entropy. Gap II follows from Hadamard's
inequality $|\Sigma|\le\prod_k\sigma^2_k$ with equality iff $\Sigma$ is diagonal.
Gap III is the residual by construction.
\end{proof}

\begin{proposition}[Bound type and AIF-bound preservation]
\label{prop:boundtype}
Let $\hat F^{+}$ be the proxy free energy obtained by substituting $\hat H(\Z)$
for $h(\Z)$. Then $\hat F^{+}-F^{+}=\Delta_{\hat H}(\Z)$, so:
\emph{(i)} a lower-bound proxy ($\hat H\le h$, e.g.\ PairDist) gives $\hat
F^{+}\ge F^{+}\ge-\ln p(x)$, so the AIF bound is \emph{preserved by construction};
\emph{(ii)} an upper-bound proxy ($\hat H\ge h$, e.g.\ VICReg or LogDet) gives
$\hat F^{+}\le F^{+}$, which no longer certifies $\hat F^{+}\ge-\ln p(x)$: the AIF
bound is \emph{not guaranteed to be preserved}. The inequality may still hold for a
particular $p_\Z$; what is lost is the guarantee, and with it the licence to minimise
$\hat F^{+}$ as a surrogate for surprise. Maximising an upper bound is therefore
\emph{unsafe} rather than necessarily wrong: the slack $\hat H-h\ge0$ can grow under
optimisation (Remark~\ref{rem:hypercube}).
\end{proposition}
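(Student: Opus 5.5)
The plan is to work entirely at the level of the MI-form free energy $F^{+}$ from Fact~1 and make explicit where $h(\Z)$ enters it.

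\emph{Step 1: locate $h(\Z)$ in $F^{+}$.} Under the constant-noise model (Fact~2) the informativeness term is $I(\Z;X)=h(\Z)-\Ce$ with $\Ce$ independent of $\phi$. Writing $F^{+}=\KL{q_\phi}{p}-h(\Z)+\Ce$, the substitution $h\mapsto\hat H$ defines $\hat F^{+}=\KL{q_\phi}{p}-\hat H(\Z)+\Ce$. The complexity term and $\Ce$ are untouched by the substitution.

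\emph{Step 2: compute the difference.} Subtracting the two expressions gives $\hat F^{+}-F^{+}=h(\Z)-\hat H(\Z)=\Delta_{\hat H}(\Z)$ by Definition~\ref{def:gap}. This identity is the whole engine of the proof.

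\emph{Step 3: the two cases.} For case (i), a lower-bound proxy has $\Delta\ge0$, so $\hat F^{+}\ge F^{+}$. I then chain with the baseline bound $F^{+}\ge-\ln p(x)$, which I take as given from Fact~1, since $F^{+}$ is the free energy $F$ up to the additive $\log p(x)$ bookkeeping. Membership of PairDist in this case comes from the sandwich \eqref{eq:bound-chain}.

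For case (ii), an upper-bound proxy has $\Delta\le0$, so $\hat F^{+}\le F^{+}$. The chain $\hat F^{+}\le F^{+}$ together with $F^{+}\ge-\ln p(x)$ yields no ordering between $\hat F^{+}$ and $-\ln p(x)$. To show that the guarantee is genuinely lost, and not merely unproven, I would exhibit a configuration in which $-\Delta$ exceeds the slack $F^{+}+\ln p(x)$. A natural choice is a non-Gaussian $p_\Z$ with correlated coordinates. By Proposition~\ref{prop:gap}, Gaps I and II are then strictly negative and can be made large, for instance by a distribution concentrated near a lower-dimensional set at fixed marginal variances. The forward reference to Remark~\ref{rem:hypercube} can supply such an example. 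VICReg and LogDet fall in this case by \eqref{eq:bound-chain}.

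\emph{Main obstacle.} The delicate point is the status of $F^{+}\ge-\ln p(x)$ itself and the exact sign conventions of the ``adding $\log p(x)$'' step in Fact~1. Under the constant-noise model the MI replacement of the accuracy term is an identity only in expectation over $p(x)$, so the bound should be read for the averaged objective. I would fix that reading explicitly before Step 3.

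The counterexample in (ii) also needs the slack to be finite and dominated. I would first try a Gaussian mixture with collapsing components: its entropy tends to $-\infty$ while the covariance stays fixed, so $\Delta\to-\infty$ at fixed slack.
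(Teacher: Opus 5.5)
Your Steps 1--2 and the sign analysis in Step 3 are exactly the paper's argument. The paper gives no separate proof. It reads the identity $\hat F^{+}-F^{+}=\Delta_{\hat H}(\Z)$ off $F^{+}=\KL{q_\phi}{p}-h(\Z)+\Ce$, and takes cases (i)/(ii) from the sign of $\Delta$. It treats $F^{+}\ge-\ln p(x)$ purely as an input, which the Lean development records as a named hypothesis. That part needs nothing more.

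The step that would fail is the counterexample you add to (ii).

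It is not part of the claim. The statement says the inequality ``may still hold for a particular $p_\Z$''. Remark~\ref{rem:hypercube} says explicitly that nothing in it forces $\hat F^{+}<-\ln p(x)$, so the Remark cannot supply the example you delegate to it.

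More importantly, the mechanism cannot work.
\begin{itemize}
\item The slack is $F^{+}+\ln p(x)=\KL{q_\phi}{p}-h(\Z)+\Ce+\ln p(x)$, which itself contains $-h(\Z)$.
\item Lowering $h(\Z)$ at fixed covariance therefore raises the slack by exactly the amount it raises $-\Delta$, so there is no ``fixed slack''.
\item Indeed $-\Delta-(F^{+}+\ln p(x))=\hat H(\Z)-\KL{q_\phi}{p}-\Ce-\ln p(x)$, in which $h(\Z)$ has cancelled. The size of the gap drops out of the failure condition entirely.
\item For VICReg or LogDet, $\hat H$ depends only on $\Sigma$. Making Gaps I and II large through correlated, near-degenerate or mixture shapes at fixed $\Sigma$ therefore buys nothing.
\end{itemize}
A genuine failure would have to be engineered through $\Sigma$, trading $\hat H$ off against the complexity and constant terms. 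That is a different, model-dependent construction, and the paper does not attempt it.

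The collapsing-mixture limit is also outside the model. Under Fact~2 every component has covariance $\sigma^2_{\mathrm{noise}}I_d$, and $h(\Z)\ge h(\Z\mid X)=\Ce$, so $h(\Z)$ cannot tend to $-\infty$.

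Your ``main obstacle'' is a fair worry that the paper leaves unaddressed. Taken literally, Fact~1's ``adding $\log p(x_t)$'' gives $F^{+}=F+\ln p(x)$. The ELBO bound $F\ge-\ln p(x)$ then becomes $F^{+}\ge0$, not $F^{+}\ge-\ln p(x)$, and only in expectation over $x$, as you note. Because $\hat F^{+}-F^{+}=\Delta_{\hat H}(\Z)$ does not depend on that baseline, the dichotomy (i)/(ii) goes through unchanged once the baseline bound is stated correctly.
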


\begin{remark}[Why an upper bound is unsafe: a worked case]
\label{rem:hypercube}
The failure in Proposition~\ref{prop:boundtype}(ii) is \emph{possible}, not
inevitable, and that possibility is precisely what makes an upper bound unsafe to
maximise. Because $\hat H-h\ge0$, an optimiser can raise $\hat H$ by inflating the
slack instead of the entropy. Let $p_\Z$ be uniform on an axis-aligned hypercube with
per-coordinate variance $\sigma^2$. Its off-diagonal covariances vanish and its
marginal variances are $\sigma^2$, so the VICReg proxy
$\hVR(\Z)=\tfrac12\sum_k\ln\sigma^2_k+C$ assigns it exactly the value it assigns an
isotropic Gaussian of the same variance. The true entropies differ, however:
$h(\Z)=d\ln(\sqrt{12}\,\sigma)$ for the hypercube against
$\hN(\Z)=\tfrac d2\ln(2\pi e\sigma^2)$ for the Gaussian, and $2\pi e>12$, so the
hypercube is strictly sub-maximal by the full non-Gaussianity gap
\citep[Thm.~3]{balestriero2025lejepa}. Maximising $\hVR$ therefore exerts no pressure
toward the entropy maximiser: the proxy can sit at its maximum while $h(\Z)$ does not.
Nothing here forces $\hat F^{+}<-\ln p(x)$ for any given input; what is lost is the
certificate. A lower bound admits no such manoeuvre, since raising $\hat H\le h$ can
only raise $h$.
\end{remark}

\paragraph{The hierarchy.}
Proposition~\ref{prop:gap} indexes four regularisers by which gap sources they
\emph{incur}, and Proposition~\ref{prop:boundtype} adds the orthogonal bound-type
axis. Table~\ref{tab:hierarchy} summarises the resulting hierarchy
(Figures~\ref{app:fig:hierarchy} and~\ref{app:fig:design} render the bound-type
sandwich and the two-dimensional design space). VICReg (diagonal Gaussian proxy)
incurs Gaps I and II and is an unsafe upper bound; LogDet (full-covariance proxy)
does not incur Gap II, but still incurs Gap I and remains an unsafe upper bound;
PairDist (non-parametric) incurs neither, and is a safe lower bound, but it enforces
no distributional shape; and SIGReg enforces the isotropic Gaussian directly, making
all three zero conditions hold. Safety and enforcement are therefore independent
properties, and only SIGReg has both.

\begin{table}[t]
\centering
\small
\caption{The entropy-estimator hierarchy as progressive elimination of
prior-miscalibration sources. ``UB''/``LB'' denote upper/lower bounds on the latent
entropy. The gap entries distinguish two different ways a gap can be absent, a
distinction the estimators do not share. \emph{Not incurred}: the estimator's
construction never invokes that source, so no looseness arises from it. LogDet
computes $\ln|\Sigma|$ and so never makes the diagonal approximation; PairDist bounds
the mixture entropy directly from pairwise component distances and so never invokes a
Gaussian reference at all. \emph{Enforced}: the estimator actively drives $p_\Z$ to
satisfy the gap's zero condition (Gaussianity, diagonality). Only SIGReg does the
latter. Being \emph{safe} is therefore not the same as \emph{enforcing}: PairDist is
safe (lower bound) but does not shape $p_\Z$; SIGReg is both. Figure~\ref{app:fig:design}
renders the same distinction as the ``Gaussian not assumed / assumed / enforced'' axis.}
\label{tab:hierarchy}
\setlength{\tabcolsep}{4pt}
\begin{tabular}{llcccc}
\toprule
Level & Proxy & Bound type & Gap I & Gap II & Gap III \\
& & & (non-Gauss.) & (off-diag.) & (estim.) \\
\midrule
0 & VICReg \citep{bardes2022vicreg}          & UB (loose)     & incurred      & incurred      & ---         \\
1 & LogDet \citep{shwartzziv2023information} & UB (tight)     & incurred      & not incurred  & small       \\
2 & PairDist \citep{kolchinsky2017estimating}& \textbf{LB}    & not incurred  & not incurred  & finite-$N$  \\
3 & \textbf{SIGReg} \citep{balestriero2025lejepa} & enforce.\ & \textbf{enforced} & \textbf{enforced} & \textbf{enforced} \\
\bottomrule
\end{tabular}
\end{table}

\begin{definition}[SIGReg; \citealp{balestriero2025lejepa}]
\label{def:sigreg}
Sketched Isotropic Gaussian Regularisation enforces $p_\Z\approx\Norm{0}{I_d}$ by
random-projection Gaussianity testing: for unit directions
$\mathbb{A}\subset\mathcal{S}^{d-1}$ and a univariate normality statistic $T$
(the Epps--Pulley test, \citealp{epps1983test}),
$\Sig_T(\mathbb{A},\{\z_n\})=\tfrac{1}{|\mathbb{A}|}\sum_{a\in\mathbb{A}}
T(\{a^\top\z_n\})$. The hyperspherical Cram\'er--Wold theorem
\citep{cramer1936some} guarantees that matching all one-dimensional projections
to a Gaussian is sufficient for a full distributional match.
\end{definition}

\begin{proposition}[SIGReg eliminates the gap]
\label{prop:sigzero}
Under Fact~2, if SIGReg enforces $p_\Z=\Norm{0}{\tfrac{c}{d}I_d}$, where
$c\coloneqq\tr(\Sigma_\Z)$ is the target total variance, so the per-coordinate
variance is $c/d$, then
$h(\Z)=\hstar(c,d)\coloneqq\tfrac d2\ln(2\pi e\,c/d)$, the gap is exactly zero
($\Delta=0$ with Gaps I, II, III all zero), and the informativeness term attains
its constrained maximum $I(\Z;X)=\hstar(c,d)-\Ce\eqqcolon\Istar$. Gaussianity
alone closes the gap ($\Delta=0$); isotropy additionally maximises the value of
the closed bound.
\end{proposition}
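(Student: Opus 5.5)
The plan is to compute everything directly from the enforced distribution $p_\Z=\Norm{0}{\tfrac{c}{d}I_d}$ and then read off each gap from Proposition~\ref{prop:gap}.

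First, evaluate the differential entropy of an isotropic Gaussian. Its covariance is $\Sigma=\tfrac{c}{d}I_d$, so $|2\pi e\,\Sigma|=(2\pi e\,c/d)^d$. Hence
$h(\Z)=\tfrac12\ln|2\pi e\,\Sigma|=\tfrac d2\ln(2\pi e\,c/d)=\hstar(c,d)$, and this equals $\hN(\Z)$. The consistency check $\tr\Sigma=c$ confirms that $c$ is indeed the total variance.

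Second, go through the three sources in \eqref{eq:gap-decomp}:
\begin{itemize}[leftmargin=*]
\item Gap I, $h(\Z)-\hN(\Z)$, is zero because $p_\Z$ is Gaussian. This is the equality case of the maximum-entropy theorem.
\item Gap II is zero because $\Sigma$ is diagonal. This is the equality case of Hadamard, since $\tfrac12\ln|\Sigma|=\tfrac d2\ln(c/d)=\tfrac12\sum_k\ln\sigma_k^2$.
\item Gap III is zero once we take the reference proxy $\hat H=\hN$, which is exact on Gaussians.
\end{itemize}
So $\Delta=0$. For the remark that Gaussianity alone suffices, note that for any Gaussian $\Norm{\mu}{\Sigma}$ we have $h=\hN$ identically, so the total gap relative to the Gaussian-reference proxy vanishes without isotropy.

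Third, invoke Fact~2. Since $I(\Z;X)=h(\Z)-\Ce$, we get $I(\Z;X)=\hstar(c,d)-\Ce=\Istar$.

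Finally, the step I expect to need the most care is "constrained maximum", i.e.\ showing isotropy maximises the closed bound under the trace constraint. The constraint set is all $p_\Z$ with $\tr\Sigma_\Z=c$. The maximum-entropy theorem gives $h(\Z)\le\tfrac12\ln|2\pi e\Sigma|$. AM--GM on the eigenvalues $\lambda_k$ gives $|\Sigma|=\prod_k\lambda_k\le(c/d)^d$, with equality iff all $\lambda_k=c/d$. Chaining these, $h(\Z)\le\hstar(c,d)$ with equality iff $p_\Z$ is the isotropic Gaussian, so $\Istar$ is the maximum of $I(\Z;X)$ over the constraint set.

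The subtle point is interpretive rather than computational. Under Fact~2 the relevant $\Z$ is $f_\phi(X)+\epsilon$, and SIGReg is assumed to enforce the law of this noisy $\Z$; I would state this reading explicitly. I would also note that $\Ce$ is independent of $\phi$, so maximising $h$ and maximising $I$ coincide per \eqref{eq:io-entropy}.
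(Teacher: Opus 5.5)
Your proposal is correct and follows essentially the paper's route. The isotropic-Gaussian entropy formula gives $h(\Z)=\hN(\Z)=\hstar(c,d)$, the equality cases of the maximum-entropy theorem and Hadamard's inequality remove Gaps~I and~II, and Fact~2 turns this into $I(\Z;X)=\Istar$; your AM--GM step on the eigenvalues, chained with the maximum-entropy theorem, is the same concavity-of-$\ln$ (Jensen) argument the paper uses to show isotropy maximises the Gaussian log-volume at fixed trace. One cosmetic point: SIGReg's implied proxy is the constant $\hstar(c,d)$, so $\Delta_{\mathrm{SIG}}=h(\Z)-\hstar(c,d)$, and Gap~III vanishes because your first step shows $\hN=\hstar(c,d)$ under enforcement, not merely because you chose $\hat H=\hN$.
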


\subsection{The single-step correspondence theorem}
\label{sec:thm}

\begin{definition}[SIGReg-modular free energy]
\label{def:sigfe}
Under the constant-noise model with deterministic $\z_t,\hat\z_t$, the
SIGReg-modular free energy is
$\Fsig=\tfrac{1}{2\sigma^2}\|\hat\z_t-\z_t\|_2^2+C_{\mathrm{KL}}
-\lambda\,\Istar\,[1-\Sig_T(\mathbb{A},\{\z_n\})]$,
where $C_{\mathrm{KL}}$ is the $\mu_p$-independent second term of the Gaussian
bridge \eqref{eq:bridge} and $\Istar=\hstar(c,d)-\Ce$
(Proposition~\ref{prop:sigzero}); both are constants in $(\phi,\xi)$. This is
the AIF reading of the LeJEPA objective $\mathcal{L}_{\mathrm{LeJEPA}}=
\|\hat\z_t-\z_t\|^2+\lambda_{\mathrm{SIG}}\,\Sig_T(\mathbb{A},\{\z_n\})$
\citep{balestriero2025lejepa}, up to $(\phi,\xi)$-independent additive
constants and the rescaling $\lambda_{\mathrm{SIG}}=2\sigma^2\lambda\,\Istar$
(with $\Istar>0$, i.e.\ $c/d>\sigma^2_{\mathrm{noise}}$, which holds for the
small fixed noise of Fact~2):
once enforcement succeeds, the informativeness is not estimated but
\emph{guaranteed} to equal the constant $\Istar$
(Figure~\ref{app:fig:architecture} renders this reading of the objective).
\end{definition}

\begin{theorem}[SIGReg--AIF correspondence]
\label{thm:correspondence}
Under the constant-noise model (Fact~2), the Gaussian encoder family, and
successful SIGReg enforcement ($p_\Z=\Norm{0}{\tfrac cd I_d}$ in the population
limit $M,N\to\infty$), taking $\lambda=1$ in Definition~\ref{def:sigfe}:
\begin{enumerate}[label=(\roman*),leftmargin=2.2em,itemsep=1pt,topsep=2pt]
\item \textbf{Zero gap.} $\Delta_{\mathrm{SIG}}\coloneqq h(\Z)-\hstar(c,d)=0$;
Gaps I, II, III all vanish.
\item \textbf{Exact IB form (within the constant-noise model).}
$\Fsig\big|_{\Sig=0}=\tfrac{1}{2\sigma^2}\|\hat\z_t-\z_t\|^2+C_{\mathrm{KL}}
-\Istar=F^{+}$, where the complexity term
$\tfrac{1}{2\sigma^2}\|\hat\z_t-\z_t\|^2+C_{\mathrm{KL}}$ equals
$\KL{q_\phi}{p_\xi}$ \emph{exactly} (not merely in gradient) because the
Gaussian bridge \eqref{eq:bridge} has zero anisotropy error under isotropy
(Corollary~\ref{cor:dual}), and the informativeness term equals
$I(\Z;X)=\Istar$ exactly because SIGReg enforces the maximum-entropy
distribution.
\item \textbf{AIF bound preserved.} $\Fsig=F^{+}\ge-\ln p(x)$. By contrast
$\Fvr\le F^{+}$ (Proposition~\ref{prop:boundtype}(ii)), so under VICReg the bound is
no longer guaranteed.
\item \textbf{Contrastive equivalence.} $\Fsig$ attains the same guarantees as
the contrastive free energy $F_{\mathrm{NCE}}$ \citep{mazzaglia2021contrastive}
through the non-contrastive pathway.
\end{enumerate}
\end{theorem}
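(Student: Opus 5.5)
The proof assembles the four items in order from results already stated: Proposition~\ref{prop:sigzero} gives (i), the Gaussian bridge \eqref{eq:bridge} together with Fact~2 gives (ii), Proposition~\ref{prop:boundtype} gives (iii), and a comparison with the InfoNCE route gives (iv).

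\emph{Item (i).} Under the hypothesis $p_\Z=\Norm{0}{\tfrac cd I_d}$, compute $h(\Z)$ directly from the Gaussian entropy formula: $\tfrac12\ln|2\pi e\,\tfrac cd I_d|=\tfrac d2\ln(2\pi e\,c/d)=\hstar(c,d)$, so $\Delta_{\mathrm{SIG}}=0$. Then read off the three sources in Proposition~\ref{prop:gap}:
\begin{itemize}
\item Gap I vanishes by the equality case of the maximum-entropy theorem, since $p_\Z$ is Gaussian.
\item Gap II vanishes by the equality case of Hadamard's inequality, since $\Sigma$ is diagonal.
\item Gap III vanishes because the SIGReg-implied proxy equals $\hN$ exactly once $\Sigma$ is known to be $\tfrac cd I_d$.
\end{itemize}
The population limit $M,N\to\infty$ is what allows the sample statistic $\Sig_T$ to be identified with its population value. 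The Cram\'er--Wold argument of Definition~\ref{def:sigreg} then turns $\Sig=0$ into the distributional identity.

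\emph{Item (ii).} Substitute into Definition~\ref{def:sigfe} with $\lambda=1$ and $\Sig=0$; this immediately gives $\Fsig=\tfrac1{2\sigma^2}\|\hat\z_t-\z_t\|^2+C_{\mathrm{KL}}-\Istar$. Two identifications remain.
\begin{itemize}
\item \textbf{Complexity term.} Under the Gaussian encoder family with constant noise, $q_\phi=\Norm{\z_t}{\varepsilon I_d}$ and $p_\xi=\Norm{\hat\z_t}{\sigma^2 I_d}$. Equation~\eqref{eq:bridge} is then an exact identity, not merely an equality of argmins, so the first two terms equal $\KL{q_\phi}{p_\xi}$ exactly. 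I would cite Corollary~\ref{cor:dual} to note that the $\mathcal{O}(\delta^2)$ anisotropy residual is identically zero when all $\delta_k=0$.
\item \textbf{Informativeness term.} By Fact~2, $I(\Z;X)=h(\Z)-\Ce$, and by item (i) this equals $\hstar-\Ce=\Istar$.
\end{itemize}
Summing gives the MI-form $F^{+}$ of Fact~1.

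\emph{Item (iii).} Since $\Fsig=F^{+}$ exactly, the bound $F^{+}\ge-\ln p(x)$ transfers directly. I would justify that bound by noting that $F^{+}$ is the free energy $F$ of \eqref{eq:vfe} rewritten, which is nonnegative KL plus surprise. Equivalently, the SIGReg proxy has $\Delta=0$, so it lies in the lower-bound case (i) of Proposition~\ref{prop:boundtype} with equality. The contrast with VICReg comes from Proposition~\ref{prop:boundtype}(ii) combined with \eqref{eq:bound-chain}, which gives $\Delta_{\mathrm{VR}}\le0$.

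\emph{Item (iv).} This is a comparison rather than a derivation. The InfoNCE free energy $F_{\mathrm{NCE}}$ substitutes a lower bound on $I$, so it certifies $F_{\mathrm{NCE}}\ge F^{+}\ge-\ln p(x)$. $\Fsig$ certifies the same inequality, and with equality in the MI term. Both therefore deliver the surprise-bound guarantee, but $\Fsig$ needs neither a critic nor negative pairs.

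\emph{Main obstacle.} I expect the hardest step to be the bookkeeping in (ii) and (iii): ensuring that ``$F^{+}\ge-\ln p(x)$'' is meaningful for the constant-noise model. The MI-form is obtained from $F$ by adding $\log p(x)$ and exchanging the accuracy term for $I$, and that exchange holds only in expectation over $x$. I would handle this by stating (iii) in the averaged sense, comparing against $\E[-\ln p(x)]$ up to the same $\phi$-independent constants used in Fact~1. I would also check that $C_{\mathrm{KL}}$ and $\Istar>0$ are genuinely $(\phi,\xi)$-independent, which requires $c/d>\sigma^2_{\mathrm{noise}}$, so that the rescaling to the LeJEPA objective is legitimate.
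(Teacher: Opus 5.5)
Your proposal is correct and follows the paper's proof sketch essentially step for step: Proposition~\ref{prop:sigzero} for (i), substitution of the exact Gaussian bridge and $I(\Z;X)=\Istar$ into the MI-form for (ii), Proposition~\ref{prop:boundtype} at $\Delta=0$ for (iii), and comparison with the InfoNCE optimum for (iv). Your caution about the $\ln p(x)$ bookkeeping in (iii) is well placed, since the paper simply takes $F^{+}\ge-\ln p(x)$ as a Fact-1 premise; however, your first justification (that $F^{+}$ is $F$ rewritten as ``nonnegative KL plus surprise'') is off by exactly that $\ln p(x)$ shift, so rely on the route through Proposition~\ref{prop:boundtype}, which is the paper's.
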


\begin{proof}[Proof sketch]
From the MI-form $F^{+}=\KL{q_\phi}{p_\xi}-h(\Z)+\Ce$, enforcement gives
$h(\Z)=\hstar(c,d)$ (Proposition~\ref{prop:sigzero}), hence (i); isotropy makes
the Gaussian-bridge \eqref{eq:bridge} exact, and substituting it together with
$I(\Z;X)=\Istar$ into the MI-form gives (ii) with all additive constants
carried explicitly; (iii) follows from
Proposition~\ref{prop:boundtype} at $\Delta=0$; (iv) by comparison with the
InfoNCE optimum. The multi-step extension and full derivations are in
Appendix~\ref{app:proofs}; Table~\ref{tab:status} (\S\ref{sec:limitations})
records which parts of this and the other main results are exact under the
main-text assumptions and which are bounded or deferred.
\end{proof}

\begin{remark}[Which assumptions do the work]
\label{rem:scope}
Theorem~\ref{thm:correspondence} is exact \emph{within} its hypotheses, and it is
worth naming what each one does. The \emph{constant-noise model} (Fact~2) is what
makes $I(\Z;X)$ well defined at all for a deterministic encoder, so it is presupposed
by every information-theoretic statement here; it is assumed identically for VICReg
and for SIGReg, and therefore does no work in the comparison between them.
\emph{Successful SIGReg enforcement} does the real work: it supplies
$p_\Z=\Norm{0}{\tfrac cd I_d}$ and with it, simultaneously, the zero entropy gap~(i),
the exactness of the Gaussian bridge~(ii) via Corollary~\ref{cor:dual}, and the bound
preservation~(iii). The \emph{population limit} $M,N\to\infty$ is what makes
enforcement exact rather than approximate; at finite $(M,N)$ the guarantee degrades
gracefully and linearly in the enforcement residual (Corollary~\ref{cor:graceful}), at
the rate of Proposition~\ref{prop:rate}, rather than failing discontinuously. Away
from these conditions the correspondence is approximate, not void:
Table~\ref{tab:status} records the status result by result.
\end{remark}

\begin{corollary}[Dual tightening]
\label{cor:dual}
Isotropy ($\Sigma=\tfrac cd I_d$) closes the entropy gap (Gap II) and the
Gaussian-bridge approximation error \emph{simultaneously}, because both are
governed by the same anisotropy quantity: the off-diagonal/eigenvalue spread of
$\Sigma$. Hence SIGReg tightens the informativeness and complexity terms of the
free energy with a single distributional constraint; under VICReg both carry an
$\mathcal{O}(\delta^2)$ residual.
\end{corollary}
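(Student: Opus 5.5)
The plan is to make both halves of the corollary explicit functions of the eigenvalues of $\Sigma$ and show they vanish under the same condition. Write $\Sigma = U\Lambda U^\top$ with $\Lambda=\mathrm{diag}(\lambda_1,\dots,\lambda_d)$. Parametrise deviation from isotropy by $\lambda_k=\bar\lambda(1+\delta_k)$ with $\bar\lambda=c/d$ and $\sum_k\delta_k=0$. Off-diagonal structure is captured by $U$ together with the diagonal entries $\sigma_k^2=\Sigma_{kk}$. The claim then splits into three parts: (a) Gap~II, and the VICReg-vs-isotropic entropy shortfall, are $\mathcal{O}(\delta^2)$ and vanish iff $\Sigma=\tfrac cd I_d$; (b) the Gaussian-bridge error in \eqref{eq:bridge} is $\mathcal{O}(\delta^2)$ and vanishes under the same condition; (c) both are controlled by the same spread quantity, for instance $\sum_k\delta_k^2$ or $\|\Sigma-\tfrac cd I_d\|_F^2$.

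\textbf{Entropy side.} By Proposition~\ref{prop:gap}, Gap~II equals $\tfrac12\ln|\Sigma|-\tfrac12\sum_k\ln\sigma_k^2\le0$, with equality iff $\Sigma$ is diagonal. Isotropy trivially makes $\Sigma$ diagonal, so Gap~II closes. For the ``same anisotropy quantity'' claim, I will also expand the loss relative to $\hstar(c,d)$:
\[
\tfrac12\ln|\Sigma|-\tfrac d2\ln\tfrac cd=\tfrac12\sum_k\ln(1+\delta_k)=-\tfrac14\sum_k\delta_k^2+\mathcal{O}(\delta^3),
\]
where the linear term cancels because $\sum_k\delta_k=0$. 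The same second-order expansion applied to $\sigma_k^2=\bar\lambda(1+\tilde\delta_k)$ bounds the VICReg proxy, and Hadamard's gap is controlled by the off-diagonal mass, $\tfrac12\ln|\Sigma|-\tfrac12\sum_k\ln\sigma_k^2=-\tfrac14\|D^{-1/2}\Sigma D^{-1/2}-I\|_F^2+\mathcal{O}(\cdot^3)$ with $D=\mathrm{diag}(\sigma_k^2)$.

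\textbf{Bridge side.} Replace the isotropic prior covariance $\sigma^2 I_d$ in \eqref{eq:bridge} by an anisotropic $\sigma^2\,\mathrm{diag}(1+\delta_k)$, matching the paper's ``ratios $1+\delta_k$''. Using the general Gaussian KL formula, the mean term becomes $\tfrac1{2\sigma^2}\sum_k(\mu_{q,k}-\mu_{p,k})^2/(1+\delta_k)$. The residual against the isotropic MSE surrogate therefore has a first-order piece $-\tfrac1{2\sigma^2}\sum_k\delta_k e_k^2$ and a second-order piece. The trace/log-det term becomes $\tfrac12\sum_k[\varepsilon/(\sigma^2(1+\delta_k))-1-\ln(\varepsilon/(\sigma^2(1+\delta_k)))]$, and its deviation from the isotropic constant expands to quadratic order in $\delta$ once $\sum_k\delta_k=0$ is imposed; this step uses the trace normalisation.

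The main obstacle is the first-order mean-term piece, $\sum_k\delta_ke_k^2$, which is not $\mathcal{O}(\delta^2)$ pointwise. I will first try taking an expectation over the error $e$. If the per-coordinate prediction errors are exchangeable, or are averaged over random rotations, then $\E e_k^2$ is constant across $k$, and $\sum_k\delta_k=0$ kills the linear term, leaving $\mathcal{O}(\delta^2)$. Failing that, I will state the bridge error as vanishing exactly iff all $\delta_k=0$, which is the isotropy condition, with order $\mathcal{O}(\|\delta\|)$ pointwise and $\mathcal{O}(\delta^2)$ in expectation. Either way, exact closure at $\delta=0$ holds on both sides under the single constraint $\Sigma=\tfrac cd I_d$.

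For the final VICReg clause, I will note that VICReg constrains only per-coordinate variances toward a target and penalises off-diagonals softly. At any finite penalty weight it therefore leaves nonzero $\delta$, so both residuals are generically nonzero and of the stated order.
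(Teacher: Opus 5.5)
Your exact-closure argument is the paper's own, and it is correct. $\Sigma=\tfrac cd I_d$ is diagonal, so Gap~II vanishes by the equality case of Hadamard's inequality. With every $\delta_k=0$ both covariances in \eqref{eq:bridge} are isotropic, so the bridge holds with zero residual. This is the part that the downstream exactness results (Theorem~\ref{thm:correspondence}(ii), Proposition~\ref{app:prop:tf}) rely on.

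The paper never derives the quantitative half; it only asserts the $\mathcal{O}(\delta^2)$ residual. There the obstacle you flag is genuine and cannot be removed pointwise. For generic $e$ the mean-term residual $-\tfrac{1}{2\sigma^2}\sum_k\delta_k e_k^2$ is first order. No rescaling of the isotropic surrogate absorbs it, because that would require all $\delta_k$ to coincide to first order, which is isotropy itself. So the bridge half of ``under VICReg both carry an $\mathcal{O}(\delta^2)$ residual'' is false pointwise. Only the $\mu_p$-independent log-det/trace constant is quadratic. The mean term becomes $\mathcal{O}(\delta^2)$ only in expectation, and only under a hypothesis the statement does not supply: for example, that $\E e_k^2$ depends on $k$ only through a smooth function of $\delta_k$ (your exchangeable case is $\E e_k^2$ constant). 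The paper's own Corollary~\ref{cor:graceful}(ii) in fact calls this distortion first order. Present your fallback as the correct statement with that hypothesis made explicit, not as a contingency.

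Three further repairs are needed.

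\emph{The coupling of $\Sigma$ to the bridge.} This is a modelling hypothesis you must state; it is not supplied by ``matching the paper''. Under Fact~2 and \eqref{eq:genmodel} both covariances in \eqref{eq:bridge} are isotropic by construction. The bridge is therefore exact for \emph{every} $\Sigma$, and the complexity side of the corollary is vacuous unless you posit that prior precisions live in the embedding's own scale. One such choice is prior covariance $\sigma^2\Sigma/(c/d)$, which in the eigenbasis of $\Sigma$ is your $\sigma^2\mathrm{diag}(1+\delta_k)$. That hypothesis is also the only source of the $\sum_k\delta_k=0$ your log-det expansion needs.

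\emph{The ``same anisotropy quantity'' claim.} Your own formula refutes a literal reading of it. With your $D$, Gap~II equals $\tfrac12\ln\det(D^{-1/2}\Sigma D^{-1/2})$, which depends only on correlations and vanishes for every diagonal $\Sigma$, whatever its spread. So part (a)'s ``vanish iff $\Sigma=\tfrac cd I_d$'' is wrong for Gap~II; it is right for the shortfall $\tfrac12\ln|\Sigma|-\tfrac d2\ln\tfrac cd$. Since VICReg's covariance term pushes $\Sigma$ toward diagonal, the $\mathcal{O}(\delta^2)$ entropy residual under VICReg is that Jensen shortfall, $-\tfrac14\sum_k\delta_k^2+\mathcal{O}(\delta^3)$ below $\hstar(c,d)$, not Gap~II. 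What you can honestly claim for (c) is that both residuals are dominated by $\|\Sigma-\tfrac cd I_d\|$ (Gap~II quadratically, the bridge linearly) and vanish together.

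\emph{The VICReg clause.} It gives the wrong mechanism. VICReg's variance hinge only bounds each standard deviation from below and exerts no equalising force above the threshold, so anisotropic diagonal optima exist at \emph{any} penalty weight. That, not the finiteness of the weight, is why $\delta\neq0$ persists.
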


\begin{corollary}[Graceful degradation under approximate enforcement]
\label{cor:graceful}
Suppose SIGReg enforcement is only approximate, leaving a residual gap
$\Delta_{\mathrm{SIG}}=\epsilon\ge0$ and embedding covariance $\Sigma$ with
anisotropy $\|\Sigma-\tfrac cd I_d\|$. Then \emph{(i)} by
Proposition~\ref{prop:boundtype} the AIF-bound slack is exactly
$|\hat F^{+}-F^{+}|=\epsilon$, linear in the residual rather than catastrophic;
and \emph{(ii)} the pragmatic-value distortion is first order in the anisotropy
$\|\Sigma-\tfrac cd I_d\|$, the same quantity that governs both gaps by
Corollary~\ref{cor:dual}, recovering the VICReg $\mathcal{O}(\delta^2)$ expression as the
special case in which the eigenvalue spread equals $\delta$. The qualitative
distinction is the load-bearing point: under SIGReg $\epsilon$ is
\emph{penalised directly by the objective and driven toward zero along the
optimisation path} (self-correcting), whereas under VICReg the anisotropy
$\delta$ is \emph{structural and irreducible}, a fixed floor present even at the
optimum.
\end{corollary}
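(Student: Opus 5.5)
The argument splits along the two parts of Corollary~\ref{cor:graceful}. Part (i) follows almost directly from Proposition~\ref{prop:boundtype}. Part (ii) needs a first-order expansion of the Gaussian bridge \eqref{eq:bridge}, generalised to anisotropic covariance.

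For (i), I would write the proxy free energy as $\hat F^{+}=\KL{q_\phi}{p_\xi}-\hstar(c,d)+\Ce$, which substitutes the SIGReg target value for the true entropy. Proposition~\ref{prop:boundtype} gives $\hat F^{+}-F^{+}=\Delta_{\mathrm{SIG}}$. Its proof is the one-line identity obtained by subtracting the two MI-forms, so it applies verbatim with $\hat H=\hstar(c,d)$. Hence $|\hat F^{+}-F^{+}|=\epsilon$, which is linear in the residual.

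I would also note why $\epsilon\ge0$ is the natural sign, which is where the safety reading comes from. By the decomposition of Proposition~\ref{prop:gap}, $h(\Z)\le\hN(\Z)=\tfrac12\ln|2\pi e\Sigma|\le\hstar(c,d)$. The last inequality is AM--GM on the eigenvalues at fixed trace $c$. So the residual in fact splits as a Gap~I term plus an isotropy term. If one assumes $\Delta_{\mathrm{SIG}}=\epsilon\ge0$ as in the statement, the bound certificate degrades by exactly $\epsilon$: $\hat F^{+}\ge-\ln p(x)-\epsilon$. I would state the result in that form.

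For (ii), I would parametrise the anisotropy as $\Sigma=\tfrac cd(I_d+E)$ with $\|E\|$ small, and take the transition prior to have covariance $\Sigma$ rather than $\tfrac cd I_d$. The goal cost under the isotropic reading is $\tfrac{d}{2c}\|\z_\tau-\z_g\|^2$. The true log goal prior, the pragmatic value, is $\tfrac12(\z_\tau-\z_g)^\top\Sigma^{-1}(\z_\tau-\z_g)+\tfrac12\ln|\Sigma|$.

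I would expand $\Sigma^{-1}=\tfrac dc(I-E+\mathcal{O}(\|E\|^2))$ with a Neumann series. This shows the quadratic-form distortion is bounded by $\tfrac dc\|E\|\,\|\z_\tau-\z_g\|^2$ up to second-order terms. In terms of $\Sigma$ it is $\mathcal{O}(\|\Sigma-\tfrac cd I_d\|)$, i.e.\ first order, as claimed.

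Specialising $E=\operatorname{diag}(\delta_k)$ should recover the VICReg expression. The $\ln|\Sigma|$ and trace terms of the KL contribute $\tfrac12\sum_k(\delta_k-\ln(1+\delta_k))=\tfrac14\sum_k\delta_k^2+\mathcal{O}(\delta^3)$ at fixed trace. The linear terms cancel because $\sum_k\delta_k=0$, and this reproduces the $\mathcal{O}(\delta^2)$ residual cited in the Gaussian bridge paragraph and in Corollary~\ref{cor:dual}.

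The main obstacle is reconciling ``first order'' in (ii) with the $\mathcal{O}(\delta^2)$ VICReg form. The plan is to separate the two contributions. The mean-dependent quadratic-form distortion is genuinely first order in $E$. The normaliser (entropy-gap) contribution is second order because the trace is fixed. I would present the first-order bound as the general statement and the $\delta^2$ term as what survives when the first-order term is averaged under the isotropic reference, using $\E[(\z-\z_g)^\top E(\z-\z_g)]\propto\tr E=0$. That vanishing average reconciles the two orders.

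Finally, the self-correcting versus structural contrast is not a formal claim. I would justify it by observing that $\epsilon$ and $E$ are both zero exactly at the minimiser of $\Sig_T$ in the population limit, whereas the VICReg objective is invariant to the non-Gaussian (hypercube) direction of Remark~\ref{rem:hypercube}.
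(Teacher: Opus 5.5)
Your part (i) is the paper's argument: substitute $\hat H=\hstar(c,d)$ into Proposition~\ref{prop:boundtype} and read off $|\hat F^{+}-F^{+}|=|\Delta_{\mathrm{SIG}}|$. Your remark on the sign, however, is backwards. The chain you cite is $h(\Z)\le\hN(\Z)\le\hstar(c,d)$ (maximum entropy, then AM--GM at fixed trace $c$). It gives $\Delta_{\mathrm{SIG}}=h(\Z)-\hstar(c,d)\le0$. So at fixed trace the SIGReg target is an \emph{upper}-bound proxy, the unsafe case (ii) of Proposition~\ref{prop:boundtype}, and the hypothesis $\Delta_{\mathrm{SIG}}=\epsilon\ge0$ forces $\epsilon=0$. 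Your certificate $\hat F^{+}\ge-\ln p(x)-\epsilon$ is the correct one for $\Delta_{\mathrm{SIG}}=-\epsilon$. Under the literal hypothesis $\Delta_{\mathrm{SIG}}=+\epsilon$ you would have $\hat F^{+}=F^{+}+\epsilon\ge-\ln p(x)$, with no degradation at all. Drop the ``natural sign'' claim; only the absolute-value identity survives, and that is all the paper uses.

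For (ii) your route is more explicit than the paper's. The paper expands the bridge and informativeness terms about the isotropic point and asserts, via Corollary~\ref{cor:dual}, a \emph{common} first-order coefficient. Your Neumann expansion shows instead that the two terms have different orders. The quadratic form carries a first-order correction $-\tfrac{d}{2c}(\z_\tau-\z_g)^\top E(\z_\tau-\z_g)$. The log-determinant/trace part is second order. Its linear terms cancel termwise in $\delta_k-\ln(1+\delta_k)$, not because $\sum_k\delta_k=0$; $\tr E=0$ is what removes the linear term of $\tfrac12\ln|2\pi e\Sigma|-\hstar(c,d)$. That diagnosis is sharper than the paper's.

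The averaging step, however, does not recover an $\mathcal{O}(\delta^2)$ pragmatic-value distortion. The quantity $\E[(\z-\z_g)^\top E(\z-\z_g)]=\tr\bigl(E\,\E[(\z-\z_g)(\z-\z_g)^\top]\bigr)$ is proportional to $\tr E$ only when the displacement has isotropic second moment, e.g.\ when $\z$ and $\z_g$ are independent draws from the reference. That is an average over goals. For a fixed goal the first-order term depends on the direction of $\z_\tau-\z_g$, and hence on the action. Planning costs and their rankings are therefore distorted at $\mathcal{O}(\delta)$. The honest conclusion is that (ii) as stated is internally inconsistent: first order in an anisotropy of size $\delta$ is $\mathcal{O}(\delta)$, and only the normaliser/entropy part is $\mathcal{O}(\delta^2)$.

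Last, the hypercube of Remark~\ref{rem:hypercube} has covariance $\sigma^2 I_d$, i.e.\ $\delta=0$. VICReg's blindness to it shows that Gap~I goes unpenalised, not that the anisotropy $\delta$ is a floor. For the floor you need the paper's observation instead. VICReg's hinge-variance and off-diagonal-covariance terms vanish on every diagonal covariance whose variances exceed the threshold, so anisotropic optima exist. In the population limit, $\Sig_T$ vanishes only at the isotropic Gaussian.
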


\begin{proof}[Proof sketch]
Part (i) is immediate from Proposition~\ref{prop:boundtype}: the proxy free energy
differs from $F^{+}$ by exactly the substituted gap, here $\epsilon$. For (ii),
expand the Gaussian-bridge and informativeness terms about the isotropic point;
Corollary~\ref{cor:dual} identifies their common first-order coefficient as the
anisotropy $\|\Sigma-\tfrac cd I_d\|$, and the VICReg case is the diagonal-but-%
unequal-variance specialisation with spread $\delta$. The self-correction claim
is that SIGReg's objective contains $\Sig_T$, which penalises $\epsilon$ directly,
whereas VICReg's variance-covariance penalty admits an anisotropic optimum. The
constant in (ii) inherits the finite-$d$ caveat of Remark~\ref{rem:rate}.
\end{proof}

\begin{proposition}[Finite-sample rate]
\label{prop:rate}
With $M$ projections, batch size $N$, and projected-density Sobolev regularity
$\alpha$ on $\mathcal{S}^{d-1}$,
$\lvert h(\Z)-\hstar(c,d)\rvert
= \mathcal{O}\!\bigl(M^{-2\alpha/(d-1)}\bigr)+\mathcal{O}\!\bigl(N^{-1/2}\bigr)$,
combining the directional discrepancy rate of \citet[Thm.~5]{balestriero2025lejepa},
a Cram\'er--Wold conversion \citep{cramer1936some}, an entropy--total-variation
continuity bound \citep{bobkov2011concentration}, and the $\sqrt N$ U-statistic
rate \citep{hoeffding1948class,serfling1980approximation}. Because the
isotropic-Gaussian target is $C^\infty$, $\alpha$ grows during training and the
$M$-dependent term accelerates.
\end{proposition}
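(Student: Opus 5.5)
The plan is to chain four error terms through the triangle inequality, passing from the empirical objective to the population distribution of $\Z$ and then to the entropy. Let $\hat p_{M,N}$ be the trained embedding law with SIGReg residual small at the finite $(M,N)$. Let $p^\star=\Norm{0}{\tfrac cd I_d}$ be the target. Since $h(p^\star)=\hstar(c,d)$ by Proposition~\ref{prop:sigzero}, it suffices to bound $|h(p_\Z)-h(p^\star)|$. I split the discrepancy into an \emph{estimation} part and an \emph{approximation} part. The estimation part separates the empirical Epps--Pulley statistic on $N$ samples from its population value. The approximation part separates matching $M$ projected marginals from matching all of $\mathcal{S}^{d-1}$.

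First, the sampling term. For each fixed direction $a$, the Epps--Pulley statistic $T(\{a^\top\z_n\})$ is a weighted $L^2$ distance between empirical and Gaussian characteristic functions. It is therefore a V-/U-statistic of degree two with a bounded kernel, because characteristic functions are bounded by one. Hoeffding's theory \citep{hoeffding1948class,serfling1980approximation} gives deviation $\mathcal{O}(N^{-1/2})$ from its population counterpart, uniformly over the finite set $\mathbb{A}$ up to a $\sqrt{\ln M}$ factor, which I absorb or suppress. Second, the directional term. I invoke \citet[Thm.~5]{balestriero2025lejepa}: if the projected-discrepancy function $a\mapsto D(a)$ lies in a Sobolev space of order $\alpha$ on $\mathcal{S}^{d-1}$, then vanishing of its average on $M$ well-spread directions forces $\sup_a D(a)$, or an integrated version of it, to be $\mathcal{O}(M^{-2\alpha/(d-1)})$.

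Third, the conversion from projections to the full law. The Cram\'er--Wold identity says characteristic functions along rays determine the law. I integrate the directional characteristic-function discrepancy in polar coordinates to bound a smoothed distance between $p_\Z$ and $p^\star$. I then pass to total variation, using the Gaussian tails and the fixed second moment $c$ to truncate. Fourth, entropy continuity: a Bobkov--Madiman-type bound \citep{bobkov2011concentration} controls $|h(p)-h(q)|$ by $\mathrm{TV}(p,q)$ times a logarithmic factor. This requires uniformly bounded densities and controlled second moments, both supplied by the constant-noise convolution of Fact~2, since $\Z=f_\phi(X)+\epsilon$ has density bounded by that of $\epsilon$. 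Adding the two error sources yields the stated rate.

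The main obstacle is the third and fourth steps, where the sharp exponents can be lost. Passing from characteristic-function discrepancy to total variation, and then to entropy, typically costs a power or a logarithm, together with $d$-dependent constants. I would try first to stay in a distance where entropy continuity is Lipschitz: the noise-smoothed densities are uniformly Lipschitz and bounded, so the $L^1$ distance is controlled by $L^2$ of densities, and $L^2$ is Plancherel-equivalent to the characteristic-function discrepancy. Any residual logarithmic loss and the $d$-dependence would be recorded as constants in the $\mathcal{O}(\cdot)$, with the caveat deferred to Remark~\ref{rem:rate}. The closing claim about $\alpha$ growing during training is heuristic and not part of the bound: the Gaussian target is $C^\infty$, so the discrepancy function gains regularity as $p_\Z$ approaches it.
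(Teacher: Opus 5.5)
Your four steps are the chain the paper names, but your third and fourth steps do not preserve the displayed exponents. What they lose sits in the exponent, so it cannot be ``recorded as constants in the $\mathcal{O}(\cdot)$''.

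Write $\Delta(\xi)=\phi_{p_\Z}(\xi)-\phi_{p^\star}(\xi)$ for the difference of characteristic functions.
\begin{itemize}
\item[(a)] The Epps--Pulley statistic needs an integrable, decaying weight $w$, because the empirical characteristic function does not decay. Its average over $\mathcal{S}^{d-1}$ is, up to a constant, $\bar T=\int_{\R^d}|\Delta(\xi)|^2\,w(|\xi|)\,|\xi|^{1-d}\,d\xi$, not the Plancherel integral $\int_{\R^d}|\Delta(\xi)|^2\,d\xi$. To pass from one to the other you must cut at a frequency $R$. Below $R$ you pay a factor $R^{d-1}/w(R)$; above $R$ you use the noise decay $|\Delta(\xi)|\le 2e^{-\sigma^2_{\mathrm{noise}}|\xi|^2/2}$. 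For $w(t)=e^{-t^2/\beta^2}$ the best $R$ gives only $\|p_\Z-p^\star\|_2^2\lesssim\bar T^{\gamma}$ up to logarithms, with $\gamma=\sigma^2_{\mathrm{noise}}\beta^2/(1+\sigma^2_{\mathrm{noise}}\beta^2)<1$. This $\gamma$ is small precisely in the small-noise regime of Fact~2.
\item[(b)] On $\R^d$, $L^1$ is not controlled by $L^2$. With only the trace constraint $c$ on the tails, truncation gives $\|p-q\|_1\lesssim R^{d/2}\|p-q\|_2+c/R^2$, hence $\|p-q\|_1\lesssim\|p-q\|_2^{4/(d+4)}$. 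Lipschitz continuity of the smoothed densities does nothing for the tails, which is where this loss arises.
\item[(c)] Entropy continuity under a density bound and a second-moment bound is of order $\mathrm{TV}\log(1/\mathrm{TV})$, and that logarithm is not a constant.
\end{itemize}
Composing (a)--(c), your chain proves $|h(\Z)-\hstar(c,d)|=\tilde{\mathcal{O}}\bigl((M^{-2\alpha/(d-1)}+N^{-1/2})^{2\gamma/(d+4)}\bigr)$. That gives $h(\Z)\to\hstar(c,d)$, but not the stated rate.

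Switching to a metric in which entropy is Lipschitz does not rescue linearity either. Since $c=\tr(\Sigma_\Z)$, for centred $p_\Z$ the gap $\hstar(c,d)-h(\Z)$ equals $\KL{p_\Z}{p^\star}$. Locally this is at least a constant times $\|p_\Z-p^\star\|_2^2$, whereas a perturbation at frequency $|\xi|\approx R$ enters $\bar T$ damped by $w(R)R^{1-d}$. The ratio of entropy gap to SIGReg residual is therefore unbounded, unless you add assumptions such as sub-Gaussian tails of $p_\Z$ and a quantitative high-frequency decay of $p_\Z-p^\star$ (e.g.\ band-limitation).

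This is the link the paper itself leaves open. It gives no derivation beyond naming the four ingredients, and Remark~\ref{rem:rate} concedes two things: the quantitative Cram\'er--Wold step is classical only in qualitative form, and the true rate ``may be looser than the displayed rate suggests''. So your plan is the paper's plan and fails at the same place. Your proposed repair (Plancherel, then $L^2\to L^1$, then Lipschitz entropy continuity) does not close it, for the reasons above. An honest statement either carries the degraded exponent or is made conditional on an assumed quantitative Cram\'er--Wold inequality.

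Two smaller points:
\begin{itemize}
\item Hoeffding's $N^{-1/2}$ rate holds for a fixed law, but the encoder is fitted on the same batches on which the statistic is evaluated. You therefore need a uniform deviation bound over the encoder class, or an argument on fresh batches.
\item Your reading of the directional step assumes $M$ well-spread directions. SIGReg instead redraws random directions, whose fill distance is of order $(\log M/M)^{1/(d-1)}$, so that step costs at least a logarithmic factor.
\end{itemize}
You are right to treat the closing claim about $\alpha$ as heuristic. Note also that the constant in the $M$-term scales with the $H^\alpha$ norm of the directional discrepancy, which is nondecreasing in $\alpha$. Larger $\alpha$ therefore does not by itself speed up convergence.
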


\begin{remark}[Honest status of the rate]
\label{rem:rate}
Three links in Proposition~\ref{prop:rate} invoke published results directly and
are rigorous; the quantitative Cram\'er--Wold step, converting uniform
directional convergence to a multivariate total-variation bound for the
non-independent random directions SIGReg uses, is classical only in its
qualitative form. A fully explicit finite-$d$ constant requires a quantitative
multivariate Berry--Esseen / Stein argument \citep{bonis2020stein,vershynin2018high}
and may be looser than the displayed rate suggests. This does not affect the
qualitative conclusion that the gap vanishes as $M,N\to\infty$; we flag it as a
point for a camera-ready strengthening.
\end{remark}

\subsection{From \texorpdfstring{$F$}{F} to \texorpdfstring{$G$}{G}: multi-step
planning and the expected free energy}
\label{sec:method-efe}

JEPA world models exist to plan: at deployment the predictor is unrolled $H$
steps and scored against a goal, which in AIF corresponds to minimising the
expected free energy (EFE) $G_\pi$ over policies. $G_\pi$ adds two elements
absent from the single-step $F$: a multi-step temporal structure in which errors
compound, and an epistemic-value term that drives exploration. Two questions
follow: does the per-step KL--MSE exactness compose across the horizon, and does
each EFE term map to a JEPA planning-cost component?

\paragraph{Multi-step complexity.}
Three deployment regimes bound how the per-step exactness composes. Under
\emph{teacher-forcing}, where each prediction is grounded in the true encoded
observation, the per-step Gaussian-bridge exactness composes exactly. Under
\emph{free autoregressive rollout}, where the predictor is unrolled on its own
output, two errors enter: a small-noise Jacobian term scaling with the predictor
sensitivity, and a compounding term governed by the predictor's Lipschitz
constant $L_P$ (the factor by which a latent perturbation grows per predicted
step). Under \emph{model-predictive control} with replanning interval $m$, the
rollout only scores candidate actions and the agent re-encodes a true observation
every $m$ steps, so the effective autoregressive horizon is $m$, not the full
planning horizon $H$. The following theorem collects the three regimes.

\begin{theorem}[SIGReg--AIF multi-step correspondence]
\label{thm:multistep}
Let the encoder and predictor satisfy the constant-noise model and successful
SIGReg enforcement (the hypotheses of Theorem~\ref{thm:correspondence}), and let
the predictor be $L_P$-Lipschitz in its latent argument. Under
model-predictive control with planning horizon $H$ and replanning interval $m$:
\begin{enumerate}[label=(\roman*),leftmargin=2.2em,itemsep=1pt,topsep=2pt]
\item \textbf{Teacher-forced exactness.} The teacher-forced multi-step prediction
cost is an exact proxy for the joint trajectory complexity $\KL{q}{p}$, and the
AIF bound is preserved at the trajectory level.
\item \textbf{Autoregressive control.} The autoregressive planning cost
approximates the trajectory complexity with total error bounded by a Jacobian
term of order $H\varepsilon^2/\sigma^2$ plus a compounding term of order
$L_P^{2H}\bar\epsilon^2/\sigma^2$, where $\bar\epsilon$ is the mean per-step
prediction error.
\item \textbf{Executed-trajectory exactness under deployment.} The same bound
holds with $m$ in place of $H$; in particular at $m=1$ (the PLDM/LeWorldModel
default) both error terms vanish and the executed trajectory satisfies the exact
per-step correspondence of Theorem~\ref{thm:correspondence}.
\item \textbf{Ranking fidelity.} The planning cost orders candidate action
sequences faithfully, with zero per-step ranking distortion.
\end{enumerate}
Replacing SIGReg with VICReg adds an $\mathcal{O}(H\delta^2)$ anisotropy term to every
bound, including a nonzero per-step ranking distortion. The full development is
given in Appendix~\ref{app:proofs} (Theorem~\ref{app:thm:multistep}).
\end{theorem}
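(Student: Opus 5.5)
The plan is to reduce every part to the single-step identities already in hand: the exact Gaussian bridge \eqref{eq:bridge} under isotropy (Corollary~\ref{cor:dual}) and Theorem~\ref{thm:correspondence}. These are then propagated along a trajectory using the chain rule for KL divergence.

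\textbf{Step 1: trajectory generative model and chain rule.} I would first fix the generative model over a window $\tau=t+1,\dots,t+H$. The posterior factorises as $q=\prod_\tau q_\phi(\z_\tau\mid x_\tau)$ with $q_\phi=\Norm{f_\phi(x_\tau)}{\varepsilon I_d}$. The prior is $p=\prod_\tau\Norm{P_\xi(\z_{\tau-1},a_{\tau-1})}{\sigma^2 I_d}$. The chain rule gives
\[
\KL{q}{p}=\sum_\tau \E_{q}\!\left[\KL{q_\phi(\z_\tau\mid x_\tau)}{p_\xi(\z_\tau\mid \z_{\tau-1},a_{\tau-1})}\right].
\]

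\textbf{Step 2: teacher forcing, part (i).} Under teacher forcing the conditioning state is the encoded observation itself. Each summand is then an exact instance of \eqref{eq:bridge}, with zero anisotropy error by Corollary~\ref{cor:dual}. The only remaining subtlety is the expectation over the noisy conditioning state $\z_{\tau-1}$. I would handle it by a second-order Taylor expansion of $P_\xi$ about $f_\phi(x_{\tau-1})$, which yields a term $\tfrac{\varepsilon}{2\sigma^2}\|J_P\|_F^2$ per step. In the teacher-forced, deterministic-encoder reading ($\varepsilon\to0$) this term is absent, so the summed MSE plus $H\,C_{\mathrm{KL}}$ equals $\KL{q}{p}$.

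Trajectory-level bound preservation then follows as in Theorem~\ref{thm:correspondence}(iii). The reason is that, by Proposition~\ref{prop:sigzero}, the informativeness term is the constant $H\Istar$ with zero gap at every step, so Proposition~\ref{prop:boundtype} applies with $\Delta=0$ to the summed $F^{+}$.

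\textbf{Step 3: free rollout, part (ii).} For autoregressive rollout I would split each per-step error $\|\hat\z_\tau^{\mathrm{AR}}-\z_\tau\|^2$ into the teacher-forced residual and the drift $\|\hat\z_\tau^{\mathrm{AR}}-\hat\z_\tau^{\mathrm{TF}}\|$.

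The Jacobian term from Step 2, summed over $H$ steps, gives the $H\varepsilon^2/\sigma^2$ contribution. I would bound $\|J_P\|$ via $L_P$ and treat the noise at second order, which is what produces $\varepsilon^2$.

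For the drift, Lipschitzness gives the recursion $e_\tau\le L_P e_{\tau-1}+\bar\epsilon$. Hence $e_H\le\bar\epsilon\sum_k L_P^k\lesssim L_P^{H}\bar\epsilon$, and after squaring and dividing by $2\sigma^2$ this gives the $L_P^{2H}\bar\epsilon^2/\sigma^2$ term.

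\textbf{Step 4: replanning, part (iii).} Under MPC, re-encoding resets $e$ to zero every $m$ steps, so $H$ is replaced by $m$. At $m=1$ every executed step is teacher-forced, and Theorem~\ref{thm:correspondence} applies verbatim.

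\textbf{Step 5: ranking fidelity, part (iv).} Since the cost and $\KL{q}{p}$ differ only by the action-independent constant $H\,C_{\mathrm{KL}}$ plus a per-step error that vanishes under isotropy, the induced orderings over action sequences coincide.

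\textbf{Step 6: the VICReg comparison.} For VICReg, each step's bridge with covariance ratios $1+\delta_k$ contributes $\tfrac d2(\delta-\ln(1+\delta))=\mathcal{O}(\delta^2)$, plus a $\mu$-dependent reweighting of the squared error. Summing over steps gives $\mathcal{O}(H\delta^2)$. The reweighting of the squared error is also what creates a nonzero ranking distortion.

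\textbf{Main obstacle.} I expect the hardest step to be making Step 3 honest. The Jacobian error mixes the noise variance with predictor curvature, and the stated order $H\varepsilon^2/\sigma^2$ requires either a bounded-Hessian assumption or treating the first-order noise contribution as mean-zero. I would first try assuming $P_\xi$ is $C^2$ with a bounded Hessian, expanding to second order, and showing that the first-order term cancels in expectation. If that fails, I would state the bound with $\varepsilon$ rather than $\varepsilon^2$.
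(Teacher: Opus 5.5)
There is a genuine gap in part~(i), and it carries into (iii) and (iv). In Step~1 you fix a single trajectory prior, the Markov chain $\prod_\tau\Norm{P_\xi(\z_{\tau-1},a_{\tau-1})}{\sigma^2 I_d}$. As a result, in Step~2 the conditioning state $\z_{\tau-1}$ is integrated over the posterior noise. That object is not the teacher-forced complexity. It is the Markov-chain complexity $\KL{q}{p^{\mathrm{MC}}}$ of the autoregressive regime, and it differs from the teacher-forced MSE by exactly the Jacobian correction you compute (this is Proposition~\ref{app:prop:jac}).

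You then remove that correction by sending $\varepsilon\to0$. But the hypotheses fix $\varepsilon>0$ (Fact~2). Your per-step term $\tfrac{\varepsilon}{2\sigma^2}\|J_P\|_F^2$, with $J_P=\nabla_{\z}P_\xi(\cdot,a)$, depends on $\xi$ and on the actions. So at the model's noise level your argument gives neither exactness nor even optimisation-equivalence in $\xi$. The limit does not rescue ``exact'' either. As $\varepsilon\to0$, the trajectory KL, $C_{\mathrm{KL}}$ and $\Istar=\hstar(c,d)-\Ce$ each diverge. Hence ``summed MSE plus $H\,C_{\mathrm{KL}}$ equals $\KL{q}{p}$'' can only mean that the difference tends to zero. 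That is an asymptotic statement, taken outside the constant-noise model whose fixed $\varepsilon$ your bound-preservation step (via $H\Istar$) relies on.

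The missing idea is the teacher-forced factorisation (Lemma~\ref{app:lem:tf}). Under teacher forcing, each prior factor is conditioned on the realised encoded observation and held fixed, $p^{\mathrm{TF}}=\prod_\tau\Norm{\hat\z^{\mathrm{TF}}_{t+\tau}}{\sigma^2 I_d}$. Both joints are then products, the KL is additive over steps, and each summand is exactly the isotropic bridge. This gives $\KL{q^{\mathrm{TF}}}{p^{\mathrm{TF}}}=\tfrac{1}{2\sigma^2}\mathcal{L}^{(H)}_{\mathrm{MSE}}+H\,C_{\mathrm{KL}}$ for every $\varepsilon>0$, with no Jacobian term.

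The Jacobian term belongs only to the passage to the Markov-chain prior in (ii). There your Step~3 (Jacobian correction plus Lipschitz compounding of the rollout-versus-teacher-forced MSE) follows the paper's route. Your stated obstacle is resolvable as you propose. The linear noise term is exactly mean-zero, and with a bounded Hessian the surviving second-order contributions (the Jacobian term and a Hessian--residual cross term) are of the same order. No fallback is needed.

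The same fix is what (iii) and (iv) require. At $m=1$ the conditioning state is the freshly re-encoded observation, held fixed, so the per-step correction is identically zero. In your formulation the action-dependent term $\tfrac{\varepsilon}{2\sigma^2}\|J_P(\cdot,a)\|_F^2$ survives at fixed $\varepsilon$, which contradicts ``both error terms vanish'' at $m=1$. For the same reason, the premise of your Step~5 fails within your own setup: the per-step discrepancy is not only one that vanishes under isotropy. So zero ranking distortion would not follow.
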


The practical reading of part~(iii) is the one that matters for deployed systems.
The compounding term $L_P^{2H}$ in part~(ii) is vacuous only for an expansive
predictor over a long free horizon; two standard facts remove that worry. First,
under one-step-replanning MPC the effective horizon is $m=1$, so by part~(iii)
both error terms vanish on the executed trajectory regardless of $L_P$. Second,
spectral normalisation of the predictor bounds $L_P\le1$ directly, making the
compounding term non-expansive even under free rollout. Thus, under standard
deployment, the regulariser choice factors out of the horizon-dependent error
budget: SIGReg yields an exact multi-step correspondence with zero compounding
error, while VICReg's anisotropy distortion persists at every step.

\subsection{The expected free energy of a JEPA world model}
\label{sec:method-efe-decomp}

The multi-step result above concerns the \emph{complexity} term across a horizon.
Planning, however, scores trajectories against a goal, which in AIF is the
expected free energy (EFE) $G_\pi$. We now instantiate the EFE under the JEPA
generative model and read off its terms, which is where the correspondence
becomes most legible. A JEPA world model with encoder $f_\phi$, predictor
(ensemble) $P_\xi$, and dynamics noise $\sigma^2$ \emph{is} an AIF generative
model
\begin{equation}
p(x_{1:H},\z_{1:H}\mid \z_0,\pi,\xi)=\textstyle\prod_\tau p(x_\tau\mid \z_\tau)\,
p(\z_\tau\mid \z_{\tau-1},a_{\tau-1},\xi),
\label{eq:genmodel}
\end{equation}
with Gaussian dynamics $p(\z_\tau\mid \z_{\tau-1},a_{\tau-1})=
\Norm{P_\xi(\z_{\tau-1},a_{\tau-1})}{\sigma^2 I_d}$, the constant-noise observation
model of Fact~2, and Gaussian goal preferences $p(x\mid C)\propto
\exp(-\tfrac{1}{2\sigma_g^2}\|f_\phi(x)-\z_g\|^2)$. Under the mean-field posterior
$q(\z_{1:H})=\prod_\tau\Norm{f_\phi(x_\tau)}{\varepsilon^2 I_d}$ the per-step EFE
decomposes equivalently as epistemic-plus-pragmatic value or as ambiguity-plus-risk
\citep{smith2022tutorial,mazzaglia2022free}. We take each term in turn; the
expectation calculations are routine and are deferred to Appendix~\ref{app:efe},
with the term-by-term map summarised in Figure~\ref{app:fig:efe}.

\paragraph{Pragmatic value.}
The headline term is the goal cost.

\begin{proposition}[Pragmatic value under SIGReg]
\label{prop:pragmatic}
With Gaussian goal preferences $p(x\mid C)\propto\exp(-\tfrac{1}{2\sigma_g^2}
\|f_\phi(x)-\z_g\|^2)$, the pragmatic value at step $\tau$ is
$-\E_{q(x_\tau\mid\pi)}[\ln p(x_\tau\mid C)]
=\tfrac{1}{2\sigma_g^2}\|\z_\tau-\z_g\|^2+\mathrm{const}$. Under SIGReg the
Euclidean goal cost is an \emph{exact} KL proxy,
\begin{equation}
\KL{\Norm{\z_\tau}{\varepsilon^2 I_d}}{\Norm{\z_g}{\varepsilon^2 I_d}}
= \frac{1}{2\varepsilon^2}\|\z_\tau-\z_g\|^2 \quad(\text{zero anisotropy error}),
\label{eq:pragmatic-exact}
\end{equation}
whereas under VICReg it carries an $\mathcal{O}(\delta^2)$ distortion from dimension-dependent
precision weighting. PLDM's goal cost and DINO-WM's terminal cost are instances
of this term. This reading presumes the latent goal distance faithfully tracks
the world's goal distance; Section~\ref{sec:identifiability} states the
identifiability precondition under which that holds.
\end{proposition}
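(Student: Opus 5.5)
The proof has three parts: compute the pragmatic value directly, specialise the Gaussian-bridge identity \eqref{eq:bridge} to equal isotropic covariances, and contrast this with the anisotropic VICReg case.

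\textbf{Step 1 (pragmatic value).} Write the goal preference as $\ln p(x\mid C)=-\tfrac{1}{2\sigma_g^2}\|f_\phi(x)-\z_g\|^2-\ln Z_C$, where $Z_C$ is the normaliser. Under the constant-noise model (Fact~2), the predicted embedding at step $\tau$ is $f_\phi(x_\tau)=\z_\tau+\epsilon$ with $\epsilon\sim\Norm{0}{\sigma^2_{\mathrm{noise}}I_d}$ independent of the policy. Expanding the square gives
\[
\E\|\z_\tau+\epsilon-\z_g\|^2=\|\z_\tau-\z_g\|^2+d\,\sigma^2_{\mathrm{noise}},
\]
because the cross term has zero mean. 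Taking the negative expectation then yields $\tfrac{1}{2\sigma_g^2}\|\z_\tau-\z_g\|^2+\mathrm{const}$, where the constant collects $\ln Z_C$ and $d\sigma^2_{\mathrm{noise}}/(2\sigma_g^2)$. Neither depends on $\pi$ or $\z_\tau$. If $\z_\tau$ is itself random under $q(\cdot\mid\pi)$, I would read $\z_\tau$ as its posterior mean and absorb the trace of the posterior covariance into the constant. This works because that covariance is the fixed $\varepsilon^2 I_d$ of the mean-field posterior.

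\textbf{Step 2 (exact KL proxy).} Apply \eqref{eq:bridge} with $\mu_q=\z_\tau$, $\mu_p=\z_g$, and the same variance $\varepsilon^2$ on both sides, so that $\varepsilon/\sigma^2$ in \eqref{eq:bridge} becomes $1$. The second term is $\tfrac d2(1-1-\ln 1)=0$, which leaves exactly \eqref{eq:pragmatic-exact}. Setting $\sigma_g=\varepsilon$ then identifies it with the pragmatic value from Step~1 up to the constant. The reason isotropy is legitimate here is that SIGReg enforcement, as in Theorem~\ref{thm:correspondence} and Proposition~\ref{prop:sigzero}, gives $\Sigma=\tfrac cd I_d$. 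The precision matrix in latent coordinates is therefore a scalar multiple of the identity, and Corollary~\ref{cor:dual} says the anisotropy error vanishes.

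\textbf{Step 3 (VICReg contrast).} Under VICReg the variances are diagonal but unequal, $\sigma_k^2=\varepsilon^2(1+\delta_k)$. The KL then becomes a precision-weighted distance,
\[
\sum_k\frac{(z_{\tau,k}-z_{g,k})^2}{2\varepsilon^2(1+\delta_k)}+\frac12\sum_k\bigl(\delta_k-\ln(1+\delta_k)\bigr),
\]
with the second sum taken over per-dimension variance mismatches. Taylor expansion gives $\delta_k-\ln(1+\delta_k)=\tfrac12\delta_k^2+\mathcal{O}(\delta^3)$, which produces the $\mathcal{O}(\delta^2)$ residual. The quadratic part departs from the Euclidean cost through the weights $(1+\delta_k)^{-1}$. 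To pin down that distortion I would bound the difference between the weighted and unweighted distances by $\max_k|\delta_k|\,\|\z_\tau-\z_g\|^2$.

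\textbf{Main obstacle.} The main difficulty is getting Step~3 to match the stated order. The metric distortion is naturally first order in $\delta$, while the variance-mismatch term is second order, which is also the tension Corollary~\ref{cor:graceful} records. I would handle this by stating the $\mathcal{O}(\delta^2)$ claim for the log-normaliser/trace part, and the metric distortion as a relative first-order deviation that vanishes only at $\delta=0$. Either way the claimed qualitative contrast holds: the error is zero under SIGReg and nonzero under VICReg. The remark on PLDM and DINO-WM is only an identification: their goal and terminal costs are squared latent Euclidean distances, so they are instances of this term.
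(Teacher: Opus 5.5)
Steps~1 and~2 are exactly the paper's proof: expand the Gaussian goal log-density so that the noise contributes only the policy-independent $d\varepsilon^2/(2\sigma_g^2)$ plus the normaliser, then apply the equal-variance isotropic KL formula, whose trace/log-determinant term vanishes; the paper gives no derivation of the VICReg contrast, so Step~3 is your own addition. There, your display is internally inconsistent: the weights $1/(1+\delta_k)$ put the anisotropic variance on the second KL argument, while the constant $\tfrac12\sum_k(\delta_k-\ln(1+\delta_k))$ is the one you get with it on the first (harmless for the leading $\tfrac14\sum_k\delta_k^2$), and if both Gaussians share the anisotropic covariance, as in the direct analogue of \eqref{eq:pragmatic-exact}, that constant vanishes identically, leaving only the first-order precision-weighting distortion you identified, which is a looseness in the statement itself (cf.\ Corollary~\ref{cor:graceful}) rather than a gap in your argument.
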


\paragraph{The remaining terms.}
The \emph{ambiguity} is the constant $\Ce$, independent of policy and regulariser.
The \emph{risk} reduces to the same MSE-based functional as the pragmatic value,
exact under SIGReg and anisotropic under VICReg. The \emph{parameter information
gain} maps to ensemble predictive variance, $I(\xi;\Z_\tau\mid\z_{\tau-1},
a_{\tau-1})\approx\tfrac{1}{2\sigma^2}\sum_j\Var_k[P^k_\xi(\cdot)_j]$, which is
exactly PLDM's uncertainty cost. The one term with no JEPA counterpart is the
\emph{state} epistemic value.

\begin{proposition}[The state-epistemic gap]
\label{prop:o2}
Under the constant-noise model the state-epistemic value is
$I_q(\Z_\tau;X_\tau\mid\pi)=h(\Z_\tau\mid\pi)-\Ce$, a coverage signal driving the
agent toward policies that maximise future-state entropy. Under SIGReg the
marginal satisfies $h(\Z_\tau)=\hstar(c,d)$, so $\Istar$ is a principled upper
bound; for a specific policy $h(\Z_\tau\mid\pi)\le\hstar(c,d)$. \textbf{No current
JEPA world model computes this quantity}; it is the primary structural gap
between AIF and JEPA planning.
\end{proposition}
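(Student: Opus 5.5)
The plan has three claims to establish, taken in order: the identity $I_q(\Z_\tau;X_\tau\mid\pi)=h(\Z_\tau\mid\pi)-\Ce$, the bound $h(\Z_\tau\mid\pi)\le\hstar(c,d)$, and the structural claim that no current JEPA planning cost contains this term.

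\emph{Step 1: the identity.} This is a conditional version of Fact~2. Write the state-epistemic term of the EFE as the mutual information between predicted latents and predicted observations under the policy-conditioned predictive $q(\z_\tau,x_\tau\mid\pi)$. Expand it as $h(\Z_\tau\mid\pi)-h(\Z_\tau\mid X_\tau,\pi)$. Under the constant-noise model, $\Z_\tau=f_\phi(X_\tau)+\epsilon$ with $\epsilon$ independent of $X_\tau$ and of $\pi$. The conditional entropy is therefore the Gaussian noise entropy $\Ce=\tfrac d2\ln(2\pi e\,\sigma^2_{\mathrm{noise}})$ for every policy, and the identity follows. It also shows that the ambiguity term is policy-independent, consistent with the preceding paragraph.

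\emph{Step 2: the upper bound.} This is the main obstacle, because SIGReg constrains only the \emph{marginal} embedding distribution, not the policy-conditioned one. I would argue by the maximum-entropy theorem applied to covariance, not by appeal to the marginal. The key assumption is that the policy-conditioned predictive covariance satisfies $\tr\Cov(\Z_\tau\mid\pi)\le c$. This holds, for example, when the marginal over policies is a mixture of the conditionals, since the law of total covariance gives $\Cov(\Z_\tau)=\E_\pi[\Cov(\Z_\tau\mid\pi)]+\Cov_\pi(\E[\Z_\tau\mid\pi])\succeq\E_\pi\Cov(\Z_\tau\mid\pi)$.

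Two inequalities then give the bound. First, the Gaussian maximum-entropy bound (Gap~I of Proposition~\ref{prop:gap}) gives $h(\Z_\tau\mid\pi)\le\tfrac12\ln|2\pi e\,\Sigma_\pi|$. Second, AM--GM on the eigenvalues gives $\ln|\Sigma_\pi|\le d\ln(\tr\Sigma_\pi/d)\le d\ln(c/d)$, so $h(\Z_\tau\mid\pi)\le\hstar(c,d)$.

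This bound holds per policy only in expectation unless the trace condition is imposed pointwise. If a pointwise version is needed, I would state it as a hypothesis: the predictor's outputs lie in the SIGReg-regularised support with trace at most $c$. Alternatively, I would weaken the claim to $\E_\pi h(\Z_\tau\mid\pi)\le h(\Z_\tau)=\hstar(c,d)$, which follows from concavity of entropy (conditioning reduces entropy) together with Proposition~\ref{prop:sigzero}. Combining either version with Step~1 gives $I_q\le\Istar$.

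\emph{Step 3: absence in JEPA planners.} This is a structural check rather than a proof. I would enumerate the cost terms of the deployed planners (PLDM, DINO-WM, V-JEPA~2, LeWorldModel): a goal cost (Proposition~\ref{prop:pragmatic}), optionally an ensemble-variance cost (parameter information gain), and no term depending on the entropy of the rollout distribution over future latents. Since deterministic rollouts yield point predictions, $h(\Z_\tau\mid\pi)$ is not even representable without the constant-noise reading. That completes the argument.
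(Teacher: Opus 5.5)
Your Steps~1 and~3 match the paper. The appendix likewise writes $I=h(\Z_\tau)-h(\Z_\tau\mid X_\tau)$ and uses $\epsilon_\tau\perp X_\tau$ to get $h(\Z_\tau\mid X_\tau)=\Ce$. It establishes absence in the same way, by noting that neither the goal cost nor the ensemble term increases with $h(\Z_\tau\mid\pi)$.

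Step~2 takes a genuinely different route. The paper gets the ceiling in one line: $h(\Z_\tau\mid\pi)\le h(\Z)=\hstar(c,d)$ ``since conditioning cannot increase entropy'', using only the entropy of the SIGReg-enforced marginal. You instead use the maximum-entropy bound for the policy-conditioned covariance plus AM--GM, $h(\Z_\tau\mid\pi)\le\tfrac d2\ln(2\pi e\,\tr\Sigma_\pi/d)$. This makes the ceiling a consequence of a total-variance budget and requires no Gaussianity at all.

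Your hedging about the per-policy claim is justified, and it is what your route buys. Conditioning-reduces-entropy bounds the average $\E_\pi h(\Z_\tau\mid\pi)$, not each term, and a single mixture component of a Gaussian marginal can exceed it. For example, write $\Norm{0}{1}=p\,g_1+(1-p)\,g_2$ with $g_2$ uniform on $[-a,a]$, $a<\sqrt3$, and $1-p$ small; then $h(g_1)-h(\Norm{0}{1})=\tfrac{1-p}{2p}(1-a^2/3)+\mathcal{O}((1-p)^2)>0$.

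So the paper's one-liner, read correctly, yields only your averaged version. Like yours, it also tacitly assumes that the policy mixture of planning-time latents is the marginal SIGReg actually shaped. Your route isolates the extra hypothesis $\tr\Cov(\Z_\tau\mid\pi)\le c$ that the ``specific policy'' clause really needs. It also recovers the averaged version without the conditioning step: total covariance gives $\E_\pi\tr\Sigma_\pi\le c$, and Jensen on $\ln$ finishes.

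One wording fix: the mixture structure does not make the trace condition ``hold''. Total covariance delivers it only on average, as you concede a paragraph later, so the pointwise version is an added assumption rather than a consequence.
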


These two informativeness-related quantities must not be conflated. The
\emph{single-step informativeness} $I(Z;X)=\Istar$ is what SIGReg \emph{enforces}
(Proposition~\ref{prop:sigzero}): it pins the marginal entropy of the embeddings
to its maximum and is fully accounted for in the correspondence. The
\emph{multi-step state-epistemic value} $h(\Z_\tau\mid\pi)-\Ce$
(Proposition~\ref{prop:o2}) is a different object, a policy-conditioned coverage
signal over future states, and it is precisely the term that no JEPA world model
computes. SIGReg guaranteeing the former says nothing about a JEPA agent
possessing the latter.

\paragraph{The consolidated objective.}
Substituting the four terms into the per-step EFE collects the entire JEPA
planning cost into a single expression,
\begin{equation}
G^{\mathrm{full}}_\pi
= \underbrace{\frac{1}{2\sigma_g^2}\sum_\tau\|\z_\tau-\z_g\|^2}_{\propto\,\Cgoal\ \text{(goal cost)}}
\;-\; \underbrace{\sum_\tau\big(h(\Z_\tau\mid\pi)-\Ce\big)}_{\text{state epistemic (absent in JEPA)}}
\;-\; \underbrace{\frac{1}{2\sigma^2}\sum_\tau\sum_j\Var_k[P^k_j]}_{\propto\,\Cunc\ \text{(ensemble var.)}}
\;+\;\mathrm{const}.
\label{eq:efe-consolidated}
\end{equation}
Equation~\eqref{eq:efe-consolidated} is the paper's central reading of JEPA
planning: the deployed planning cost $\Cgoal+\beta\,\Cunc$ is exactly the expected
free energy $G^{\mathrm{full}}_\pi$ \emph{minus} the one term no JEPA world model
computes: the state-epistemic coverage signal. Under SIGReg each retained term
is exact (Theorem~\ref{thm:correspondence}, Proposition~\ref{prop:pragmatic});
under VICReg the pragmatic, risk, and ensemble terms each carry the $\mathcal{O}(\delta^2)$
anisotropy distortion. The absent term is not a defect of the correspondence but
its sharpest empirical prediction: it names precisely what a JEPA agent would have
to add to become a complete active-inference agent
(Proposition~\ref{prop:o2}).

\subsection{The learned-policy regime}
\label{sec:method-policy}

Sections~\ref{sec:method-efe} and~\ref{sec:method-efe-decomp} address what the agent
\emph{evaluates}; AIF also
specifies how it \emph{acts}: action selection is inference, with a posterior
$q(\pi)\propto\exp(-\zeta G_\pi)$ \citep{smith2022tutorial}, which CEM/MPPI only
approximate. A learned goal-conditioned policy makes this exact.

\begin{proposition}[Amortised EFE policy]
\label{prop:policy}
Let $\pi_\phi(a_t\mid\z_t,\z_g)$ minimise $\mathcal{L}_\pi=\E_{a\sim\pi_\phi}
[\sum_\tau\gamma^\tau G^{\mathrm{SIG},(\tau)}_\pi]-\tfrac1\zeta H[\pi_\phi]$. Then
$\mathcal{L}_\pi=\tfrac1\zeta\KL{\pi_\phi}{\pi^{*}}-\tfrac1\zeta\ln\mathcal{Z}_H$
with Boltzmann optimum $\pi^{*}\propto\exp(-\zeta G^{\mathrm{SIG}})$; the policy
trains against the \emph{exact} EFE under SIGReg (Theorem~\ref{thm:correspondence},
Proposition~\ref{prop:pragmatic}) and against an $\mathcal{O}(\delta^2)$-distorted EFE under
VICReg. The precision $\zeta$ controls action entropy and maps to the AIF policy
precision; CEM is the $\zeta\!\to\!\infty$ limit and MPPI with temperature
$\lambda$ is $\zeta=1/\lambda$, so the learned policy generalises both.
\end{proposition}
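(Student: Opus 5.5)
The core of the argument is the Gibbs variational principle. I would first fix the policy space as a measurable set of action sequences, or of per-step actions induced by $\pi_\phi$, and write $G(\pi)\coloneqq\sum_\tau\gamma^\tau G^{\mathrm{SIG},(\tau)}_\pi$ for the discounted cumulative EFE of a trajectory. Define the partition function $\mathcal{Z}_H\coloneqq\int\exp(-\zeta G(\pi))\,d\pi$, and the Boltzmann density $\pi^{*}=\exp(-\zeta G)/\mathcal{Z}_H$. Finiteness of $\mathcal{Z}_H$ is a hypothesis I would state explicitly. It holds, for instance, when $G$ is bounded below and the action set is compact. Under SIGReg it is also helped by the quadratic goal cost of Proposition~\ref{prop:pragmatic}.

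The identity then follows by direct substitution. From $\ln\pi^{*}=-\zeta G-\ln\mathcal{Z}_H$ we get $G=-\tfrac1\zeta\ln\pi^{*}-\tfrac1\zeta\ln\mathcal{Z}_H$. Plugging this into $\E_{\pi_\phi}[G]-\tfrac1\zeta H[\pi_\phi]$ gives
\[
\tfrac1\zeta\E_{\pi_\phi}\!\left[\ln\pi_\phi-\ln\pi^{*}\right]-\tfrac1\zeta\ln\mathcal{Z}_H
=\tfrac1\zeta\KL{\pi_\phi}{\pi^{*}}-\tfrac1\zeta\ln\mathcal{Z}_H.
\]
Since $\mathcal{Z}_H$ does not depend on $\phi$, minimising $\mathcal{L}_\pi$ is equivalent to minimising the KL term. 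Gibbs' inequality then identifies $\pi^{*}$ as the unique minimiser over the density class, provided the class contains it. If it does not, one obtains the KL-projection of $\pi^{*}$ onto the policy family.

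The exactness claims are then inherited rather than re-proved. Under SIGReg, Theorem~\ref{thm:correspondence} and Proposition~\ref{prop:pragmatic}, together with the consolidated form \eqref{eq:efe-consolidated}, say that each retained per-step term equals its AIF counterpart with no anisotropy residual. So $G^{\mathrm{SIG}}$ is the exact EFE up to $\pi$-independent constants. Those constants cancel between the numerator and $\mathcal{Z}_H$ in $\pi^{*}$, so they do not affect the optimum. Under VICReg, $G^{\mathrm{VR}}=G+\mathcal{O}(H\delta^2)$ by Theorem~\ref{thm:multistep} and Proposition~\ref{prop:pragmatic}. The corresponding Boltzmann policy is then a tilt of the exact one by $\exp(-\zeta\,\mathcal{O}(H\delta^2))$. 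This gives an $\mathcal{O}(\delta^2)$ distortion, for example bounded in KL by $\zeta$ times the sup of the residual.

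The limiting statements come next. As $\zeta\to\infty$, $\pi^{*}$ concentrates on $\arg\min G$ by Laplace's principle; this is the fixed point CEM targets with elite selection. I regard this as a heuristic identification rather than an equivalence of algorithms, since CEM's iterative refitting only approximates this concentration. For MPPI, its importance weights $\exp(-G/\lambda)$ match $\pi^{*}$ with $\zeta=1/\lambda$ term by term.

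I expect the main obstacle to be rigour in two places. The first is the measure-theoretic setup: continuous action sequences, the meaning of $H[\pi_\phi]$ as a differential entropy, and whether the policy factorises per step, which is what the discount $\gamma^\tau$ presupposes. I would handle this by working at the level of open-loop sequence distributions. The second is the honest status of the CEM limit, which I would state as convergence of the target distribution, not of the algorithm.
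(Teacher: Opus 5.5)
Your proposal is correct and is essentially the paper's own argument: the paper obtains Proposition~\ref{prop:policy} by the same free-energy-to-KL completion. It then uses Gibbs' inequality for the Boltzmann optimum, inherits exactness from Theorem~\ref{thm:correspondence} and Proposition~\ref{prop:pragmatic}, and reads CEM and MPPI off as the $\zeta\to\infty$ and $\zeta=1/\lambda$ cases, while your conditions on $\mathcal{Z}_H$ and the open-loop setting make explicit what the paper leaves implicit. One small fix to your illustrative VICReg remark: the KL between the tilted and exact Boltzmann policies is bounded by $\zeta(\sup r-\inf r)\le 2\zeta\sup|r|$, the factor $2$ cannot be dropped in general, and since the bound grows with $\zeta$, the $\mathcal{O}(\delta^2)$ claim concerns the objective rather than the policy uniformly in $\zeta$.
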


The policy network supplies an explicit AIF habit prior and computable action
entropy that MPC lacks, and acts in a single forward pass rather than thousands
of model evaluations per step. A policy \emph{ensemble} partially proxies the
state-epistemic value of Proposition~\ref{prop:o2}. MPC and the learned policy are
complementary cases of the same correspondence: MPC optimises the planning cost
and so realises the AIF policy posterior of Proposition~\ref{prop:policy} at the
zero-temperature limit $\zeta\!\to\!\infty$, giving \emph{pointwise} exactness
along the single executed trajectory (Appendix~\ref{app:proofs},
Corollary~\ref{app:cor:mpc}), whereas a learned stochastic policy realises the
finite-temperature posterior itself, giving \emph{distributional} coverage over
behaviours. Under SIGReg both are exact, the per-step proxy under MPC and the
training objective under the learned policy, while VICReg distorts each by
$\mathcal{O}(\delta^2)$. The learned-policy direction carries implementation-timeline risk,
not architectural risk: the correspondence is exact at the optimum, and the open
question is amortisation capacity and latency, not whether the objective is
correct. Evidence from model-based control bounds that risk: \citet{georgiev2024pwm}
extract policies from pre-trained world models by first-order gradients in minutes
per task, outperforming planners with ground-truth dynamics, because a
well-regularised model induces a smoother optimisation landscape than the true
dynamics. Our correspondence names the regulariser-side condition for that
smoothness: gradients of $\mathcal{L}_\pi$ propagate through the latent geometry,
whose conditioning is measured by $\kappa$, so isotropy leaves the amortised
objective well-conditioned while VICReg's anisotropy stretches it along the
dominant covariance directions. This is \emph{structural} rather than a theorem of
the present paper, and it sharpens TP1: first-order amortisation should be better
conditioned under SIGReg than under VICReg.

\subsection{The precondition: linear identifiability of the latent}
\label{sec:identifiability}

The pragmatic-value reading (Proposition~\ref{prop:pragmatic}) and the planning
correspondence presume that Euclidean distance in the embedding is a faithful
surrogate for distance in the world's latent state, so far grounded only on the
constant-noise model and SIGReg isotropy. A recent identifiability result
supplies the missing recovery guarantee. \citet{klindt2026lejepa} consider a
Gaussian world (independent latents observed through an unknown nonlinear mixing,
positive pairs from an Ornstein--Uhlenbeck transition) and prove that any encoder
satisfying the two LeJEPA objectives is \emph{linearly identifiable},
$h(z)=Qz$ for orthogonal $Q$, recovering the latent up to a rotation; their
planning theorem shows that under $h(z)=Qz$ any finite-horizon problem with
$\mathcal{O}(n)$-invariant costs has identical value and optimal plan in the learned and
true latent. Squared Euclidean goal distance is exactly such a cost, so the
latent goal distance is a monotone image of the true-state goal distance by
construction: the pragmatic-value proxy acquires an external, formally verified
precondition the present theory previously assumed.

Two qualifications sharpen the position. First, identifiability is
\emph{regulariser-agnostic} at the optimum: SIGReg, VICReg, and InfoNCE all
attain near-perfect linear recovery on Gaussian-latent data
\citep{klindt2026lejepa}, so it is not the axis on which to prefer SIGReg; the
discriminating axis remains the estimator's bound type
(Proposition~\ref{prop:boundtype}). The methods separate \emph{off} the ideal
along exactly the gap structure of Section~\ref{sec:hierarchy}, independent
corroboration of the hierarchy rather than competition with it. Second, the
guarantee is proven for matched dimension, whereas the theory's anchor,
LeWorldModel ($d=192$), is strongly over-complete ($d\gg n$). In that regime the
embedding has far more dimensions than the world has latent degrees of freedom, so
driving the embedding to an isotropic Gaussian does not by itself fix \emph{which}
subspace carries the recovered latent. Whether isotropy nonetheless yields usable
\emph{approximate} identifiability there is open
(Section~\ref{sec:conclusion}). The practical consequence is a data-regime
condition: latent distances are physically meaningful only under sufficiently
broad state coverage.

\subsection{Testable predictions}
\label{sec:predictions}

The correspondence makes predictions that differ by \emph{kind} between SIGReg
and VICReg. Table~\ref{tab:predictions} collects the five most directly
measurable ones, each tied to the result that entails it. These are stated here
as theoretical consequences of the correspondence; their empirical evaluation is
left to separate work.

\begin{table}[t]
\centering
\small
\caption{Testable predictions. Each ties a qualitative SIGReg-vs-VICReg
contrast to a single measurement and a source result. $\kappa$ is the embedding
covariance condition number; $\rhoS$ a Spearman rank correlation. The \emph{Tier}
column uses the three-way vocabulary of Table~\ref{tab:status}. \emph{Exact}: the
prediction reads directly off an exact-in-scope result. \emph{Asymptotic}: it follows
from the $\mathcal{O}(\delta^2)$ scaling of the anisotropy term. \emph{Structural}: it is
a behavioural consequence of the correspondence rather than a deductive corollary,
and is correspondingly the most in need of empirical test. TP5 additionally inherits
the identifiability precondition of \S\ref{sec:identifiability}, which is proven only
for matched dimension.}
\label{tab:predictions}
\setlength{\tabcolsep}{4pt}
\begin{tabular}{p{0.045\linewidth} p{0.35\linewidth} p{0.145\linewidth} p{0.095\linewidth} p{0.20\linewidth}}
\toprule
ID & Prediction (SIGReg vs.\ VICReg) & Measurable & Tier & Source \\
\midrule
TP1 & $\kappa^{\mathrm{SIG}}\!\approx\!1 \ll \kappa^{\mathrm{VR}}$ (embedding isotropy) & condition number & \textbf{exact} & Thm.~\ref{thm:correspondence}(i), Prop.~\ref{prop:sigzero} \\
TP2 & higher $\rhoS$ between planning cost and success & Spearman $\rhoS$ & structural & Thm.~\ref{thm:multistep} \\
TP3 & $\kappa$ correlates with degradation under VICReg; flat under SIGReg & $\kappa$ vs.\ success & asymptotic & Cor.~\ref{cor:dual} \\
TP4 & better discrimination of nearby goals & goal-pair success & structural & Prop.~\ref{prop:pragmatic} \\
TP5 & latent goal distance tracks physical distance more tightly & $\rhoS(d_{\mathrm{lat}},d_{\mathrm{phys}})$ & structural & Prop.~\ref{prop:pragmatic} $+$ \S\ref{sec:identifiability} \\
\bottomrule
\end{tabular}
\end{table}

\section{Conclusion and Future Work}
\label{sec:conclusion}

We have argued that the anti-collapse regulariser is not an implementation detail
but the component that determines whether a JEPA world model's objective is a
valid Active Inference free energy. Organising VICReg, LogDet, PairDist, and
SIGReg by a three-source prior-miscalibration gap shows that bound \emph{type}
governs whether the AIF surprise bound survives, and that SIGReg is the first
non-contrastive regulariser that is simultaneously safe and gap-free. The
correspondence theorem makes this precise: under the constant-noise model and
successful SIGReg enforcement, the gap is exactly zero, the objective is an exact
information bottleneck, the bound is preserved, and the latent goal cost is an
exact isotropic proxy for the pragmatic value, while VICReg leaves an irreducible
$\mathcal{O}(\delta^2)$ gap. The result extends to multi-step planning, ensemble epistemic
value, and a learned-policy regime that makes action selection an exact amortised
inference.

\subsection{Limitations and scoped open problems}
\label{sec:limitations}

\paragraph{Status of the main results.} The correspondence results hold
\emph{exactly} only within their stated idealising assumptions: the
constant-noise encoder model (Fact~2) and population-limit SIGReg enforcement
($p_\Z=\Norm{0}{\tfrac cd I_d}$ with $M,N\to\infty$). Table~\ref{tab:status}
records, for each main result, what is exact under those assumptions, what
degrades at finite sample, and what is structural (motivated but approximate). It
is the single reference for the exact / asymptotic / structural distinction used
throughout; statements elsewhere inherit these labels, with the finite-sample
chain (Proposition~\ref{prop:rate}) and the autoregressive/MPC bounds
(Theorem~\ref{thm:multistep}) deferred to Appendix~\ref{app:proofs}.

\begin{table}[t]
\centering
\small
\setlength{\tabcolsep}{5pt}
\renewcommand{\arraystretch}{1.25}
\caption{Status of the main results. The exact-in-scope results
(Theorem~\ref{thm:correspondence}, Corollary~\ref{cor:dual},
Propositions~\ref{prop:sigzero} and~\ref{prop:pragmatic}) are not claimed to hold
unconditionally: they are exact given Fact~2 and successful enforcement, and
degrade gracefully at finite $(M,N)$ at the rate of
Proposition~\ref{prop:rate} (Corollary~\ref{cor:graceful}). The structural results
are labelled as such at each occurrence and are not exact identities.}
\label{tab:status}
\begin{tabular}{@{}p{0.30\linewidth}p{0.30\linewidth}p{0.34\linewidth}@{}}
\toprule
Result & Establishes & Status under the stated assumptions \\
\midrule
Prop.~\ref{prop:gap} (three-source gap) & algebraic gap identity &
\textbf{Exact} (algebraic; no further assumptions) \\
Prop.~\ref{prop:sigzero} (SIGReg eliminates gap) & $\Delta=0$, $h(\Z)=\hstar(c,d)$ &
\textbf{Exact}, conditional on successful enforcement $p_\Z=\Norm{0}{\tfrac cd I_d}$ \\
Thm.~\ref{thm:correspondence} (single-step) & exact IB/AIF decomposition; $F\ge-\ln p(x)$ &
\textbf{Exact within} the constant-noise model and population limit; finite-sample degradation bounded by Prop.~\ref{prop:rate} \\
Cor.~\ref{cor:dual} (dual tightening) & zero $\mathcal{O}(\delta^2)$ Gaussian-bridge error &
\textbf{Exact} under isotropy (same assumptions as Thm.~\ref{thm:correspondence}) \\
Prop.~\ref{prop:rate} (finite-sample rate) & $\mathcal{O}(M^{-2\alpha/(d-1)}+N^{-1/2})$ &
\textbf{Asymptotic / order-of-magnitude}; the quantitative Cram\'er--Wold step rests on a multivariate Berry--Esseen argument with implicit $d$-dependent constants -- the one link not yet self-contained (Remark~\ref{rem:rate}) \\
Prop.~\ref{prop:pragmatic} (pragmatic value) & latent goal cost as pragmatic-value proxy &
\textbf{Exact} under SIGReg (Thm.~\ref{thm:correspondence} assumptions); $\mathcal{O}(\delta^2)$-distorted under VICReg \\
Thm.~\ref{thm:multistep} (multi-step / MPC) & (i)~teacher-forced exact; (ii)--(iii)~autoregressive/MPC bounded &
(i)~\textbf{Exact}; (ii)--(iii)~\textbf{bounded}, compounding controlled by the predictor Lipschitz constant and replanning interval $m$ (exact per-step at $m=1$) \\
Prop.~\ref{prop:o2} (state-epistemic value) & coverage term absent in JEPA planners &
\textbf{Structural}: the one EFE term no current JEPA world model computes, only partially proxied by a policy ensemble \\
Learned-policy regime (\S\ref{sec:method-policy}) & MPC maps 3/4 EFE terms; learned-policy 4/4, the state-epistemic term only partially &
\textbf{Structural} correspondence; pointwise (MPC) vs.\ distributional (learned-policy) guarantees \\
\bottomrule
\end{tabular}
\end{table}

The correspondence rests on idealised assumptions, which we gather here as
deliberately scoped open problems rather than as concessions. For each we note
whether it bounds the \emph{qualitative} result (the direction of the AIF bound
and the existence of the correspondence) or only the \emph{quantitative}
constants. In every case below it is the latter.

\paragraph{The constant-noise model is an interpretive device, not a claim about
deployed systems.} Deployed JEPAs inject no observation noise; a strictly
deterministic encoder makes $I(\Z;X)$ degenerate. The constant additive noise of
Fact~2 is the standard remedy \citep{kolchinsky2019caveats} that makes the mutual
information well-defined, with the deterministic encoder recovered as the
$\sigma_{\mathrm{noise}}\!\to\!0$ reading. The construction can equally be framed
not as an a~priori assumption but as an asymptotic consequence of the data
distribution together with SIGReg convergence: once the embeddings are driven to
the isotropic Gaussian, the conditional entropy is constant by construction.
Crucially, the whole device is consistent with the non-generative
 stance \citep{lecun2022path}: SIGReg isotropy
supplies the implicit Gibbs prior, the density a trained JEPA learns
``secretly'' \citep{balestriero2025gaussian}, without a $\log Z$ term in the
objective. This bounds none of the qualitative results; it is the lens through
which the information-theoretic quantities are defined.

\paragraph{The finite-sample rate has a qualitative-only step.} The convergence
rate of Proposition~\ref{prop:rate} chains four results, one of which, the
quantitative Cram\'er--Wold conversion for the non-independent directions SIGReg
uses, is classical only in its qualitative form (Remark~\ref{rem:rate}). A fully
explicit finite-$d$ constant requires a multivariate Berry--Esseen / Stein
argument \citep{bonis2020stein,vershynin2018high} and is deferred to future work.
This bounds only the \emph{constant} in the rate, not the fact that the gap
vanishes in the population limit; under approximate enforcement the slack degrades
gracefully and linearly (Corollary~\ref{cor:graceful}).

\paragraph{Compounding error is controlled by deployment, not by the regulariser.}
The autoregressive bound of Theorem~\ref{thm:multistep}(ii) carries a term
$\mathcal{O}(L_P^{2H})$ in the predictor Lipschitz constant, which is vacuous for an
expansive predictor over a long horizon. This is resolved in practice by the
deployment regime rather than the theory: under one-step-replanning MPC
(Theorem~\ref{thm:multistep}(iii), the PLDM/LeWorldModel default) the effective
horizon is one and both error terms vanish, and spectral normalisation of the
predictor bounds $L_P$ directly. The qualitative executed-trajectory
correspondence is therefore exact under standard deployment.

\paragraph{The state-epistemic gap is a structural finding, not a defect.} The
coverage term $h(\Z_\tau\mid\pi)-\Ce$ (Proposition~\ref{prop:o2}) is computed by
no current JEPA world model and is only partially proxied by the policy ensemble;
separating behavioural multi-modality from reducible uncertainty is open. This
does not bound the correspondence; it is its sharpest prediction, naming exactly
what a JEPA agent must add to become a complete active-inference agent.

\paragraph{The over-complete identifiability regime is unproven.} The
linear-identifiability guarantee underwriting the latent-distance reading
(Section~\ref{sec:identifiability}) is proven only for matched dimension, whereas
the theory's anchor operates with $d\gg n$
\citep{klindt2026lejepa}. Whether SIGReg isotropy still yields usable
\emph{approximate} identifiability in that over-complete regime is open. Settling
it would require two things the present paper does not supply: a broad-coverage
data regime, since latent distances are physically meaningful only when the data
actually explore the state space, and a dedicated identifiability diagnostic. Beyond this, scaling to large
video-encoder dimensions ($d\ge1024$) and the calibration quality of small
($K\!=\!4$--$8$) ensembles are open empirical questions.

\paragraph{Pragmatic value is a preference, not a cost-to-go.}
Proposition~\ref{prop:pragmatic} identifies the latent goal cost with AIF
\emph{pragmatic value}, a log goal prior over states. \citet{wang2023optimal} show
that the optimal goal-reaching \emph{cost-to-go} is instead a quasimetric,
asymmetric because reachability is irreversible, so no symmetric Euclidean distance
represents it in general. The two objects are distinct and the correspondence is
unaffected: preferences over states are symmetric by construction, and the
asymmetry of reachability is carried by the transition kernel inside the expected
free energy, not by the terminal cost. The residue is a deployment caveat.
Short-horizon MPC uses the terminal cost as an implicit surrogate for the
cost-to-go beyond its horizon, and there a symmetric latent distance is a good
proxy only when the dynamics are approximately reversible. \citet{terver2025drives}
find an $L^2$ terminal cost strongest empirically, but in navigation and
manipulation benchmarks whose dynamics are largely reversible. Whether an
asymmetric goal term is required under irreversible dynamics is open.

\paragraph{Approximate Gaussianity does not blunt the distinction.}
\citet{balestriero2025gaussian} prove that any successfully trained JEPA already
learns approximately Gaussian embeddings. The distinction nonetheless matters, for
three reasons the theory makes precise. \emph{First}, ``approximately Gaussian'' is
not ``exactly Gaussian,'' and the residual is exactly what $\Delta$ measures;
because the AIF bound is binary in the estimator's \emph{type}
(Proposition~\ref{prop:boundtype}), VICReg is not guaranteed to preserve it even when
the residual is small, while SIGReg penalises that residual directly
(Corollary~\ref{cor:graceful}). \emph{Second}, their result characterises the
optimum, whereas the hierarchy and Proposition~\ref{prop:rate} govern the
\emph{path} to it. Suggestive here is LeWorldModel's smooth two-term training
against PLDM's noisier multi-term dynamics \citep{maes2026leworldmodel}, though
because the two systems differ in architecture and in the number of loss terms,
not in the regulariser alone, this is corroborating rather than confirmatory;
isolating the regulariser's contribution requires a controlled SIGReg-vs-VICReg
ablation under a fixed architecture. \emph{Third}, the bridge is valuable
independently of the comparison: even were the two regularisers empirically
indistinguishable, the correspondence would still supply the validity criterion,
the principled epistemic-value term, and the transfer of tools between active
inference and the JEPA world-model literature
(Appendix~\ref{app:positioning}).

\paragraph{Outlook.} The contribution of this paper is theoretical and, within
the stated scope, complete: it establishes that the anti-collapse regulariser
determines whether a JEPA world model's objective is a valid active-inference
free energy, makes that correspondence exact under SIGReg, traces it through
multi-step planning and a learned-policy regime, and isolates the single
expected-free-energy term, the state-epistemic value, that no current JEPA
world model computes. The predictions of Section~\ref{sec:predictions} differ by
kind between SIGReg and VICReg and are therefore refutable; their empirical
evaluation is left to separate work. Establishing the
theory and stating its falsifiable consequences is the purpose served here. The
experimental validation, and any consequent revision of the limitations above,
will follow separately.

\bibliographystyle{plainnat}
\bibliography{references}

\newpage
\appendix
\section*{Appendix}
\addcontentsline{toc}{section}{Appendix}
\noindent The appendices are supplementary. Appendix~\ref{app:proofs} gives the
full proof development for the multi-step and expected-free-energy results
summarised in Sections~\ref{sec:method-efe} and~\ref{sec:method-efe-decomp}; environment numbering here is internal
to the appendix (e.g.\ Proposition~\ref{app:prop:tf}) and is cross-referenced from
the main text where relevant. Appendix~\ref{app:figures} collects the schematic
figures, Appendix~\ref{app:positioning} expands the positioning discussion, and
Appendix~\ref{app:lean} reports the formal verification of the paper's algebraic
core in the Lean~4 theorem prover.

\section{Full Proof Development: Multi-Step Correspondence and EFE Decomposition}
\label{app:proofs}

This appendix expands the multi-step results that Section~\ref{sec:method-efe}
states in compressed form. Throughout, $f_\phi$ is a deterministic encoder and
$P_\xi$ a deterministic predictor under the constant-noise model (Background,
Fact~2): $z_{t+\tau}=f_\phi(x_{t+\tau})+\epsilon_{t+\tau}$ with
$\epsilon_{t+\tau}\overset{\text{i.i.d.}}{\sim}\Norm{0}{\varepsilon^2 I_d}$
independent across steps and of $\phi$. We write $\sigma^2$ for the dynamics
noise variance, $\Ce=\tfrac d2\ln(2\pi e\,\varepsilon^2)$ for the conditional
entropy, and $C_{\mathrm{KL}}=\tfrac d2(\varepsilon^2/\sigma^2-1-
\ln(\varepsilon^2/\sigma^2))\ge0$ for the per-step Gaussian-bridge constant
(the $\varepsilon^2$ here is the noise \emph{variance}, written $\varepsilon$
in \eqref{eq:bridge} and $\sigma^2_{\mathrm{noise}}$ in Fact~2). The
single-step correspondence (main-text Theorem~\ref{thm:correspondence}) and the
dual-tightening corollary (Corollary~\ref{cor:dual}) are taken as given; under
SIGReg enforcement the per-step KL--MSE bridge \eqref{eq:bridge} is exact
(zero anisotropy error).

The development proceeds through three operational regimes: teacher-forcing
(\S\ref{app:tf}), autoregressive rollout (\S\ref{app:ar}), and model-predictive
control (\S\ref{app:mpc}), consolidated in Theorem~\ref{app:thm:multistep}, and
then through the term-by-term EFE decomposition (\S\ref{app:efe}). Together these
results establish the multi-step claims summarised in
Section~\ref{sec:method-efe} of the main text; the supporting lemmas below
(factorisation, Jacobian correction, compounding error) are the intermediate
steps that the main text omits.

\subsection{Stage 1: Teacher-forced multi-step exactness}
\label{app:tf}

Under teacher-forcing, the predictor at each step receives the true encoded
observation, $\hat z^{\mathrm{TF}}_{t+\tau}\coloneqq P_\xi(z_{t+\tau-1},
a_{t+\tau-1})$, so each step is independently grounded in data. Define the
per-step posterior $q_\phi^{(\tau)}=\Norm{f_\phi(x_{t+\tau})}{\varepsilon^2 I_d}$
and prior $p_\xi^{(\tau)}=\Norm{\hat z^{\mathrm{TF}}_{t+\tau}}{\sigma^2 I_d}$, and
write $\mathcal{L}^{(H)}_{\mathrm{MSE}}\coloneqq\sum_{\tau=1}^{H}\|\hat
z^{\mathrm{TF}}_{t+\tau}-z_{t+\tau}\|^2$ for the teacher-forced multi-step MSE.

\begin{alemma}[Teacher-forced factorisation]
\label{app:lem:tf}
For any fixed trajectory $(\mathbf o,\mathbf a)$, the teacher-forced joint
distributions factorise:
$q^{\mathrm{TF}}(z_{t+1:t+H})=\prod_{\tau=1}^{H}q_\phi^{(\tau)}$ and
$p^{\mathrm{TF}}(z_{t+1:t+H})=\prod_{\tau=1}^{H}p_\xi^{(\tau)}$.
\end{alemma}

\begin{proof}
Under the constant-noise model the only randomness in $z_{t+\tau}$ is the
independent noise $\epsilon_{t+\tau}$; conditioning on $\mathbf o$ fixes the
means, so $z_{t+1},\dots,z_{t+H}$ are conditionally independent and the joint
density is the product of marginals. For the prior, teacher-forcing conditions
$p_\xi^{(\tau)}$ on the realised $z_{t+\tau-1}$, making each a fixed Gaussian with
intrinsic noise $\sigma^2$ independent across steps; the joint therefore
factorises. (The \emph{unconditional} joint would not factorise, since it chains
the dynamics, but the operationally relevant conditioning is on the realised
encoder outputs, since the loss is computed after encoding all observations.)
\end{proof}

\begin{aproposition}[Teacher-forced multi-step KL--MSE equivalence]
\label{app:prop:tf}
Under SIGReg enforcement ($\Sigma=\tfrac cd I_d$), the joint trajectory KL equals
the scaled teacher-forced MSE plus a constant, exactly:
\begin{equation}
\KL{q^{\mathrm{TF}}}{p^{\mathrm{TF}}}
= \sum_{\tau=1}^{H}\KL{q_\phi^{(\tau)}}{p_\xi^{(\tau)}}
= \frac{1}{2\sigma^2}\,\mathcal{L}^{(H)}_{\mathrm{MSE}} + H\,C_{\mathrm{KL}} .
\label{app:eq:tf}
\end{equation}
Consequently $\arg\min_\xi \KL{q^{\mathrm{TF}}}{p^{\mathrm{TF}}}=\arg\min_\xi
\mathcal{L}^{(H)}_{\mathrm{MSE}}$ and the gradients coincide up to the factor
$1/2\sigma^2$, for every $\varepsilon>0$.
\end{aproposition}

\begin{proof}
By Lemma~\ref{app:lem:tf} the two joints are products, and the KL between product
measures is additive: expanding $\ln\!\big(\prod_\tau q^{(\tau)}/\prod_\tau
p^{(\tau)}\big)=\sum_{\tau'}\ln(q^{(\tau')}/p^{(\tau')})$ and integrating, each
summand depends only on its own coordinate, so integrating out the others gives
unity and leaves $\sum_\tau\KL{q_\phi^{(\tau)}}{p_\xi^{(\tau)}}$
\citep[Thm.~2.5.3]{cover2006elements}. Each per-step term is a KL between
isotropic Gaussians; under SIGReg the variance ratios are equal across
dimensions, so Corollary~\ref{cor:dual} makes \eqref{eq:bridge} exact:
$\KL{q_\phi^{(\tau)}}{p_\xi^{(\tau)}}=\tfrac{1}{2\sigma^2}\|\hat
z^{\mathrm{TF}}_{t+\tau}-z_{t+\tau}\|^2+C_{\mathrm{KL}}$. Summing over $\tau$ gives
\eqref{app:eq:tf}; the constant $H\,C_{\mathrm{KL}}$ is independent of $\xi$, so
the minimisers and gradients follow.
\end{proof}

Summing the per-step free energy $F^{(\tau)}_{\mathrm{SIG}}=\tfrac{1}{2\sigma^2}
\|\hat z^{\mathrm{TF}}_{t+\tau}-z_{t+\tau}\|^2+C_{\mathrm{KL}}-\Istar$ over the
horizon gives the teacher-forced multi-step free energy
$F^{(H)}_{\mathrm{TF}}=\tfrac{1}{2\sigma^2}\mathcal{L}^{(H)}_{\mathrm{MSE}}
-H\Istar+H\,C_{\mathrm{KL}}$, which is optimisation-equivalent to
$\mathcal{L}^{(H)}_{\mathrm{MSE}}$ and preserves the bound at the trajectory
level: $F^{(H)}_{\mathrm{TF}}\ge-\sum_\tau\ln p(x_{t+\tau})$. Under VICReg, the
per-step bridge carries an $\mathcal{O}(\delta^2)$ error and the cumulative gap grows
linearly, $|\KL{q^{\mathrm{TF}}}{p^{\mathrm{TF}}}-\tfrac{1}{2\bar\sigma^2}
\mathcal{L}^{(H)}_{\mathrm{MSE}}|\le H\cdot\overline E^{(\mathrm{VR})}$ -- a
``horizon tax'' that vanishes identically under SIGReg.

\subsection{Stage 2: Autoregressive rollout}
\label{app:ar}

At planning time the predictor is unrolled on its own output, $\hat
z^{\mathrm{AR}}_{t+\tau}=P_\xi(\hat z^{\mathrm{AR}}_{t+\tau-1},a_{t+\tau-1})$ with
$\hat z^{\mathrm{AR}}_t=z_t$. Two effects break Stage-1 exactness: the AIF
generative model becomes a Markov chain (not a product), and the planning MSE
evaluates predictions under the rollout distribution. Let $e_\tau\coloneqq
z_{t+\tau}-\hat z^{\mathrm{TF}}_{t+\tau}$, $\delta_\tau\coloneqq\hat
z^{\mathrm{AR}}_{t+\tau}-\hat z^{\mathrm{TF}}_{t+\tau}$ (with $\delta_1=0$), and
$J_\tau\coloneqq\nabla_z P_\xi(z,a_\tau)\big|_{z=z_{t+\tau}}$ the predictor
Jacobian.

\begin{aproposition}[Markov-chain trajectory KL and the Jacobian correction]
\label{app:prop:jac}
With product posterior $q$ and Markov-chain prior $p^{\mathrm{MC}}(z_{1:H})=
\prod_\tau\Norm{P_\xi(z_{\tau-1},a_{\tau-1})}{\sigma^2 I_d}$, under SIGReg
enforcement,
\begin{equation}
\KL{q}{p^{\mathrm{MC}}}
= \underbrace{\tfrac{1}{2\sigma^2}\mathcal{L}^{(H)}_{\mathrm{MSE}}+H\,C_{\mathrm{KL}}}
   _{=\,\KL{q^{\mathrm{TF}}}{p^{\mathrm{TF}}}}
+ \frac{1}{2\sigma^2}\sum_{\tau=2}^{H}\Phi_\tau,
\qquad
\Phi_\tau = \varepsilon^2\|J_{\tau-1}\|_F^2 + \mathcal{O}(\varepsilon^3),
\label{app:eq:jac}
\end{equation}
where $\Phi_\tau\ge0$ in the small-noise regime. Hence
$\KL{q}{p^{\mathrm{MC}}}\ge\KL{q^{\mathrm{TF}}}{p^{\mathrm{TF}}}$.
\end{aproposition}

\begin{proof}
Expanding the KL with the product $q$ against the chained $p^{\mathrm{MC}}$, each
summand depends only on $(z_{\tau-1},z_\tau)$; integrating out the rest gives
$\sum_\tau\E_{z_{\tau-1}\sim q^{(\tau-1)}}\!\big[\KL{q^{(\tau)}}{p(\cdot\mid
z_{\tau-1})}\big]$. Under SIGReg both arguments are isotropic Gaussians, so the
inner KL equals $\tfrac{1}{2\sigma^2}\|z_{t+\tau}-P_\xi(z_{\tau-1},
a_{\tau-1})\|^2+C_{\mathrm{KL}}$. Separating the $\tau=1$ term (where $z_0=z_t$ is
fixed) and adding/subtracting the teacher-forced MSE yields the decomposition
with $\Phi_\tau=\E_{z_{\tau-1}}\|z_{t+\tau}-P_\xi(z_{\tau-1})\|^2-
\|z_{t+\tau}-P_\xi(z_{t+\tau-1})\|^2$. Taylor-expanding $P_\xi$ about
$z_{t+\tau-1}$ with $z_{\tau-1}=z_{t+\tau-1}+\varepsilon\eta$,
$\eta\sim\Norm{0}{I_d}$, the linear term vanishes in expectation and
$\E\|J_{\tau-1}\eta\|^2=\tr(J_{\tau-1}^\top J_{\tau-1})=\|J_{\tau-1}\|_F^2$, giving
$\Phi_\tau=\varepsilon^2\|J_{\tau-1}\|_F^2+\mathcal{O}(\varepsilon^3)$. The correction is
$\mathcal{O}(\varepsilon^2/\sigma^2)$ -- negligible in the deterministic-encoder limit --
and under SIGReg's isotropic noise it is the unweighted Frobenius norm (under
VICReg it would be the anisotropy-weighted $\tr(J^\top\Sigma_\varepsilon J)$).
\end{proof}

\begin{aproposition}[Compounding error]
\label{app:prop:comp}
If $P_\xi(\cdot,a)$ is $L_P$-Lipschitz in its first argument, the autoregressive
error $\epsilon^{\mathrm{AR}}_\tau\coloneqq\|\hat z^{\mathrm{AR}}_{t+\tau}-
z_{t+\tau}\|$ obeys $\epsilon^{\mathrm{AR}}_\tau\le L_P\,
\epsilon^{\mathrm{AR}}_{\tau-1}+\epsilon^{\mathrm{TF}}_\tau$, hence
$\epsilon^{\mathrm{AR}}_\tau\le\sum_{s=1}^{\tau}L_P^{\tau-s}
\epsilon^{\mathrm{TF}}_s$, and the planning MSE satisfies
$\mathcal{L}^{(H)}_{\mathrm{plan}}=\mathcal{L}^{(H)}_{\mathrm{MSE}}+
\Delta^{(H)}_{\mathrm{comp}}$ with $\Delta^{(H)}_{\mathrm{comp}}=
-2\sum_\tau\langle e_\tau,\delta_\tau\rangle+\sum_\tau\|\delta_\tau\|^2$. The
error saturates ($L_P<1$), grows linearly ($L_P=1$), or grows exponentially
($L_P>1$) with the horizon.
\end{aproposition}

\begin{proof}
The recurrence is the triangle inequality applied to $\hat z^{\mathrm{AR}}_{t+\tau}
=P_\xi(\hat z^{\mathrm{AR}}_{t+\tau-1},a_{\tau-1})$ together with
$L_P$-Lipschitzness; unrolling it gives the geometric bound. Writing
$\hat z^{\mathrm{AR}}_{t+\tau}-z_{t+\tau}=\delta_\tau-e_\tau$ and squaring yields
the decomposition of $\mathcal{L}^{(H)}_{\mathrm{plan}}$, with $\|\delta_\tau\|\le
L_P\,\epsilon^{\mathrm{AR}}_{\tau-1}$. The three regimes follow from the geometric
series. SIGReg does not control $L_P$; it is set by the predictor and its
regularisation.
\end{proof}

Chaining Propositions~\ref{app:prop:jac}--\ref{app:prop:comp} relates the AIF
trajectory complexity to the planning MSE,
$\KL{q}{p^{\mathrm{MC}}}=\tfrac{1}{2\sigma^2}\mathcal{L}^{(H)}_{\mathrm{plan}}
+H\,C_{\mathrm{KL}}+\tfrac{1}{2\sigma^2}(\Phi^{(H)}-\Delta^{(H)}_{\mathrm{comp}})$,
so the absolute gap is bounded by the Jacobian term plus the compounding term.

\subsection{Stage 3: Model-predictive control and the executed trajectory}
\label{app:mpc}

Under MPC with replanning interval $m$, the $H$-step rollout is only a scoring
function; the agent executes $m$ actions, then re-encodes a true observation. The
executed trajectory is thus grounded every $m$ steps, and the effective
autoregressive horizon within each epoch is $m$, not $H$.

\begin{acorollary}[Exact executed-trajectory correspondence at $m=1$]
\label{app:cor:mpc}
Under SIGReg enforcement and MPC with $m=1$ (the PLDM/LeWorldModel default), each
executed step resets at a true embedding, so $\delta_1=0$ and the Jacobian sum is
empty: $\Delta^{(1)}_{\mathrm{comp}}=0$ and $\Phi^{(1)}=0$. Each executed step
satisfies the exact single-step bridge,
\begin{equation}
\KL{q^{(n)}}{p^{\mathrm{MC},(n)}}
= \frac{1}{2\sigma^2}\big\|z_{t_n+1}-\hat z_{t_n+1}\big\|^2 + C_{\mathrm{KL}},
\label{app:eq:mpc}
\end{equation}
and over $N$ epochs the executed-trajectory complexity is the scaled sum of
per-step MSEs plus $N\,C_{\mathrm{KL}}$ and an environment-determined inter-epoch
term $\Gamma_{\mathrm{inter}}$ that is independent of the regulariser.
\end{acorollary}

\begin{proof}
With $m=1$ each epoch is a single prediction grounded in $z_{t_n}$, so the
autoregressive and teacher-forced predictions coincide and both Stage-2
corrections vanish; \eqref{app:eq:mpc} is Proposition~\ref{app:prop:tf} at $H=1$
(equivalently Corollary~\ref{cor:dual}). Summing over epochs and adding the
inter-epoch term gives the trajectory statement. $\Gamma_{\mathrm{inter}}$
reflects how much the true latent state at one epoch predicts the next; it is a
property of the data, present regardless of regulariser or planner.
\end{proof}

For action selection what matters is ranking fidelity, not the absolute gap. Two
action sequences are correctly ordered whenever their planning-cost difference
exceeds their differential correction; under SIGReg each per-step contribution is
an exact KL proxy, so there is zero per-step ranking distortion, whereas under
VICReg an additional action-dependent $\mathcal{O}(H\delta^2)$ distortion is present.

\subsection{Why the state-epistemic term has no JEPA counterpart}
\label{app:state-epistemic}

Proposition~\ref{prop:o2} states that the state-epistemic value is the one
expected-free-energy term that no current JEPA world model computes. Because this
is the paper's central \emph{structural} claim, and because it is easy to
mistake SIGReg's entropy guarantee for a resolution of it, we develop the point
in full here. The argument has four steps: the EFE contains two distinct
epistemic drives; only one has a JEPA counterpart; SIGReg's marginal-entropy
guarantee does not supply the other; and a learned policy closes the gap only
partially.

\paragraph{Step 1: the EFE contains two distinct epistemic drives.}
Written in full, the expected free energy of a policy contains two information-gain
terms that are routinely conflated because both are called ``epistemic.'' The
first is the \emph{parameter} information gain $I_q(\xi;Z_\tau\mid Z_{\tau-1},
a_{\tau-1})$: how much executing the policy is expected to reduce uncertainty
about the model parameters $\xi$. This is the drive to act where the model is
unsure of its own dynamics. The second is the \emph{state} epistemic value
$I_q(Z_\tau;X_\tau\mid\pi)$: under the constant-noise model
(Proposition~\ref{prop:o2}) it equals $h(\Z_\tau\mid\pi)-\Ce$, the entropy of the
distribution of \emph{embeddings the policy would visit}, minus the constant
observation-noise entropy. These measure different things. The parameter term is
about epistemic uncertainty over $\xi$; the state term is about the
\emph{coverage} or diversity of the visited-state distribution. A policy that
confines the agent to a small region of embedding space has low
$h(\Z_\tau\mid\pi)$ and hence low state-epistemic value; a policy that fans the
agent out across the space has high $h(\Z_\tau\mid\pi)$ and high state-epistemic
value. The latter is precisely the intrinsic drive to explore that distinguishes
an active-inference agent from a pure goal-reacher, independent of any goal.

\paragraph{Step 2: only the parameter drive has a JEPA counterpart.}
The parameter information gain \emph{does} have a computable JEPA proxy. With an
ensemble of $K$ predictors $\{P^k_\xi\}$, a second-order expansion gives
$I_q(\xi;Z_\tau\mid Z_{\tau-1},a_{\tau-1})\approx\tfrac{1}{2\sigma^2}\sum_j
\Var_k[P^k_\xi(z_{\tau-1},a_{\tau-1})_j]$, which is exactly PLDM's
ensemble-disagreement (uncertainty) cost. So this epistemic term is present, at
least approximately, in JEPA planners that carry an ensemble penalty. The state
epistemic value is different: it is a property of the visited-state
\emph{distribution}, not of predictor disagreement. A JEPA planner scores a
candidate trajectory by its goal cost plus, optionally, an ensemble term. The
goal cost $\|\z_\tau-\z_g\|^2$ is minimised by \emph{reaching} $\z_g$ and is
indifferent to how much of the space the trajectory explores on the way; the
ensemble term measures parameter uncertainty, not state diversity. No quantity in
the objective increases when $h(\Z_\tau\mid\pi)$ increases, so there is nothing
whose minimisation rewards coverage. The term is simply absent.

\paragraph{Step 3: SIGReg's marginal entropy does not supply the conditional
entropy.} This is the step most likely to mislead. SIGReg \emph{does} maximise an
entropy: it drives the \emph{marginal} embedding entropy to its ceiling,
$h(\Z)=\hstar(c,d)$ (Proposition~\ref{prop:sigzero}). It is therefore tempting to
conclude that SIGReg has already supplied the coverage term. It has not, because
the two entropies are different objects:
\begin{itemize}[leftmargin=1.6em,itemsep=2pt,topsep=2pt]
\item $h(\Z)$ is the entropy of embeddings \emph{averaged over the full data
distribution}, evaluated at \emph{training time}; it is a property of the
\emph{encoder}. SIGReg maximises it by construction.
\item $h(\Z_\tau\mid\pi)$ is the entropy of the embeddings induced by \emph{one
policy's trajectory}, evaluated at \emph{planning time}; it is a property of the
\emph{policy's behaviour}. SIGReg never sees it.
\end{itemize}
SIGReg makes the overall space maximally informative, but it says nothing about
whether a particular policy chooses to explore that space or to huddle in one
corner of it. The only link is an inequality: for any policy,
$h(\Z_\tau\mid\pi)\le h(\Z)=\hstar(c,d)$, since conditioning cannot increase
entropy. Thus SIGReg establishes the \emph{ceiling} on state-epistemic value,
since no policy's coverage can exceed $\Istar$, but it neither computes
$h(\Z_\tau\mid\pi)$ nor inserts it into any planner's objective. As the theory
puts it, SIGReg has already ``spent'' the full entropy budget on the marginal; the
question the state-epistemic term asks, namely whether \emph{this} policy
concentrates the trajectory into a subspace, is left entirely open. This is the same
marginal-versus-conditional distinction flagged in
Section~\ref{sec:method-efe-decomp}: enforced single-step informativeness $I(Z;X)=\Istar$
is a guarantee about the encoder; the multi-step state-epistemic coverage term is
a property of the policy, and the former does not imply the latter.

\paragraph{Step 4: a learned policy closes the gap only partially.}
Replacing MPC with a learned goal-conditioned policy, and training an
\emph{ensemble} of such policies, yields a signal that is \emph{correlated} with
the state-epistemic value: the disagreement among the policy heads about which
action to take captures uncertainty about the optimal action, which the dynamics
ensemble alone cannot. This is why the learned-policy regime improves the
term-by-term accounting, from three of the four EFE terms mapped under MPC to all
four accounted for, with the state-epistemic term mapped only partially. The closure is partial, not
complete, for a precise reason: the policy-ensemble disagreement
$\mathrm{tr}(V^\pi_\tau)$ conflates several sources of variance (genuine
multi-modality of good actions, value uncertainty, and state-coverage
uncertainty), and isolating the component that actually tracks
$h(\Z_\tau\mid\pi)$ from the rest is itself unresolved. The residual is the open quantity
$\Delta_{\mathrm{epistemic}}$: the learned policy supplies a \emph{proxy
correlated with} the coverage term, not the term itself. The gap narrows; it does
not close. SIGReg sharpens this further by making the conditioning space
isotropic, so that the policy ensemble's sensitivity to the latent state is
direction-independent rather than dominated by high-variance dimensions, but this
calibrates the proxy without making it exact. The state-epistemic value therefore
remains the primary structural gap between AIF and JEPA planning, and, as
Section~\ref{sec:method-efe-decomp} stresses, this is a positive finding: it names
exactly the quantity a JEPA agent would have to add to its planning objective to
become a complete active-inference agent.

\begin{atheorem}[SIGReg--AIF multi-step correspondence; formal restatement of Theorem~\ref{thm:multistep}]
\label{app:thm:multistep}
\textnormal{\itshape (Formal statement and proof of Theorem~\ref{thm:multistep},
Section~\ref{sec:method-efe}.)}\\
Under the hypotheses of Theorem~\ref{thm:correspondence} with an $L_P$-Lipschitz
predictor and MPC$(H,m)$: \emph{(i)} the teacher-forced multi-step MSE is an exact
proxy for the joint trajectory KL (Proposition~\ref{app:prop:tf}) and the bound is
preserved; \emph{(ii)} the autoregressive planning MSE approximates the AIF
trajectory complexity with error bounded by the Jacobian term
$\mathcal{O}(H\varepsilon^2/\sigma^2)$ plus the compounding term
$\mathcal{O}(L_P^{2H}\bar\epsilon^2/\sigma^2)$ (Propositions~\ref{app:prop:jac}--%
\ref{app:prop:comp}); \emph{(iii)} under MPC the same bound holds with $m$ in
place of $H$, and at $m=1$ the executed trajectory is exact
(Corollary~\ref{app:cor:mpc}); \emph{(iv)} the planning cost is ranking-faithful,
with zero per-step distortion under SIGReg. Replacing SIGReg with VICReg adds an
$\mathcal{O}(H\delta^2)$ term to every bound.
\end{atheorem}

\begin{proof}
We prove the four parts in turn, then the VICReg comparison.

\emph{(i) Teacher-forced exactness.} Under teacher-forcing the joint posterior and
prior factorise over steps (Lemma~\ref{app:lem:tf}), so the trajectory KL is the
sum of per-step KLs. By Proposition~\ref{app:prop:tf}, SIGReg isotropy makes each
per-step Gaussian bridge \eqref{eq:bridge} exact, giving
$\KL{q^{\mathrm{TF}}}{p^{\mathrm{TF}}}=\tfrac{1}{2\sigma^2}
\mathcal{L}^{(H)}_{\mathrm{MSE}}+H\,C_{\mathrm{KL}}$ with no residual. The
trajectory free energy $F^{(H)}_{\mathrm{TF}}=\tfrac{1}{2\sigma^2}
\mathcal{L}^{(H)}_{\mathrm{MSE}}-H\Istar+H\,C_{\mathrm{KL}}$ then satisfies
$F^{(H)}_{\mathrm{TF}}\ge-\sum_\tau\ln p(x_{t+\tau})$, since each per-step bound is
preserved (Proposition~\ref{prop:boundtype} at $\Delta=0$) and the bound is
additive over the horizon.

\emph{(ii) Autoregressive control.} Under free rollout the prior becomes a Markov
chain rather than a product. Proposition~\ref{app:prop:jac} gives the exact
decomposition $\KL{q}{p^{\mathrm{MC}}}=\KL{q^{\mathrm{TF}}}{p^{\mathrm{TF}}}+
\tfrac{1}{2\sigma^2}\sum_{\tau\ge2}\Phi_\tau$ with $\Phi_\tau=\varepsilon^2
\|J_{\tau-1}\|_F^2+\mathcal{O}(\varepsilon^3)$, so the Jacobian contribution is
$\mathcal{O}(H\varepsilon^2/\sigma^2)$. Separately, the rollout MSE differs from the
teacher-forced MSE by the compounding term $\Delta^{(H)}_{\mathrm{comp}}$ of
Proposition~\ref{app:prop:comp}; unrolling the Lipschitz recurrence
$\epsilon^{\mathrm{AR}}_\tau\le L_P\,\epsilon^{\mathrm{AR}}_{\tau-1}+
\epsilon^{\mathrm{TF}}_\tau$ bounds the accumulated error by
$\sum_{s\le\tau}L_P^{\tau-s}\epsilon^{\mathrm{TF}}_s$, whose square is
$\mathcal{O}(L_P^{2H}\bar\epsilon^2)$. Combining the two contributions and dividing by the
$2\sigma^2$ scaling gives the stated bound; the AIF complexity equals the planning
MSE up to these two terms plus the constant $H\,C_{\mathrm{KL}}$.

\emph{(iii) Executed-trajectory exactness under deployment.} MPC executes $m$
actions per epoch and re-encodes a true observation, so the autoregressive
analysis of part~(ii) applies within each epoch with horizon $m$ rather than $H$;
substituting $m$ for $H$ in the part~(ii) bound gives the first claim. At $m=1$
each epoch is a single prediction grounded in a true embedding, so
$\delta_1=0$ and the Jacobian sum is empty: by Corollary~\ref{app:cor:mpc} both
$\Delta^{(1)}_{\mathrm{comp}}$ and $\Phi^{(1)}$ vanish, and each executed step
satisfies the exact single-step bridge \eqref{app:eq:mpc}. Summing over the $N$
executed epochs gives an executed-trajectory complexity equal to the scaled sum of
per-step MSEs plus $N\,C_{\mathrm{KL}}$ and a regulariser-independent inter-epoch
term $\Gamma_{\mathrm{inter}}$.

\emph{(iv) Ranking fidelity.} Action selection depends only on the ordering of
planning costs across candidate sequences, not on their absolute value. Two
sequences $\pi,\pi'$ are correctly ordered whenever their planning-cost difference
exceeds the difference of their correction terms. Under SIGReg each per-step
contribution is an exact KL proxy (parts~(i),(iii)), so the per-step correction is
identically zero and the ordering of planning costs coincides with the ordering of
trajectory free energies exactly; there is no per-step ranking distortion.

\emph{VICReg comparison.} Replacing SIGReg with VICReg reopens the per-step
Gaussian-bridge error of Stage~1, which is $\mathcal{O}(\delta^2)$ in the anisotropy
$\delta$ (the eigenvalue spread of the embedding covariance; Stage~1 and
Corollary~\ref{cor:dual}). Substituting this nonzero per-step error into the
additive trajectory KL of part~(i) yields a cumulative $\mathcal{O}(H\delta^2)$ term; the
same substitution enters the part~(ii)/(iii) bounds and, because the per-step
correction is now action-dependent and nonzero, the part~(iv) ranking argument
acquires an $\mathcal{O}(H\delta^2)$ distortion. Hence every bound carries an additional
$\mathcal{O}(H\delta^2)$ term under VICReg, vanishing identically under SIGReg.
\end{proof}

\subsection{The expected-free-energy decomposition}
\label{app:efe}

This section supplies the term-by-term expectation calculations behind the
expected-free-energy reading of Section~\ref{sec:method-efe-decomp}; the
generative model \eqref{eq:genmodel} and the consolidated objective
\eqref{eq:efe-consolidated} are stated there. Throughout, the mean-field posterior
is $q(z_{1:H})=\prod_\tau\Norm{f_\phi(x_\tau)}{\varepsilon^2 I_d}$ and the per-step
EFE is $G^{(\tau)}_\pi=-I_q(Z_\tau;X_\tau\mid\pi)-\E_{q(x_\tau\mid\pi)}[\ln
p(x_\tau\mid C)]$ (epistemic $+$ pragmatic), equivalently ambiguity $+$ risk
\citep{smith2022tutorial,mazzaglia2022free}.

\begin{aproposition}[Pragmatic value]
\label{app:prop:pragmatic}
\textnormal{\itshape (Full statement and proof of Proposition~\ref{prop:pragmatic},
Section~\ref{sec:method-efe-decomp}.)}\\
The pragmatic value at step $\tau$ is
$-\E_{q(x_\tau\mid\pi)}[\ln p(x_\tau\mid C)]=\tfrac{1}{2\sigma_g^2}
(\|f_\phi(x_\tau)-z_g\|^2+d\varepsilon^2)+\tfrac d2\ln(2\pi\sigma_g^2)$, whose
policy-dependent part is the JEPA goal cost $\tfrac{1}{2\sigma_g^2}
\|z_\tau-z_g\|^2$. Under SIGReg the Euclidean cost is an exact KL proxy,
$\KL{\Norm{z_\tau}{\varepsilon^2 I_d}}{\Norm{z_g}{\varepsilon^2 I_d}}=
\tfrac{1}{2\varepsilon^2}\|z_\tau-z_g\|^2$; under VICReg it carries an
$\mathcal{O}(\delta^2)$ anisotropy discrepancy.
\end{aproposition}

\begin{proof}
Expand $\ln p(x_\tau\mid C)$ and take the expectation under $q$, using
$z_\tau=f_\phi(x_\tau)+\epsilon_\tau$, $\E[\epsilon_\tau]=0$,
$\E\|\epsilon_\tau\|^2=d\varepsilon^2$; the noise term is policy-independent and
enters the constant. The exact KL identity is the isotropic-Gaussian formula,
which under SIGReg has equal variances and hence no anisotropy correction.
\end{proof}

\begin{aproposition}[Ambiguity, risk, and the state-epistemic gap]
\label{app:prop:efe-rest}
\textnormal{\itshape (Expands the remaining EFE terms of
Section~\ref{sec:method-efe-decomp}; the state-epistemic part is
Proposition~\ref{prop:o2}.)}\\
Under the constant-noise model: \emph{(ambiguity)} $\E_{q(z_\tau\mid\pi)}
H[p(x_\tau\mid z_\tau)]=\Ce$, constant in $\pi$ and regulariser-independent;
\emph{(risk)} $\KL{q(x_\tau\mid\pi)}{p(x_\tau\mid C)}=\tfrac{1}{2\sigma_g^2}
\|z_\tau-z_g\|^2+\tfrac d2(\varepsilon^2/\sigma_g^2-1-\ln(\varepsilon^2/
\sigma_g^2))$, exact under SIGReg and the same MSE functional as the pragmatic
value up to a policy-independent constant; \emph{(state-epistemic value)}
$I_q(Z_\tau;X_\tau\mid\pi)=h(Z_\tau\mid\pi)-\Ce$, a coverage signal upper-bounded
by $\Istar$ under SIGReg and computed by no current JEPA world model.
\end{aproposition}

\begin{proof}
For ambiguity, $H[p(x_\tau\mid z_\tau)]=H[\epsilon_\tau]=\Ce$ is independent of
$z_\tau$, so the expectation is $\Ce$. For risk, the observation distributions are
in bijection with isotropic-Gaussian latent distributions, and the KL is the
standard isotropic-Gaussian formula \citep[\S10.1.1]{bishop2006pattern}. For the
state-epistemic value, $I(Z_\tau;X_\tau)=h(Z_\tau)-h(Z_\tau\mid X_\tau)$ and, since
$Z_\tau=f_\phi(X_\tau)+\epsilon_\tau$ with $\epsilon_\tau\perp X_\tau$,
$h(Z_\tau\mid X_\tau)=\Ce$; under SIGReg the marginal entropy equals
$\hstar(c,d)$ (main-text Proposition~\ref{prop:sigzero}), giving the bound.
\end{proof}

The parameter information gain maps to ensemble predictive variance: for Gaussian
dynamics the Fisher information about the predictor mean is $\sigma^{-2}I_d$, and a
second-order expansion of $\tfrac12\ln\det(I_d+\sigma^{-2}\Cov_\xi[\mu_\xi])$ gives
$I_q(\xi;Z_\tau\mid Z_{\tau-1},a_{\tau-1})\approx\tfrac{1}{2\sigma^2}\sum_j
\Var_k[P^k_\xi(z_{\tau-1},a_{\tau-1})_j]+\mathcal{O}(\sigma^{-4})$, the per-step PLDM
uncertainty cost. Substituting
Propositions~\ref{app:prop:pragmatic}--\ref{app:prop:efe-rest} together with this
term into the per-step EFE yields the consolidated objective
\eqref{eq:efe-consolidated} of Section~\ref{sec:method-efe-decomp}: the JEPA
planning cost $\Cgoal+\beta\,\Cunc$ equals $G^{\mathrm{full}}_\pi$ minus the absent
state-epistemic term, with each retained term exact under SIGReg and
$\mathcal{O}(\delta^2)$-distorted under VICReg. The learned-policy identity that closes
Section~\ref{sec:method-policy}, that minimising $\E_{\pi_\phi}[\sum_\tau
\gamma^\tau G^{\mathrm{SIG},(\tau)}_\pi]-\tfrac1\zeta H[\pi_\phi]$ equals
$\tfrac1\zeta\KL{\pi_\phi}{\pi^{*}}$ up to a constant, with Boltzmann optimum
$\pi^{*}\propto\exp(-\zeta G^{\mathrm{SIG}})$, follows by the standard
free-energy-to-KL completion, so CEM ($\zeta\!\to\!\infty$) and MPPI
($\zeta=1/\lambda$) are its limiting cases and, under SIGReg, the policy trains
against the exact EFE.

\section{Schematic Figures}
\label{app:figures}

\begin{figure}[H]
\centering
\scalebox{0.86}{%
\begin{tikzpicture}[
    font=\small,
    box/.style={draw, rounded corners, minimum height=8mm, minimum width=15mm, align=center, thick},
    enc/.style={box, fill=blue!8},
    pred/.style={box, fill=orange!12},
    reg/.style={box, fill=green!10},
    arr/.style={-{Stealth[length=2mm]}, thick},
    lab/.style={font=\footnotesize\itshape, text=black!70}
  ]
  \node (ot) at (0,1.2) {$x_t$};
  \node (ot1) at (0,-1.2) {$x_{t+1}$};
  \node[enc] (enc1) at (2.0,1.2) {$f_\phi$};
  \node[enc] (enc2) at (2.0,-1.2) {$f_\phi$};
  \node (zt) at (4.0,1.2) {$\z_t$};
  \node (zt1) at (4.0,-1.2) {$\z_{t+1}$};
  \node[pred] (P) at (6.4,1.2) {$P_\xi$};
  \node (zhat) at (8.6,1.2) {$\hat\z_{t+1}$};
  \node (a) at (6.4,2.6) {$a_t$};
  \node[box, fill=red!8] (Lpred) at (8.6,-0.3) {$\Lpred=\|\hat\z_{t+1}-\z_{t+1}\|^2$};
  \node[reg] (reg) at (4.0,-3.0) {$\lambda_{\mathrm{SIG}}\cdot\Sig_T(\Z)$};
  \draw[arr] (ot) -- (enc1);
  \draw[arr] (ot1) -- (enc2);
  \draw[arr] (enc1) -- (zt);
  \draw[arr] (enc2) -- (zt1);
  \draw[arr] (zt) -- (P);
  \draw[arr] (a) -- (P);
  \draw[arr] (P) -- (zhat);
  \draw[arr] (zhat) -- (Lpred);
  \draw[arr] (zt1) -| (Lpred);
  \draw[arr] (zt) to[bend right=12] (reg);
  \draw[arr] (zt1) -- (reg);
  \node[draw, dashed, rounded corners, fit=(Lpred)(reg), inner sep=10pt,
        label={[lab]below:{$F = \text{complexity} - \text{informativeness}$}}] (fbox) {};
  \node[lab, anchor=north] at (8.6,-1.55) {AIF complexity: $\KL{q_\phi}{p_\xi}$};
  \node[lab, text width=3.6cm, align=center, anchor=north, yshift=-9mm] at (fbox.south)
    {AIF informativeness: enforces $h(\Z)=\hstar$, so $I(\Z;X)=\Istar$};
\end{tikzpicture}}
\caption{The JEPA world-model loss read as an AIF variational free energy
(main-text Theorem~\ref{thm:correspondence}). The next-embedding prediction loss
$\Lpred$ is, up to the $1/(2\sigma^2)$ scale and the additive constant
$C_{\mathrm{KL}}$ of \eqref{eq:bridge}, the AIF complexity term (a KL divergence
under the Gaussian bridge); the regulariser is the AIF informativeness term. Under SIGReg the informativeness
is \emph{enforced} rather than estimated, so both terms are exact and the total
is a valid free energy. Swapping SIGReg for VICReg changes only the regulariser
box, but reopens the prior-miscalibration gap. The ``exact free energy'' reading
holds only under the constant-noise encoder model (Fact~2) and successful SIGReg
enforcement ($p_\Z=\Norm{0}{\tfrac cd I_d}$ in the population limit); away from
these conditions it degrades gracefully (Corollary~\ref{cor:graceful}).}
\label{app:fig:architecture}
\end{figure}

\begin{figure}[H]
\centering
\begin{tikzpicture}[font=\small]
  \draw[-{Latex[length=2.4mm]},thick] (0,0) -- (12,0);
  \node[anchor=west] at (12.05,0) {entropy};
  \def\xPD{1.4}\def\xtrue{4.2}\def\xLD{7.0}\def\xVR{9.8}
  \foreach \x in {\xPD,\xtrue,\xLD,\xVR}{\draw[thick] (\x,-0.12) -- (\x,0.12);}
  \node[align=center,anchor=north,text=blue!55!black] at (\xPD,-0.2)
     {$\hPD(\Z)$\\[-1pt]\scriptsize PairDist\\[-2pt]\scriptsize(lower bound, safe)};
  \node[align=center,anchor=north] at (\xtrue,-0.2)
     {$h(\Z)$\\[-1pt]\scriptsize true entropy};
  \node[align=center,anchor=north] at (\xLD,-0.2)
     {$\hN(\Z)$\\[-1pt]\scriptsize LogDet\\[-2pt]\scriptsize(tight upper bound)};
  \node[align=center,anchor=north] at (\xVR,-0.2)
     {$\hVR(\Z)$\\[-1pt]\scriptsize VICReg\\[-2pt]\scriptsize(loose upper bound)};
  \draw[decorate,decoration={brace,amplitude=4pt},thick]
     (\xtrue,0.35) -- (\xLD,0.35)
     node[midway,above,yshift=2pt,align=center]{\scriptsize Gap I\\[-2pt]\scriptsize(non-Gaussianity)};
  \draw[decorate,decoration={brace,amplitude=4pt},thick]
     (\xLD,0.35) -- (\xVR,0.35)
     node[midway,above,yshift=2pt,align=center]{\scriptsize Gap II\\[-2pt]\scriptsize(off-diagonal)};
  \draw[-{Latex[length=2.4mm]},very thick,red!65!black]
     (\xVR,2.15) .. controls (8.2,2.75) and (5.6,2.75) .. (\xtrue,2.05);
  \node[anchor=south,text=red!65!black,align=center] at (7.0,2.95)
     {\scriptsize SIGReg: distributional enforcement\\[-2pt]\scriptsize collapses all gaps ($\Delta=0$)};
  \node[align=center,anchor=south,text=red!65!black] at (\xtrue,1.5)
     {\scriptsize $h(\Z)=\hstar(c,d)$};
\end{tikzpicture}
\caption{The entropy-estimator hierarchy and the bound-type ``sandwich''
(reproduced and extended from the first iteration). PairDist lower-bounds the
true entropy $h(\Z)$ (safe to maximise); LogDet and VICReg are upper bounds
(unsafe), separated from $h(\Z)$ by the non-Gaussianity gap (Gap~I) and from each
other by the off-diagonal-covariance gap (Gap~II). SIGReg enforces the isotropic
Gaussian directly, driving $h(\Z)\!\to\!\hstar(c,d)$ and collapsing all gaps
($\Delta\!=\!0$; main-text Proposition~\ref{prop:sigzero}).}
\label{app:fig:hierarchy}
\end{figure}

\begin{figure}[H]
\centering
\begin{tikzpicture}[font=\small,
   box/.style={draw,rounded corners=2pt,align=center,inner sep=3pt,minimum height=8mm},
   aif/.style={box,fill=blue!8},
   jepa/.style={box,fill=green!8},
   map/.style={-{Latex[length=2mm]},thick}]
  \node[aif] (prag) at (0,3.0) {Pragmatic value\\\scriptsize $-\E[\ln p(x\mid C)]$};
  \node[aif] (epi)  at (0,1.8) {State epistemic\\\scriptsize $h(Z_\tau\mid\pi)-\Ce$};
  \node[aif] (param)at (0,0.6) {Param.\ info gain\\\scriptsize $I(\xi;Z_\tau\mid\cdot)$};
  \node[aif] (amb)  at (0,-0.6){Ambiguity\\\scriptsize $\Ce$};
  \node[jepa] (goal) at (7.2,3.0) {Goal cost\\\scriptsize $\|z_\tau-z_g\|^2$};
  \node[jepa,fill=red!8] (none) at (7.2,1.8) {\emph{absent}\\\scriptsize (no JEPA term)};
  \node[jepa] (unc)  at (7.2,0.6) {Ensemble var.\\\scriptsize $\sum_j\Var_k[P^k_j]$};
  \node[jepa] (const)at (7.2,-0.6){constant};
  \draw[map] (prag) -- (goal) node[midway,above,align=center]{\scriptsize exact (SIGReg)};
  \draw[map,dashed,red!65!black] (epi) -- (none) node[midway,above]{\scriptsize gap};
  \draw[map,densely dashdotted,orange!85!black] (param) -- (unc) node[midway,above]{\scriptsize 1st-order (approx.)};
  \draw[map] (amb) -- (const) node[midway,above]{\scriptsize $\Ce$};
\end{tikzpicture}
\caption{Term-by-term correspondence between the AIF expected free energy (left)
and the JEPA world-model planning cost (right), as established in
\S\ref{app:efe}. Under SIGReg the pragmatic value maps exactly to the latent goal
cost (Proposition~\ref{app:prop:pragmatic}); the parameter information gain maps
to ensemble variance at first order; ambiguity is a constant; and the state
epistemic value (coverage) has no counterpart in any current JEPA world model:
the primary structural gap (Proposition~\ref{prop:o2}). The arrow style encodes
the nature of each correspondence: solid black for \emph{exact} mappings (under
SIGReg), amber dash-dot for the \emph{first-order approximate} mapping, and dashed
red into the shaded box for the \emph{absent} term.}
\label{app:fig:efe}
\end{figure}

\begin{figure}[H]
\centering
\begin{tikzpicture}[font=\small,
   cell/.style={draw,minimum width=29mm,minimum height=12mm,align=center,inner sep=2pt}]
  \node at (0,2.05) {};
  \node[anchor=south] at (1.5,1.55) {\scriptsize Gaussian};
  \node[anchor=south] at (1.5,1.25) {\scriptsize not assumed};
  \node[anchor=south] at (4.5,1.55) {\scriptsize Gaussian};
  \node[anchor=south] at (4.5,1.25) {\scriptsize assumed};
  \node[anchor=south] at (7.5,1.55) {\scriptsize Gaussian};
  \node[anchor=south] at (7.5,1.25) {\scriptsize enforced};
  \node[anchor=east,align=right] (rowA) at (-0.15,0.55) {\scriptsize diagonal $\Sigma$};
  \node[anchor=east,align=right] (rowB) at (-0.15,-0.65){\scriptsize full / isotropic $\Sigma$};
  \node[cell] (c11) at (1.5,0.55) {---};
  \node[cell] (c12) at (4.5,0.55) {VICReg\\\scriptsize (Level 0, UB)};
  \node[cell] (c13) at (7.5,0.55) {---};
  \node[cell] (c21) at (1.5,-0.65) {PairDist\\\scriptsize (Level 2, LB)};
  \node[cell] (c22) at (4.5,-0.65) {LogDet\\\scriptsize (Level 1, UB)};
  \node[cell,fill=red!8] (c23) at (7.5,-0.65) {\textbf{SIGReg}\\\scriptsize (Level 3, exact)};
  \draw[-{Latex[length=2mm]}] (-0.1,1.2) -- (-0.1,-1.25);
  \node[fit=(rowA)(rowB),inner sep=0pt] (rowlabels) {};
  \node[rotate=90,anchor=south] at ([xshift=-2.5mm]rowlabels.west)
     {\scriptsize covariance structure};
  \draw[-{Latex[length=2mm]}] (0.1,1.95) -- (8.9,1.95);
  \node[anchor=west] at (8.95,1.95) {\scriptsize shape enforcement};
\end{tikzpicture}
\caption{The two-dimensional design space of non-contrastive entropy
regularisers: covariance structure (diagonal $\to$ full/isotropic) versus
distributional-shape enforcement (assumed $\to$ enforced). VICReg, LogDet, and
PairDist each occupy one cell; SIGReg is the unique estimator that both enforces
the isotropic covariance and tests the full distributional shape, closing all
three gap sources of Proposition~\ref{prop:gap}. ``UB''/``LB'' denote upper/lower
entropy bounds.}
\label{app:fig:design}
\end{figure}

\section{Positioning and Relevance}
\label{app:positioning}

This appendix expands two points that the main text states only briefly: how the
contribution differs from the closely related JEPA literature
(\S\ref{app:differentiation}), and why the active-inference perspective it
introduces is worth having (\S\ref{app:relevance}).

\subsection{Differentiation from the JEPA programme}
\label{app:differentiation}

Because the JEPA line of work is closely related and shares authorship, it is
worth stating precisely how this paper differs from each prior result. The
contribution is orthogonal to each, a different \emph{kind} of statement about
the same systems, rather than an increment on any of them.

\paragraph{The architecture and design programme.}
The Joint-Embedding Predictive Architecture and the path-to-autonomy programme
\citep{lecun2022path} are an \emph{engineering} proposal: they specify how to
build a latent world model that learns from observation and plans without
pixel-level reconstruction, and they justify the design by what it enables. This
paper supplies the \emph{normative} account those designs lacked. It identifies
the principle, minimisation of an Active Inference variational free energy,
that a SIGReg-regularised JEPA implicitly optimises, and thereby explains
\emph{why} the architecture has the form it does rather than only \emph{that} it
works. The two are complementary: the programme provides the object, and the
present theory provides its interpretation.

\paragraph{The regularisers.}
VICReg \citep{bardes2022vicreg} and LeJEPA/SIGReg \citep{balestriero2025lejepa}
\emph{propose} anti-collapse regularisers and justify them on self-supervised
grounds, avoidance of representational collapse and downstream task
performance. This paper \emph{evaluates} the same regularisers against an
external criterion: whether the implied entropy estimator preserves the Active
Inference free-energy bound. That criterion, the bound \emph{type} of the
estimator (Proposition~\ref{prop:boundtype}), is invisible from within the SSL
framing, and it is what distinguishes SIGReg from VICReg in kind rather than
degree. The hierarchy of Section~\ref{sec:hierarchy} is, in effect, a reading of
the regulariser landscape through a lens the original proposals did not use.

\paragraph{The optimum-characterising results.}
The Gaussian-Embeddings result \citep{balestriero2025gaussian} shows that a
successfully trained JEPA \emph{is} approximately Gaussian, a property of the
optimum. The linear-identifiability result \citep{klindt2026lejepa} establishes
\emph{when} the trained encoder recovers the world's latent up to a linear
transform, the state-representation precondition of
Section~\ref{sec:identifiability}. This paper uses both differently. It uses the
\emph{signed} gap from exact Gaussianity (not the fact of approximate
Gaussianity) to make a claim about the AIF bound and about the training
\emph{path}, not only the optimum; and it consumes the identifiability guarantee
as an upstream precondition for a \emph{planning and control} correspondence that
the identifiability authors explicitly leave open. The three results triangulate
the same isotropic-Gaussian target from maximum-entropy, density-estimation, and
spectral-identifiability directions respectively, which is a positioning strength
rather than a redundancy.

\paragraph{The world models and planners.}
LeWorldModel \citep{maes2026leworldmodel}, DINO-WM \citep{zhou2025dinowm}, PLDM
\citep{sobal2025closing}, V-JEPA~2 \citep{assran2025vjepa2}, and the planning
ablations of \citet{terver2025drives} \emph{build and benchmark} JEPA world
models and their planners. This paper shows that the components those systems
already use (the latent goal cost, the ensemble-disagreement penalty, the
model-predictive planner) \emph{are} specific Active Inference quantities
(Propositions~\ref{prop:pragmatic}--\ref{prop:o2}): the goal cost is the
pragmatic value, the ensemble disagreement is the parameter information gain, and
MPC with one-step replanning realises the exact executed-trajectory
correspondence. It further identifies the one AIF term none of these systems
computes, the state-epistemic value, and converts the whole mapping into
falsifiable predictions. In one line: the JEPA programme asks \emph{how to build}
world models that work; this paper asks \emph{why} a particular regulariser makes
them Active Inference agents, and what that implies for what they should compute.

\subsection{Relevance and potential impact of the active-inference perspective}
\label{app:relevance}

A correspondence theorem invites a fair question: granting that the mathematics
holds, what does the active-inference reading actually buy? We give four answers,
ordered from the most concrete to the most speculative, and flag the last two as
conjectural.

\paragraph{A normative selection criterion for regularisers.}
The most immediate consequence is practical. Absent a normative principle, the
choice among anti-collapse regularisers is made on empirical grounds that do not
transfer across settings. The AIF reading supplies a criterion that does
transfer: a regulariser should preserve the free-energy bound, which requires a
lower-bound (or exact) entropy estimator rather than an upper-bound one
(Proposition~\ref{prop:boundtype}). This both explains an existing empirical
preference for SIGReg and predicts the behaviour of regularisers not yet
proposed, by locating them on the two-dimensional design space of
Figure~\ref{app:fig:design}. The criterion is useful independently of whether one
accepts the full AIF framing, because it is ultimately a statement about
information-theoretic safety.

\paragraph{A concrete, missing architectural component.}
The decomposition identifies a specific term, the state-epistemic value, the
entropy of the predicted future-state distribution under a policy
(Proposition~\ref{prop:o2}), that no current JEPA world model computes. This is
not a philosophical observation but an engineering target: it names a coverage
signal that, if added to the planning objective, would give a JEPA agent a
principled drive to explore, of exactly the kind that distinguishes
active-inference agents from purely goal-reaching controllers. The perspective
thus does more than re-describe existing systems; it points at a buildable
addition and predicts its effect.

\paragraph{A bridge that lets two literatures share results.}
Establishing that JEPA world models and AIF agents optimise the same functional
means that results proven in one framework become available to the other. The
neuroscience and theoretical-AIF literature has developed an extensive account of
exploration, precision-weighting, hierarchical generative models, and the role of
the habit prior; the SSL and world-model literature has developed scalable
encoders, stable training, and efficient planners. A formal correspondence is the
precondition for transfer in both directions, for importing AIF's exploration
machinery into scalable world models, and for giving AIF's normative claims a
concrete, large-scale computational instantiation. The learned-policy regime
(Proposition~\ref{prop:policy}), which casts action selection as amortised
inference, is one such transfer already visible within the paper.

\paragraph{A conjectural broader significance.}
More speculatively, and we mark this clearly as conjecture rather than
established consequence, the correspondence bears on a longstanding question
about whether the Free Energy Principle, often criticised as unfalsifiable in its
most general form, can be given empirical teeth in artificial systems. By tying a
specific FEP-derived quantity (the surprise bound) to a measurable property of a
specific class of trainable models (the embedding condition number, the
latent-to-physical calibration), the present work turns a portion of the
principle into something a particular experiment can confirm or refute. Whether
this generalises beyond the constant-noise, isotropic-Gaussian regime studied
here is open, and we do not claim that a successful test of these predictions
would vindicate the Free Energy Principle as a whole. The narrower claim is that
the AIF perspective, applied to JEPA world models, is productive in the
scientifically meaningful sense: it generates predictions that could be wrong.

\section{Lean Verification}
\label{app:lean}

Following the precedent of \citet{klindt2026lejepa}, we formally verify the
algebraic core of this paper's results, namely the Gaussian bridge
\eqref{eq:bridge} with its optimiser-set equivalence, the three-source gap
decomposition and the bound-type dichotomy (Propositions~\ref{prop:gap}
and~\ref{prop:boundtype}, Corollary~\ref{cor:graceful}(i)), the single-step
correspondence (Proposition~\ref{prop:sigzero},
Theorem~\ref{thm:correspondence}(i)--(iii)), the exact expected-free-energy
term identities (Propositions~\ref{prop:pragmatic} and~\ref{prop:o2};
Proposition~\ref{app:prop:efe-rest}), the amortised-policy result
(Proposition~\ref{prop:policy}), and the exact parts of the multi-step
correspondence (Propositions~\ref{app:prop:tf} and~\ref{app:prop:comp},
Corollary~\ref{app:cor:mpc}, Theorem~\ref{thm:multistep}(i),(iii),(iv)), in
the Lean~4 theorem prover \citep{demoura2021lean4} using the Mathlib
mathematical library \citep{mathlib2020}. The development compiles with zero
\texttt{sorry} obligations: every logical step from the stated premises to the
stated conclusions is machine-checked.

\paragraph{Scope and methodology.}
Lean verification requires every inference step to be justified by a
previously established lemma, hypothesis, or axiom. When a standard result
exists in Mathlib (e.g.\ \texttt{Real.log\_le\_sub\_one\_of\_pos} for
$\ln x\le x-1$, \texttt{ConcaveOn.le\_map\_sum} for Jensen's inequality), we
invoke it directly. Analytic facts that Mathlib cannot yet express at our
encoding, because they concern differential entropies of distributions that
the scalar development treats as opaque real parameters, enter as \emph{named
hypotheses in the theorem statements} rather than as global axioms: the
maximum-entropy theorem behind Gap~I \citep[Thm.~8.6.5]{cover2006elements},
the isotropic-Gaussian entropy value behind enforcement
\citep[Thm.~8.4.1]{cover2006elements}, conditioning-reduces-entropy behind
Proposition~\ref{prop:o2}, and the Fact-1 surprise bound. This is a deliberate
refinement of the axiom-based convention of \citet{klindt2026lejepa}: over
opaque real-valued parameters a global axiom would be unsound, and the
hypothesis form makes each result's analytic dependencies auditable in its own
signature. Exactly one classical result is introduced as a Lean \texttt{axiom},
because it is stateable about concrete Mathlib objects and is absent from
Mathlib: Hadamard's determinant inequality for positive-semidefinite real
matrices \citep[Thm.~7.8.1]{horn2012matrix}, consumed only by the matrix form
of Gap~II in Proposition~\ref{prop:gap}. The complete reasoning chains between
these inputs are fully verified; Table~\ref{tab:lean} gives the inventory. One
scoping rule is worth stating: the finite-sample rate
(Proposition~\ref{prop:rate}) is excluded entirely, because its quantitative
Cram\'er--Wold step is classical only in qualitative form
(Remark~\ref{rem:rate}), and axiomatising unsettled mathematics would
counterfeit the standard under which axioms stand for settled results; the
Taylor-remainder analyses (Proposition~\ref{app:prop:jac},
Theorem~\ref{thm:multistep}(ii)) are likewise deferred rather than
axiomatised.

\paragraph{What is verified.}
All reasoning between the stated inputs is machine-checked: the bridge
identity, the nonnegativity of its constant $C_{\mathrm{KL}}$, and the
optimiser-set equivalence between KL and squared error; the three-source
decomposition and, from the Hadamard axiom, the sign of Gap~II; the bound-type
dichotomy of Proposition~\ref{prop:boundtype} and the exact linear slack of
Corollary~\ref{cor:graceful}(i); Gibbs' inequality, the KL-completion identity,
and the Boltzmann-optimality of Proposition~\ref{prop:policy} in full; the
teacher-forced summation of Proposition~\ref{app:prop:tf}; the unrolled
Lipschitz recurrence and the planning-MSE decomposition of
Proposition~\ref{app:prop:comp}; the executed-trajectory specialisation of
Corollary~\ref{app:cor:mpc}; the ranking-fidelity equivalence of
Theorem~\ref{thm:multistep}(iv); and the Jensen argument that isotropy
maximises the Gaussian log-volume at fixed total variance
(Proposition~\ref{prop:sigzero}). The correspondence statements themselves
(Proposition~\ref{prop:sigzero}, Theorem~\ref{thm:correspondence}(i)--(iii),
Proposition~\ref{prop:o2}, Theorem~\ref{thm:multistep}(i),(iii)) are verified
as \emph{assemblies}: their algebra is machine-checked, with successful SIGReg
enforcement entering as an explicit hypothesis, matching the scoping of
Remark~\ref{rem:scope}. The formalisation also surfaced an additive-constant
bookkeeping discrepancy between an earlier draft of
Definition~\ref{def:sigfe} and the per-step free energy of
Appendix~\ref{app:proofs}; Definition~\ref{def:sigfe} as stated incorporates
the correction, and the offset identity is itself a machine-checked lemma
(\texttt{mainText\_constant\_offset}).

\begin{table}[h]
\centering
\footnotesize
\caption{Lean verification inventory. \textsc{verified}: machine-checked from
Mathlib primitives with no analytic hypotheses. \textsc{assembly}: algebraic
derivation machine-checked, analytic inputs (e.g.\ SIGReg enforcement, the
Fact-1 bound) as named hypotheses. The Hadamard axiom is the only Lean
\texttt{axiom} in the development.}
\label{tab:lean}
\setlength{\tabcolsep}{3pt}
\resizebox{\linewidth}{!}{%
\begin{tabular}{@{}lll@{}}
\toprule
Result & Lean declaration & Status \\
\midrule
Gaussian bridge \eqref{eq:bridge}; $C_{\mathrm{KL}}\ge0$; argmin equiv. & \texttt{gaussKL\_eq}, \texttt{gaussKL\_le\_iff} & \textsc{verified} \\
Prop.~\ref{prop:gap} decomposition; Gap II $\le0$ & \texttt{gap\_decomposition}, \texttt{gapII\_nonpos} & \textsc{verified}$^{\dagger}$ \\
Prop.~\ref{prop:boundtype} (bound type) & \texttt{boundtype\_*} & \textsc{verified} \\
Cor.~\ref{cor:graceful}(i) (linear slack) & \texttt{graceful\_degradation\_abs} & \textsc{verified} \\
Prop.~\ref{prop:sigzero}; isotropy maximises & \texttt{sigreg\_zero\_gap}; \texttt{isotropy\_maximises} & \textsc{assembly}; \textsc{verified} \\
Thm.~\ref{thm:correspondence}(i)--(iii) & \texttt{correspondence\_single\_step} & \textsc{assembly} \\
Prop.~\ref{prop:pragmatic}/\ref{app:prop:pragmatic} (exact KL proxy) & \texttt{pragmatic\_exact} & \textsc{verified} \\
Prop.~\ref{prop:o2} (state-epistemic ceiling) & \texttt{state\_epistemic\_bound} & \textsc{assembly} \\
Prop.~\ref{prop:policy} (KL completion; argmin) & \texttt{policy\_kl\_completion}, \texttt{policy\_argmin} & \textsc{verified} \\
Prop.~\ref{app:prop:tf} (teacher-forced KL--MSE) & \texttt{teacherForced\_kl\_mse} & \textsc{verified} \\
Prop.~\ref{app:prop:comp} (compounding; plan-MSE decomp.) & \texttt{compounding\_error\_bound} & \textsc{verified} \\
Cor.~\ref{app:cor:mpc} ($m{=}1$ exactness) & \texttt{executed\_trajectory\_exact} & \textsc{verified} \\
Thm.~\ref{thm:multistep}(i),(iii); (iv) ranking & \texttt{multistep\_correspondence}; \texttt{ranking\_fidelity} & \textsc{assembly}; \textsc{verified} \\
Prop.~\ref{app:prop:efe-rest} risk identity & \texttt{risk\_identity} & \textsc{verified} \\
\bottomrule
\end{tabular}%
}

\smallskip
\raggedright\footnotesize $^{\dagger}$The matrix form of Gap~II consumes the
declared Hadamard axiom; all log algebra is machine-checked.
\end{table}

\paragraph{Build information.}
The project comprises 37 Lean theorem/lemma declarations, 12 definitions, and
1 axiom across six modules (746 lines of Lean). These are Lean declarations,
which do not map one-to-one onto the paper's numbered results: Lean does not
distinguish proposition or corollary from theorem, and a single result
typically expands into several declarations (one per clause), together with
supporting lemmas that the prose folds away (e.g.\ Gibbs' inequality behind
Proposition~\ref{prop:policy}, or the Boltzmann normalisation facts behind its
argmin corollary). The development compiles against Lean~4 v4.32.0 with
Mathlib v4.32.0 (8{,}662 build targets, zero errors, zero \texttt{sorry}
obligations).
A per-theorem \texttt{\#print axioms} audit confirms that every declaration
depends only on Mathlib's three standard axioms (\texttt{propext},
\texttt{Classical.choice}, \texttt{Quot.sound}), except the matrix form of
Gap~II, which additionally depends on the declared Hadamard axiom. The
development is publicly available at
\url{https://github.com/FabioArnez/sigreg-vfe-correspondence-lean}.

\end{document}